\crefname{listing}{Example Code}{Example Code}  
\Crefname{listing}{Example Code}{Example Code}
\lstdefinestyle{mystyle}{
    commentstyle=\color{OliveGreen},
    keywordstyle=\color{BurntOrange},
    numberstyle=\tiny\color{black!60},
    stringstyle=\color{MidnightBlue},
    basicstyle=\ttfamily,
    breakatwhitespace=false,
    breaklines=true,
    captionpos=b,
    keepspaces=true,
    numbers=left,
    numbersep=5pt,
    showspaces=false,
    showstringspaces=false,
    showtabs=false,
    tabsize=2
}
\newtheorem{theorem}{Theorem}[section]
\newtheorem{proposition}{Proposition}[section]
\newtheorem{remark}{Remark}[section]
\newtheorem{definition}{Definition}[section]
\newtheorem{example}{Example}[section]
\DeclareMathOperator*{\argmin}{arg\,min}
\newcommand\dif{\mathop{}\!\mathrm{d}}
\newcommand{\const}{\textrm{const}}
\newcommand{\ordo}{\mathcal{O}}
\newcommand{\Prb}{\mathbb{P}} 
\newcommand{\Exp}{\mathbb{E}} 
\newcommand{\Var}{\textrm{Var}} 
\newcommand{\Kl}{\textrm{KL}} 
\newcommand{\symKL}{\textrm{D}_{\textrm{J}}} 
\newcommand{\grad}{\nabla} 
\newcommand{\hessian}{\nabla^2} 
\newcommand{\jacob}{J} 
\newcommand{\eqdef}{:=} 
\newcommand{\defeq}{=:} 
\newcommand{\iidsim}{\stackrel{\text{\iid}}{\sim}} 
\newcommand{\given}{\,|\,}
\newcommand{\idxby}{\,;\,}
\newcommand{\convD}{\stackrel{\textrm{d}}\longrightarrow}     
\newcommand{\convAS}{\stackrel{\textrm{a.s.}}\longrightarrow}     
\newcommand{\eqD}{\stackrel{\textrm{d}}=}
\newcommand{\fun}{h} 
\newcommand{\dirac}{\delta} 
\newcommand{\ident}{\mathbbm{1}}
\newcommand{\ess}{\textrm{ESS}} 
\newcommand{\xm}{\mathbf{x}} 
\newcommand{\ym}{\mathbf{y}} 
\newcommand{\nmod}{\gamma} 
\newcommand{\umod}{\widetilde{\gamma}} 
\newcommand{\tm}{t} 
\newcommand{\km}{k} 
\newcommand{\lm}{l} 
\newcommand{\mm}{m} 
\newcommand{\Tm}{T} 
\newcommand{\Km}{K} 
\newcommand{\Z}{Z} 
\newcommand{\mparams}{\theta} 
\newcommand{\y}{y} 
\newcommand{\dimY}{n_{\y}} 
\newcommand{\vn}{v} 
\newcommand{\en}{e} 
\newcommand{\zn}{z} 
\newcommand{\uv}{\mathrm{u}} 
\newcommand{\fmod}{f} 
\newcommand{\gmod}{g} 
\newcommand{\ufmod}{\widetilde{f}} 
\newcommand{\amod}{a} 
\newcommand{\cmod}{c} 
\newcommand{\model}{p} 
\newcommand{\tmp}{\tau} 
\newcommand{\ip}{i} 
\newcommand{\jp}{j} 
\newcommand{\Jp}{J} 
\newcommand{\Np}{N} 
\newcommand{\xp}{x} 
\newcommand{\dimX}{n_{\xp}}
\newcommand{\nwp}{w} 
\newcommand{\uwp}{\widetilde{w}} 
\newcommand{\ap}{a} 
\newcommand{\bp}{b} 
\newcommand{\prop}{q} 
\newcommand{\propwx}{Q} 
\newcommand{\uprop}{\widetilde{\eta}} 
\newcommand{\normprop}{\eta} 
\newcommand{\up}{\mathbf{u}} 
\newcommand{\Mp}{M} 
\newcommand{\nxp}{\bar\xp} 
\newcommand{\nuwp}{\widetilde{\nu}} 
\newcommand{\nprop}{r} 
\newcommand{\smod}{s} 
\newcommand{\rsdist}{\tau} 
\newcommand{\hatZ}{\widehat{Z}} 
\newcommand{\pparams}{\lambda} 
\newcommand{\pfun}{\eta} 
\newcommand{\varp}{\textrm{V}} 
\newcommand{\tpot}{\psi} 
\newcommand{\tparams}{\rho} 
\newcommand{\tpsi}{\bar{\psi}} 
\newcommand{\qSMC}{\textrm{Q}_{\text{SMC}}} 
\newcommand{\qCSMC}{\textrm{Q}_{\text{CSMC}}} 
\newcommand{\sigp}{\mathrm{\zn}} 
\newcommand{\sigw}{\mathrm{\omega}} 
\newcommand{\zmod}{\tilde \amod} 
\newcommand{\gadapt}{\mathrm{g}_{\Kl}} 
\newcommand{\eps}{\varepsilon} 
\newcommand{\gvsmc}{\mathrm{g}_{\mathrm{VSMC}}} 
\newcommand{\Uni}{\mathcal{U}}
\newcommand{\Norm}{\mathcal{N}}
\newcommand{\Poisson}{\textrm{Poisson}}
\newacronym{PDF}{pdf}{probability distribution function}
\newacronym{CDF}{cdf}{cumulative distribution function}
\newacronym{LVM}{lvm}{latent variable model}
\newacronym{PGM}{pgm}{probabilistic graphical model}
\newacronym{SSM}{ssm}{state space model}
\newacronym{RNN}{rnn}{recurrent neural network}
\newacronym{SDE}{sde}{stochastic differential equation}
\newacronym{UKF}{ukf}{unscented Kalman filter}
\newacronym{EKF}{ekf}{extended Kalman filter}
\newacronym{CLT}{clt}{central limit theorem}
\newacronym{MC}{mc}{Monte Carlo}
\newacronym{IS}{is}{importance sampling}
\newacronym{SIS}{sis}{sequential importance sampling}
\newacronym{AIS}{ais}{annealed importance sampling}
\newacronym{SIR}{sir}{sequential importance resampling}
\newacronym{SISR}{sisr}{sequential importance sampling and resampling}
\newacronym{PF}{pf}{particle filter}
\newacronym{SMC}{smc}{sequential Monte Carlo}
\newacronym{NSMC}{nsmc}{nested sequential Monte Carlo}
\newacronym{MH}{mh}{Metropolis-Hastings}
\newacronym{CSMC}{csmc}{conditional sequential Monte Carlo}
\newacronym{PG}{pg}{particle Gibbs}
\newacronym{PMCMC}{pmcmc}{particle Markov chain Monte Carlo}
\newacronym{IPMCMC}{ipmcmc}{interacting particle Markov chain Monte Carlo}
\newacronym{PMH}{pmh}{particle Metropolis-Hastings}
\newacronym{PIMH}{pimh}{particle independent Metropolis-Hastings}
\newacronym{PMMH}{pmmh}{particle marginal Metropolis-Hastings}
\newacronym{MCMC}{mcmc}{Markov chain Monte Carlo}
\newacronym{PM}{pm}{pseudo marginal}
\newacronym{CPM}{cpm}{correlated pseudo marginal}
\newacronym{ESS}{ess}{effective sample size}
\newacronym{BREAD}{bread}{bounding divergences with reverse annealing}
\newacronym{AIDE}{aide}{auxiliary inference divergence estimator}
\newacronym{KL}{kl}{Kullback-Leibler}
\newacronym{ELBO}{elbo}{\emph{evidence lower bound}}
\newacronym{VI}{vi}{variational inference}
\newacronym{VSMC}{vsmc}{variational sequential Monte Carlo}
\newcommand{\eg}{e.g.\@\xspace}
\newcommand{\ie}{i.e.\@\xspace}
\newcommand{\etc}{etc}               
\newcommand{\wrt}{w.r.t.\@\xspace}
\newcommand{\cf}{cf.\@\xspace}
\newcommand{\iid}{iid\@\xspace}
\title{\bf Elements of Sequential Monte Carlo}
\author{
Christian A. Naesseth \\
Columbia University \\
christian.a.naesseth@columbia.edu
\and
Fredrik Lindsten \\
Link\"oping University \\
fredrik.lindsten@liu.se
\and
Thomas B. Sch\"on \\
Uppsala University \\
thomas.schon@it.uu.se
}
\date{\textbf{Please cite this version:} {Christian A. Naesseth, Fredrik Lindsten and Thomas B. Sch\"{o}n. Elements of Sequential Monte Carlo. \textit{Foundations and Trends\textregistered ~in Machine Learning}, 12(3): 307--392, 2019.}}
\begin{document}

\maketitle

\begin{abstract}
    A core problem in statistics and probabilistic machine learning is to 
    compute probability distributions and expectations. This is the 
    fundamental problem of Bayesian statistics and machine learning, which frames all inference 
    as expectations with respect to the posterior distribution. The key 
    challenge is to approximate these intractable expectations. 
    In this tutorial, we review \gls{SMC}, a random-sampling-based 
    class of methods for approximate inference. 
    First, we explain the basics of 
    \gls{SMC}, 
    discuss practical issues, and review theoretical results.
    We then examine two of the main user design choices: the \emph{proposal 
    	distributions} and the so called \emph{intermediate target distributions}.
    We review recent results on how variational inference and amortization can be used to learn efficient proposals and target distributions.
    Next, we discuss the \gls{SMC} estimate of the normalizing constant, how this can be used for pseudo-marginal inference and inference evaluation.
    Throughout the tutorial we illustrate the use of \gls{SMC} on various models commonly used in machine learning, such as stochastic recurrent neural networks, probabilistic graphical models, and probabilistic programs.
\end{abstract}

\tableofcontents

\chapter{Introduction}\label{sec:introduction}
A key strategy in machine learning is to break down a problem into smaller and
more manageable parts, then process data or unknown variables recursively. 
Well known examples of this are message passing algorithms for graphical models and
annealing for optimization or sampling. \Acrfull{SMC} is a class of
methods that are tailored to solved statistical inference problems recursively. These methods
have mostly received attention in the signal processing and statistics 
communities. With well over two decades of research in \gls{SMC}, they 
have enabled inference in increasingly complex and challenging models.
Recently, there has been an emergent interest in this class of algorithms
from the machine learning community. We have seen applications to probabilistic programming 
\citep{wood2014new}, \gls{VI} 
\citep{maddison2017,naesseth18a,anh2018autoencoding}, inference 
evaluation \citep{grosse2015sandwiching,towner-nips-2017}, \glspl{PGM} 
\citep{ihler2009particle,naessethLS2014,paige2016inference}, Bayesian nonparametrics \citep{Fearnhead:2004} and many 
other areas.

We provide a unifying view of the \gls{SMC} methods that have been 
developed since their conception in the early 1990s 
\citep{GordonSS:1993,stewart1992,Kitagawa:1993}. In this introduction 
we provide relevant background material, introduce a running 
example, and discuss the use of code snippets throughout the tutorial.

\section{Historical Background}\label{sec:background}
\gls{SMC} methods are generic tools for performing approximate 
(statistical) inference, 
predominantly Bayesian inference. They use a weighted sample set to 
iteratively approximate the posterior distribution of a 
probabilistic model. Ever since the dawn of \acrlong{MC} methods (see 
\eg \citet{metropolis1949monte} for an early discussion), 
random sample-based approximations have been recognized as powerful 
tools for inference in complex probabilistic models. Parallel to the 
development of \gls{MCMC} methods 
\citep{metropolis1953equation,Hastings1970}, \gls{SIS} \citep{handschin1969monte} and 
sampling/importance resampling \citep{Rubin1987} laid the foundations 
for what would one day become \gls{SMC}. 

\gls{SMC} methods where initially known as particle filters
\citep{GordonSS:1993,stewart1992,Kitagawa:1993}. Particle filters where 
conceived as algorithms for online inference in nonlinear \glspl{SSM} 
\citep{cappe2005}. Since then there has been a flurry of work applying 
\gls{SMC} and particle filters to perform approximate inference in 
ever more complex models. While research in \gls{SMC} initially 
focused on \glspl{SSM}, we will see that \gls{SMC} can be a powerful 
tool in a much broader setting.

\section{Probabilistic Models and Target Distributions}\label{sec:seqmodels}
As mentioned above, \gls{SMC} methods were originally developed as an 
approximate solution to the so called filtering problem, which amounts 
to online inference in dynamical models. Several overview and tutorial 
articles focus on particle filters, \ie the \gls{SMC} algorithms 
specifically tailored to solve the online filtering problem 
\citep{arulampalam2002tutorial,doucet2009tutorial,fearnhead2018particle}.
In this tutorial we will take a different view and explain how 
\gls{SMC} can be used to solve more general ``offline'' problems. We 
shall see how this viewpoint opens up for many interesting applications 
of \gls{SMC} in machine learning that do not fall in the traditional 
filtering setup, and furthermore how it gives rise to new and 
interesting design choices. For complementary review and tutorial articles that treat a similar topic see also \citet{delmoral2006tutorial,doucet2018tutorial}.

We consider a generic probabilistic model given by
a joint \gls{PDF} of latent variables~$\xm$ and observed data~$\ym$, 
\begin{align}
    p(\xm,\ym).
    \label{eq:probmodel}
\end{align}
We focus on Bayesian inference, where the key object is the posterior 
distribution
\begin{align}
    p(\xm \given \ym) = \frac{p(\xm,\ym)}{p(\ym)},
    \label{eq:posterior}
\end{align}
where $p(\ym)$ is known as the marginal likelihood.

The \emph{target distributions} are a sequence of 
probability distributions that we recursively approximate using 
\gls{SMC}. We define each target 
distribution $\nmod_\tm(\xp_{1:\tm})$ in the sequence as a joint 
\gls{PDF} of latent variables $\xp_{1:\tm}=(\xp_1,\ldots,\xp_\tm)$, where $t=1,\ldots,T$. 
The \gls{PDF} is denoted by
\begin{align}
    \nmod_\tm(\xp_{1:\tm}) &\eqdef \frac{1}{\Z_\tm}\umod_\tm(\xp_{1:\tm}), 
    \quad \tm = 1,\ldots,\Tm,
    \label{eq:seqmodel}
\end{align}
where $\umod_\tm$ is a positive integrable function and $\Z_\tm$ is the 
normalization constant, ensuring that $\nmod_\tm$ is indeed a \gls{PDF}. 

We connect the target distributions to the probabilistic model through 
a requirement on the final target distribution 
$\nmod_\Tm(\xp_{1:\Tm})$. We enforce the condition that 
$\nmod_\Tm(\xp_{1:\Tm})$ is either equivalent to the posterior 
distribution, or that it contains the posterior distribution as a marginal 
distribution. The \emph{intermediate} target distributions, \ie 
$\nmod_\tm(\xp_{1:\tm})$ for $\tm <\Tm$, are useful only insofar they 
help us approximate the final target $\nmod_\Tm(\xp_{1:\Tm})$. This approach is 
distinct from most previous tutorials on particle filters and \gls{SMC} 
that traditionally focus on the intermediate targets, \ie the 
filtering distributions. We stress that
it is not necessarily the case that $\xp_{1:\Tm}$ (the latent variables of the target distribution) is equal to $\xm$ (the latent variables of the probabilistic model of interest), as the former can include additional auxiliary variables.

Below we introduce a few examples of probabilistic models and some 
straightforward choices of target distributions. We introduce and 
illustrate our running example which will be used throughout. We will 
return to the issue of choosing the sequence of intermediate targets 
in \cref{sec:target}.

\paragraph{State Space Models} The \acrlong{SSM} (or hidden Markov model) is 
a type of probabilistic models where the latent variables and data 
satisfy a Markov property. For this model we typically have $\xm 
= \xp_{1:\Tm}$. Often the measured data can also be split into a 
sequence of the same length ($\Tm$) as the latent variables, \ie $\ym = \y_{1:\Tm}$. The model is 
defined by a transition \gls{PDF} $\fmod$ and an observation \gls{PDF} $\gmod$,
\begin{subequations}
    \begin{align}
    \xp_\tm \given \xp_{\tm-1} &\sim \fmod(\cdot \given\xp_{\tm-1}), \\
    \y_\tm \given \xp_{\tm} &\sim \gmod(\cdot\given\xp_\tm).
    \end{align}
    \label{eq:intro:ssm}%
\end{subequations}
The joint \gls{PDF} is
\begin{align}
    p(\xm,\ym) &= p(\xp_1) \gmod(\y_1\given\xp_1)\prod_{\tm=2}^\Tm 
    \fmod(\xp_\tm\given\xp_{\tm-1}) \gmod(\y_\tm\given\xp_\tm),
    \label{eq:ssm:joint}
\end{align}
where $p(\xp_1)$ is the prior on the initial state~$\xp_1$.
This class of models is especially common for data 
that has an inherent temporal structure such as in the field of signal 
processing.
A common choice is to let the target distributions follow the same 
sequential structure as in \cref{eq:ssm:joint}:
\begin{align}
    \umod_\tm(\xp_{1:\tm}) &= p(\xp_1) 
    \gmod(\y_1\given\xp_1)\prod_{\km=2}^\tm 
    \fmod(\xp_\km\given\xp_{\km-1}) \gmod(\y_\km\given\xp_\km),
    \label{eq:intro:ssm:filtertarget}
\end{align}
which means that the final normalized target distribution satisfies 
$\nmod_\Tm(\xp_{1:\Tm}) = p(\xm\given\ym)$ as required.
This is the model class and target distributions which are studied in 
the classical filtering setup.

\paragraph{Non-Markovian Latent Variable Models} The non-Markovian 
\glspl{LVM} are characterized by either no, or higher order, Markov 
structure between the latent variables $\xm$ and/or data 
$\ym$. This can be seen 
as a non-trivial extension of the \gls{SSM}, see 
\cref{eq:intro:ssm}, which has a Markov structure. Also for this class 
of models it is common to have $\xm = \xp_{1:\Tm}$ and $\ym = 
\y_{1:\Tm}$.

Unlike the \gls{SSM}, the non-Markovian \gls{LVM} in its most general 
setting requires access to all previous latent variables 
$\xp_{1:\tm-1}$ to generate $\xp_\tm,\y_\tm$
\begin{subequations}
    \begin{align}
    \xp_\tm \given \xp_{1:\tm-1} &\sim \fmod_\tm(\cdot \given\xp_{1:\tm-1}), \\
    \y_\tm \given \xp_{1:\tm} &\sim \gmod_\tm(\cdot\given\xp_{1:\tm}),
    \end{align}
    \label{eq:intro:nmlvm}%
\end{subequations}
where we again refer to $\fmod_\tm$ and $\gmod_\tm$ as the transition \gls{PDF}
and observation \gls{PDF}, respectively. The joint \gls{PDF} is given by
\begin{align}
    p(\xm,\ym) &= p(\xp_1) \gmod(\y_1\given\xp_1)\prod_{\tm=2}^\Tm 
    \fmod_\tm(\xp_\tm\given\xp_{1:\tm-1}) 
    \gmod_\tm(\y_\tm\given\xp_{1:\tm}),
    \label{eq:nmlvm:joint}
\end{align}
where $p(\xp_1)$ is again the prior on $\xp_1$. A typical target 
distribution is given by
\begin{align}
    \umod_\tm(\xp_{1:\tm}) &= 
    \umod_{\tm-1}(\xp_{1:\tm-1})\fmod_\tm(\xp_\tm\given\xp_{1:\tm-1}) 
    \gmod_\tm(\y_\tm\given\xp_{1:\tm}), \quad \tm > 1,
    \label{eq:nmlvm:filteringtarget}
\end{align}
with $\umod_1(\xp_1) = p(\xp_1) \gmod_1(\y_1\given\xp_1)$. Another 
option is 
\begin{align*}
    \umod_1(\xp_1) &= p(\xp_1), \\
    \umod_\tm(\xp_{1:\tm}) &= 
    \umod_{\tm-1}(\xp_{1:\tm-1})\fmod_\tm(\xp_\tm\given\xp_{1:\tm-1}), 
    \quad 1< \tm < \Tm, \\
    \umod_\Tm(\xp_{1:\Tm}) &= \umod_{\Tm-1}(\xp_{1:\Tm-1})
    \fmod_\Tm(\xp_\Tm\given\xp_{1:\Tm-1}) \prod_{\tm=1}^\Tm 
    \gmod_\tm(\y_\tm\given\xp_{1:\tm}).
\end{align*}
For both these sequences of target distributions the final iteration $\Tm$ is the posterior 
distribution, \ie $\nmod_\Tm(\xp_{1:\Tm}) = 
\model(\xp_{1:\Tm}\given \y_{1:\Tm}) = p(\xm\given\ym)$. However, the 
former one will often lead to more accurate inferences. This is 
because we introduce information from the data at an earlier stage in 
the \gls{SMC} algorithm.

Throughout the monograph we will exemplify the different methods using 
a Gaussian special case of \cref{eq:intro:nmlvm}, see \cref{ex:running}. 
We let the prior on $\xp_{1:\tm}$, defined by the transition 
\glspl{PDF} $\fmod_1, \ldots \fmod_\tm$, be Markovian and introduce the non-Markov 
property instead through the observation \glspl{PDF} $\gmod_1,\ldots,\gmod_\tm$.
\begin{example}[Non-Markovian Gaussian Sequence Model]
As our running example for illustration purposes we use a non-Markovian 
Gaussian sequence model. It is
\begin{align}
    \xp_\tm \given \xp_{1:\tm-1} &\sim \fmod_\tm(\cdot\given\xp_{\tm-1}),
    &&\y_\tm \given \xp_{1:\tm} \sim \gmod_\tm(\cdot \given \xp_{1:\tm}),
\end{align}
with observed variables $\y_\tm$ (data), and where 
\begin{align*}
    \fmod_\tm(\xp_\tm\given\xp_{\tm-1}) &= 
    \Norm\left(\xp_\tm \given \phi \xp_{\tm-1}, q \right), \\
    \gmod_\tm(\y_\tm\given \xp_{1:\tm}) &= \Norm\left(\y_\tm\given
    \sum_{\km=1}^\tm \beta^{\tm-\km} \xp_{\km}, r\right),
\end{align*}
where $\Norm\left(\xp\given \mu, \Sigma \right)$ denotes a Gaussian distribution on $\xp$ with mean $\mu$ and (co)variance $\Sigma$.
We let the prior at $\tm=1$ be $p(\xp_1) = \Norm(\xp_1\given 0, q)$. 
Artificial data was generated using 
$(\phi, q, \beta, r) = (0.9, 1, 0.5, 1)$. The distribution of interest 
is the posterior distribution $\model(\xp_{1:\Tm}\given \y_{1:\Tm})$. 
We illustrate a few sample paths of $\y_{1:\Tm}$ in \cref{fig:nmlvmex} 
for $\Tm = 100$.
\begin{figure}[tb]
    \centering
    \includegraphics[width=0.7\textwidth]{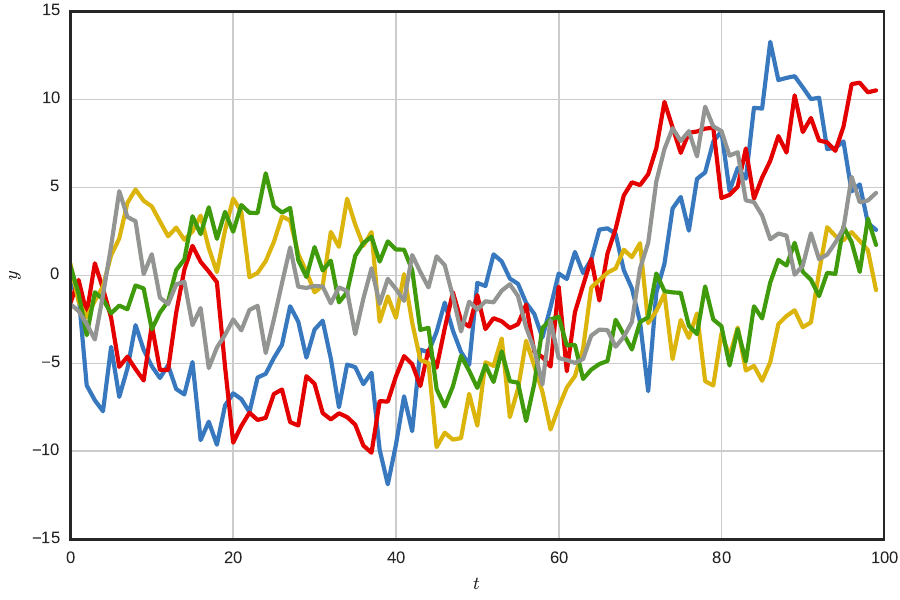}
    \caption{Five sample paths of $\y_{1:\Tm}$ from our running 
    example for $\Tm=100$.}
    \label{fig:nmlvmex}
\end{figure}

We can adjust the strength of the dependence on previous latent 
variables in the observations, $\y_\tm$, through the parameter $\beta \in [0, 1]$. If we 
set $\beta = 0$ we obtain a linear Gaussian \gls{SSM}, since the data 
depends only on the most recent latent $\xp_\tm$. On the other 
hand if we let $\beta = 1$, this signifies that $\xp_\km$ for $\km < 
\tm$ has equally strong effect on $\y_\tm$ as does $\xp_\tm$.
\label{ex:running}
\end{example}

Another example of a non-Markovian \gls{LVM} often encountered in machine learning is the stochastic \gls{RNN} \citep{Chung2015, bayer2014,FraccaroSPW:2016,maddison2017,naesseth18a}. We define the stochastic \gls{RNN} below and will return to it again in \cref{sec:proptwisting}.
\begin{example}[Stochastic Recurrent Neural Network]
A stochastic \gls{RNN} is a non-Markovian \gls{LVM} where the 
parameters of the transition and observation models are defined by 
\gls{RNN}s. A common example is using the conditional Gaussian 
distribution to define the transition \gls{PDF}
\begin{align*}
    \fmod_\tm(\xp_\tm\given\xp_{1:\tm-1}) &= \Norm\left(\xp_\tm\given 
    \mu_\tm(\xp_{1:\tm-1}), \Sigma_\tm(\xp_{1:\tm-1})\right),
\end{align*}
where the functions $\mu_\tm(\cdot), \Sigma_\tm(\cdot)$ are defined by
\gls{RNN}s.
\end{example}

\paragraph{Conditionally independent models} 
A common model in probabilistic machine learning is to assume that the 
datapoints $\y_\km$ in the dataset 
$\ym=\{\y_\km\}_{\km=1}^\Km$ are conditionally independent given the latent $\xm$. 
This means that the joint \gls{PDF} is given by
\begin{align}
    p(\xm,\ym) &= p(\xm) \underbrace{\prod_{\km=1}^{\Km} 
    \gmod_\km(\y_\km\given\xm)}_{p(\ym\given\xm)},
\end{align}
where $p(\ym\given\xm)$ is the likelihood.
For this class of models it might not be immediately apparent that we 
can define a useful sequence of target distributions on latent variables $\xp_{1:\tm}$. There is no obvious sequence structure as in the \gls{SSM} or the non-Markovian \gls{LVM}. However, as we 
shall see, we can make use of auxiliary variables to design target 
distributions that can help with inference.

We will discuss two approaches to design the sequence of target 
distributions: using data tempering and likelihood tempering, 
respectively. Both of these will make use of an auxiliary variable 
technique, where each $\xp_\tm$ is a random variable on the same space 
as $\xm$.

\emph{Data tempering:} Using data tempering we add the data $\y_\km$ 
to the target distribution one by one. In this case the data index 
$\km$ coincides with the target index $\tm$. We define the target 
distribution
\begin{align}
    \umod_\tm(\xp_{1:\tm}) &= p(\xp_\tm) \prod_{\km=1}^\tm 
    \gmod_\km(\y_\km\given\xp_{\tm}) \cdot 
    \prod_{\km=1}^{\tm-1} \smod_\km(\xp_\km \given \xp_{\km+1}),
\end{align}
where the distributions $\smod_\km(\xp_\km \given \xp_{\km+1})$ are a 
design choice, known as \emph{backward kernels} \citep{chopin2002sequential,del2006sequential}. 
With this choice, we have that the marginal distribution of $\xp_\Tm$ at the final iteration is 
exactly the posterior distribution, \ie $\nmod_\Tm(\xp_\Tm) = 
p(\xm\given\ym)$. In fact, at each step we have that the marginal target 
distribution is a partial posterior $\nmod_\tm(\xp_\tm) = p(\xm\given 
\y_{1:\tm})$.

\emph{Likelihood tempering:} With likelihood tempering, instead of 
adding the data one by one, we change the likelihood $p(\ym\given\xm)$ 
through a sequence of positive variables. We define the target 
distribution
\begin{align}
    \umod_\tm(\xp_{1:\tm}) &= p(\xp_\tm) p(\ym\given\xp_\tm)^{\tmp_\tm}\cdot 
    \prod_{\km=1}^{\tm-1} \smod_\km(\xp_\km \given \xp_{\km+1}),
\end{align}
where $0 = \tmp_1 < \ldots < \tmp_\Tm = 1$, and again make use of  
user chosen backward kernels $\smod_\km(\xp_\km \given \xp_{\km+1})$ \citep{chopin2002sequential,del2006sequential}. 
In this setting all data is considered at each iteration. Since 
$\tmp_\Tm=1$, we have that the final marginal target distribution 
is again equal to the posterior $\nmod_\Tm(\xp_\Tm) = p(\xm\given\ym)$.

Applying \gls{SMC} methods to tempered (and similar) target distributions has 
been studied by \eg \citet{chopin2002sequential,del2006sequential}. If the proposal $\prop_{\km+1}(\xp_{\km+1}\given\xp_{\km})$ is a Markov kernel with stationary distribution $\nmod_{\km+1}(\xp_{\km+1})$, then a commonly used backward kernel is $\smod_\km(\xp_\km \given \xp_{\km+1}) = \frac{\nmod_{\km+1}(\xp_\km)\prop_{\km+1}(\xp_{\km+1}\given\xp_{\km})}{\nmod_{\km+1}(\xp_{\km+1})}$. We refer to the 
works of \citet{chopin2002sequential,del2006sequential} for a thorough discussion on the choice of backward kernels
$\smod_\km(\xp_\km \given \xp_{\km+1})$. Another well known 
example is \emph{annealed importance sampling} by \citet{neal2001annealed}, 
which uses the backward kernel example above to define the target distributions, but relies on \gls{SIS} instead of \gls{SMC} for inference.

\paragraph{Models and Targets} We have seen several probabilistic 
models with examples of corresponding target distributions. While not 
limited to these, this illustrates the wide range of the applicability 
of \gls{SMC}. In fact, as long as we can design a sequence of target 
distributions such that $\nmod_\Tm$ coincides with the distribution of 
interest, we can leverage \gls{SMC} for inference.

\section{Applications}\label{sec:applications}
\Acrlong{SMC} and \acrlong{IS} methods have already seen a plethora of applications to machine learning and statistical inference problems. Before we turn to the fundamentals of the various algorithms it can be helpful to understand some of these applications. We present and discuss a few select examples of applications of \gls{SMC} and \gls{IS} to \acrlongpl{PGM},
Bayesian nonparametric models, probabilistic programming, and inference evaluation. 

\paragraph{Probabilistic Graphical Models}
\Acrlongpl{PGM} (\gls{PGM}; see \eg \citet{koller2009probabilistic,wainwright2008graphical}) are probabilistic models where the conditional independencies in the joint \gls{PDF} are described by edges in a graph. The graph structure allows for easy and strong control on the type of prior information that the user can express. The main limitation of the \gls{PGM} is that exact inference is often intractable and approximate inference is challenging.

The \gls{PGM} is a probabilistic model where the \gls{PDF} factorizes according to an underlying graph described by a set of cliques $C\in\mathcal{C}$, \ie fully connected subsets of the vertices $V \in \mathcal{V}$ where $\mathcal{V}$ contains all individual components of $\xm$ and $\ym$. The undirected graphical model can be denoted by
\begin{align}
	p(\xm,\ym) &= \frac{1}{\Z} \prod_{C \in \mathcal{C}} \psi_C(\xp_C),
\end{align}
where $\xp_C$ includes all elements of $\xm$ and $\ym$ in the clique $C$, and $\Z$ is a normalization constant ensuring that the right hand side is a proper \gls{PDF}. 

\Gls{SMC} methods have recently been successfully applied to the task of inference in general \glspl{PGM}, see \eg  \citet{maceachern1999sequential, chopin2002sequential,del2006sequential,ihler2009particle,naessethLS2014,paige2016inference,LindstenJNKSAB:2017,lindsten2018} for representative examples.


\paragraph{Probabilistic Programming}
Probabilistic programming languages are programming languages designed to describe probabilistic models and to automate the process of doing inference in those models. We can think of probabilistic programming as that of automating statistical inference, particularly Bayesian inference, using tools from computer science and computational statistics. Developing a syntax and semantics to describe and denote probabilistic models and the inference problems we are interested in solving is key to the probabilistic programming language. To define what separates a probabilistic programming language from a standard programming language we  quote \citet{gordon2014probabilistic}: ``Probabilistic programs are usual functional or imperative programs with two added constructs: (1) the ability to draw values at random from distributions, and (2) the ability to condition values of variables in a program via observations.'' This aligns very well with the notion of Bayesian inference through the posterior distribution \cref{eq:posterior}; through the syntax of the language we define $\xm$ to be the random values we sample and $\ym$ our observations that we condition on through the use of Bayes rule. The probabilistic program then essentially defines our joint probabilistic model $p(\xm,\ym)$. 

One of the main challenges of probabilistic programming is to develop algorithms that are general enough to enable inference for any model (probabilistic program) that we could conceivably write down using the language. Recently \citet{wood2014new} have shown that \gls{SMC}-based approaches 
can be used as inference back-ends in probabilistic programs.

For a more thorough treatment of probabilistic programming we refer the interested reader to the recent tutorial by \citet{van2018introduction} and the survey by \citet{gordon2014probabilistic}.

\paragraph{Bayesian nonparametric models}
Nonparametric models are characterized by having a complexity which grows with the amount of available data. In a Bayesian context this implies that the usual latent random variables (\ie, parameters) of the model are replaced by latent stochastic processes. 
Examples include Gaussian processes, Dirichlet processes, and Beta processes; see \eg \citet{HjortHMW:2010} for a general introduction.

Sampling from these latent stochastic processes, conditionally on observed data, can be done using \gls{SMC}. To give a concrete example, consider the Dirichlet process mixture model, which is a clustering model that can handle an unknown and conceptually infinite number of mixture components. 
Let $\y_\tm$, $\tm=1,2,\ldots$ be a stream of data. Let 
$\xp_\tm$, $\tm=1,2,\ldots$ be a sequence of latent integer-valued random variables, such that $\xp_\tm$ is the index of the mixture component to which datapoint $\y_\tm$ belongs. 
A generative representation of the mixture assignment variables is given by
\begin{align*}
	p(\xp_{\tm+1} = j \mid \xp_{1:\tm}) &=
	\begin{cases}
		\frac{n_{\tm,j}}{\tm+\alpha} & \text{for } j=1,\ldots,J_{\tm}, \\
		\frac{\alpha}{\tm+\alpha} & \text{for } j=J_{\tm}+1,
	\end{cases}
\end{align*}
where
\(
	J_\tm \eqdef \max\{\xp_{1:\tm}\}
\) is the number of distinct mixture components represented by the first $\tm$ datapoints, and $n_{\tm,j} \eqdef \sum_{\km=1}^\tm \mathbb{I}\{\xp_\km = j\}$
is the number of datapoints among $\y_{1:\tm}$ that belong to the $j$th component.

The model is completed by specifying the distribution of the data, conditionally on the mixture assignment variables:
\begin{align*}
	\theta_k &\sim F(\theta), \quad k=1,2,\ldots \\
	p(\y_\tm \mid \xp_\tm, \{\theta_k\}_{k \geq 1}) &= G(\y_\tm \mid \theta_{\xp_\tm}),
\end{align*}
where $G(\;\cdot \mid \theta)$ is an emission probability distribution parameterized by $\theta$ and 
$F(\theta)$ is a prior distribution over the mixture parameters $\theta$.

Note that the mixture assignment variables $\xp_\tm$, $\tm=1,2,\ldots$ evolve according to a latent stochastic process. Solving the clustering problem amounts to computing the posterior distribution of this stochastic process, conditionally on the observed data. One way to address this problem is to use \gls{SMC}; see \citet{Fearnhead:2004} for an efficient implementation tailored to the discrete nature of the problem.

\paragraph{Inference Evaluation}
An important problem when performing approximate Bayesian inference is to figure out when our approximation is “good enough”? Is it possible to give practical guarantees on the approximation we obtain? We need ways to evaluate how accurate our approximate inference algorithms are when compared to the true target distribution that we are trying to approximate. We will refer to the process of evaluating and validating approximate inference methods as \emph{inference evaluation}.

Inference evaluation is mainly concerned with measuring how close our approximation is to the true object we are trying to estimate, often a posterior distribution. For simulated data, \citet{grosse2015sandwiching,grosse2016reliability} have shown that we can make use of \gls{SMC} and \gls{IS} to bound the symmetrized \gls{KL} divergence between our approximate posterior and the true posterior. In another related work \citet{towner-nips-2017} have shown that \gls{SMC}-based methods show promise in estimating the symmetric \gls{KL} divergence between the approximate posterior and a gold standard algorithm.


\section{Example Code}\label{sec:examplecode}
We will be making use of inline Python code snippets throughout the 
manuscript to illustrate the algorithms and methods. Below we 
summarize the modules that are necessary to import to run the code 
snippets:
\begin{lstlisting}[caption={Necessary imports for Python code examples.},
label=code:imports]
import numpy as np
import numpy.random as npr
from scipy.misc import logsumexp
from scipy.stats import norm
\end{lstlisting}

\section{Outline}\label{sec:outline}
The remainder of this tutorial is organized as follows. In 
\cref{cha:smc}, we first introduce \gls{IS}, a foundational building 
block for \gls{SMC}. Then, we discuss the limitations of \gls{IS} and 
how \gls{SMC} resolves these. Finally, the section concludes with 
discussing some practical issues and theoretical results relevant to 
\gls{SMC} methods.

\cref{sec:proptwisting} is focused on the two key design choices of 
\gls{SMC}: the \emph{proposal} and \emph{target} distributions. 
Initially we focus on the proposal, discussing various ways of adapting 
and learning good proposals that will make the approximation more 
accurate. 
Then we discuss the sequence of target distributions; how we can learn 
intermediate distributions that help us when we try to approximate the 
posterior.

\cref{sec:pm} focuses on \gls{PM} methods and other \gls{SMC} methods that rely on a concept known as \emph{proper weights}. First, we provide a simple and straightforward proof of the unbiasedness property of the \gls{SMC} normalization constant estimate. Then, we describe and illustrate the combination of \gls{MCMC} and \gls{SMC} methods through \gls{PM} algorithms.  We move on to detail properly weighted \gls{SMC}, a concept that unites and extends the random weights and nested \gls{SMC} algorithms. Finally, we conclude the chapter by considering a few approaches for distributed and parallel \gls{SMC}.

In \cref{sec:csmc} we introduce \gls{CSMC}, and related methods for simulating from and computing expectations with respect to a target distribution. First, we introduce the basic \gls{CSMC} algorithm and provide a straightforward proof of the unbiasedness of the inverse normalization constant estimate. Then, we show how \gls{SMC} and \gls{CSMC} can be combined to leverage multi-core and parallel computing architectures in the \gls{IPMCMC} algorithm. Finally, we discuss recent applications of \gls{CSMC} for evaluating approximate inference methods.

The tutorial concludes with a discussion and outlook in  \cref{sec:discussion}.

\chapter{Importance Sampling to Sequential Monte Carlo}\label{cha:smc}
Typical applications require us to be able to evaluate or sample 
from the target distributions $\nmod_t$, as well as compute their 
normalization constants~$\Z_\tm$. For most models and targets this will 
be intractable, and we need approximations based on \eg Monte Carlo 
methods.

In this section, we first review \gls{IS} and some of its 
shortcomings. Then, we introduce the \gls{SMC} method, the key 
algorithm underpinning this monograph. Finally, we discuss some key 
theoretical properties of the \gls{SMC} algorithm.

\section{Importance Sampling}\label{sec:is}
\Acrlong{IS} \citep{kahn1950a,kahn1950b,robert2004monte} is a Monte Carlo method that constructs an approximation 
using samples from a proposal distribution, and corrects for the 
discrepancy between the target and proposal using (importance) weights.

Most applications of Bayesian inference can be formulated as computing 
the expectation of 
some generic function $\fun_\tm$, referred to as a test function, with respect to the target 
distribution $\nmod_\tm$,
\begin{align}
    \nmod_\tm(\fun_\tm) \eqdef \Exp_{\nmod_\tm}
    \left[\fun_\tm(\xp_{1:\tm})\right].
    \label{eq:is:expectation}
\end{align}
Examples include posterior predictive distributions, Bayesian 
p-values, and point estimates such as the posterior mean. Computing 
\cref{eq:is:expectation} is often intractable, but by a clever trick we can 
rewrite it as 
\begin{align}
    \Exp_{\nmod_\tm}\left[\fun_\tm(\xp_{1:\tm})\right] &= \frac{1}{\Z_\tm}
    \Exp_{\prop_\tm}\left[\frac{\umod_\tm(\xp_{1:\tm})}
    {\prop_\tm(\xp_{1:\tm})} \fun_\tm(\xp_{1:\tm})\right] 
    = \frac{\Exp_{\prop_\tm}\left[\frac{\umod_\tm(\xp_{1:\tm})}
    {\prop_\tm(\xp_{1:\tm})} \fun_\tm(\xp_{1:\tm})\right]}
    {\Exp_{\prop_\tm}\left[\frac{\umod_\tm(\xp_{1:\tm})}
    {\prop_\tm(\xp_{1:\tm})}\right]}.
    \label{eq:is:trick}
\end{align}
The \gls{PDF} $\prop_\tm$ is a user chosen proposal distribution, we 
assume it is simple to sample from and evaluate.
We can now estimate the right hand side of \cref{eq:is:trick} using the 
Monte Carlo method,
\begin{align}
    \Exp_{\nmod_\tm}\left[\fun_\tm(\xp_{1:\tm})\right] &\approx 
    \frac{\frac{1}{\Np}\sum_{\ip=1}^\Np \uwp_\tm(\xp_{1:\tm}^\ip)
    \fun_\tm(\xp_{1:\tm}^\ip)}
    {\frac{1}{\Np}\sum_{\jp=1}^\Np \uwp_\tm(\xp_{1:\tm}^\jp)},
    \label{eq:is:verbose_est}
\end{align}
where $\uwp_\tm(\xp_{1:\tm}) \eqdef 
\nicefrac{\umod_\tm(\xp_{1:\tm})}{\prop_\tm(\xp_{1:\tm})}$ and 
$\xp_{1:\tm}^\ip$ are simulated \iid from $\prop_\tm$. We will usually 
write \cref{eq:is:verbose_est} more compactly as
\begin{align}
    \Exp_{\nmod_\tm}\left[\fun_\tm(\xp_{1:\tm})\right] &\approx 
    \sum_{\ip=1}^\Np \nwp_\tm^\ip
    \fun_\tm(\xp_{1:\tm}^\ip), \quad \xp_{1:\tm}^\ip \iidsim \prop_\tm,
    \label{eq:is:est}
\end{align}
where the normalized weights $\nwp_\tm^\ip$ are defined by
\begin{align*}
    \nwp_\tm^\ip &\eqdef \frac{\uwp_\tm^\ip}{\sum_\jp 
    \uwp_\tm^\jp}, 
\end{align*}
with $\uwp_\tm^\ip$ a shorthand for $\uwp_\tm(\xp_{1:\tm}^\ip)$. The 
estimate in \cref{eq:is:est} is strongly consistent, 
converging (almost surely) to the true expectation as the number of 
samples~$\Np$ tend to infinity. An alternate view of \gls{IS} is to consider 
it an (empirical) approximation\footnote{This should be interpreted as an approximation of the underlying probability distribution, and not of the density function itself.} of $\nmod_\tm$,
\begin{align}
    \nmod_\tm(\xp_{1:\tm}) &\approx \sum_{\ip=1}^\Np \nwp_\tm^\ip 
    \dirac_{\xp_{1:\tm}^\ip}(\xp_{1:\tm}) \defeq \widehat 
    \nmod_{\tm}(\xp_{1:\tm}),
    \label{eq:is:nmodapprox}
\end{align}
where $\dirac_X$ denotes the Dirac measure at $X$.
Furthermore, \gls{IS} provides an approximation of the 
normalization constant,
\begin{align}
    \Z_\tm &\approx \frac{1}{\Np} \sum_{\ip=1}^\Np \uwp_\tm^\ip \defeq 
    \widehat\Z_\tm
    \label{eq:is:zhat}
\end{align}
Because the weights depend on the random samples, $\xp_{1:\tm}^\ip$, they are themselves 
random variables. One of the key properties is that it is 
\emph{unbiased},
This can be easily seen by noting that $\xp_{1:\tm}^\ip$ are \iid draws from $\prop_\tm$ and therefore
\begin{align}
	\Exp[\widehat\Z_\tm]
	= \frac{1}{\Np} \sum_{\ip=1}^\Np \Exp\left[\frac{\umod_\tm(\xp_{1:\tm}^\ip)}
	{\prop_\tm(\xp_{1:\tm}^\ip)}\right]
	=  \frac{1}{\Np} \sum_{\ip=1}^\Np \int \frac{\umod_\tm(\xp_{1:\tm}^\ip)}
	{\prop_\tm(\xp_{1:\tm}^\ip)} \prop_\tm(\xp_{1:\tm}^\ip) \dif \xp_{1:\tm}^\ip
	 = \Z_\tm,
\end{align}
since $\Z_\tm = \int \umod_\tm(\xp_{1:\tm})\dif \xp_{1:\tm}$ is nothing by the normalization constant for $\umod_\tm$.
%
This property
will be important for several of the more powerful \gls{IS} and 
\gls{SMC}-based methods considered in this monograph.

We summarize the \acrlong{IS} method in \cref{alg:is}. This algorithm 
is sometimes referred to as \emph{self-normalized} \gls{IS}, because 
we are normalizing each individual weight using all samples.
\begin{algorithm}[tb]
    \SetKwInOut{Input}{input}\SetKwInOut{Output}{output}
    \Input{Unnormalized target distribution $\umod_\tm$, proposal 
    $\prop_\tm$, number of samples $\Np$.}
    \BlankLine
    \For{$\ip = 1$ \KwTo $\Np$}{
        Sample $\xp_{1:\tm}^\ip \sim \prop_\tm$\\
        Set $\uwp_\tm^\ip = \frac{\umod_\tm(\xp_{1:\tm}^\ip)}
        {\prop_\tm(\xp_{1:\tm}^\ip)}$
    }
    Set $\nwp_\tm^\ip = \frac{\uwp_\tm^\ip}{\sum_\jp \uwp_\tm^\jp}$, 
    for $\ip =1,\ldots,\Np$
    
    \caption{\glsreset{IS}\Gls{IS}}\label{alg:is}
\end{algorithm}

The straightforward implementation of the \gls{IS} method we have 
described thus far is impractical for many of the example models and 
targets in \cref{sec:seqmodels}. It is 
challenging to design good proposals for high-dimensional models. A 
good proposal is typically more heavy-tailed than the target; 
if it is not, the weights can have infinite variance. Another favorable 
property of a proposal is that it should cover the bulk of the target 
probability mass, putting high probability on regions of high 
probability under the target distribution.
Even Markovian models, such as the \gls{SSM}, can have a prohibitive computational complexity 
without careful design of the proposal. In the next section we will 
describe how we can alleviate these concerns using \gls{SIS}, a 
special case of \gls{IS}, with a kind of divide-and-conquer approach to 
tackle the high-dimensionality in $\Tm$.

\subsection{Sequential Importance Sampling}\label{sec:sis}
\Acrlong{SIS} \citep{handschin1969monte,robert2004monte} is a variant of \gls{IS} were we select a proposal 
distribution that has an autoregressive structure, and compute 
importance weights recursively. By choosing a proposal defined by
\begin{align*}
    \prop_\tm(\xp_{1:\tm}) &= \prop_{\tm-1}(\xp_{1:\tm-1}) 
    \prop_\tm(\xp_\tm \given \xp_{1:\tm-1})
\end{align*}
we can decompose the proposal design problem into $\Tm$ conditional 
distributions. This means we obtain samples $\xp_{1:\tm}^\ip$ by 
reusing $\xp_{1:\tm-1}^\ip$ from the previous iteration, and append a 
new sample, $\xp_\tm^\ip$, simulated from $\prop_\tm(\xp_\tm \given 
\xp_{1:\tm-1}^\ip)$. The unnormalized weights can be computed 
recursively by noting that
\begin{align*}
    \uwp_\tm(\xp_{1:\tm}) &= 
    \frac{\umod_\tm(\xp_{1:\tm})}{\prop_\tm(\xp_{1:\tm})} =
    \frac{\umod_{\tm-1}(\xp_{1:\tm-1})}{\prop_{\tm-1}(\xp_{1:\tm-1})} 
    \frac{\umod_\tm(\xp_{1:\tm})}{\umod_{\tm-1}(\xp_{1:\tm-1}) 
    \prop_\tm(\xp_\tm \given \xp_{1:\tm-1})} \\
    &=\uwp_{\tm-1}(\xp_{1:\tm-1}) 
    \frac{\umod_\tm(\xp_{1:\tm})}{\umod_{\tm-1}(\xp_{1:\tm-1}) 
    \prop_\tm(\xp_\tm \given \xp_{1:\tm-1})}.
\end{align*}
We summarize the \gls{SIS} method in \cref{alg:sis}, where 
${\prop_1(\xp_1\given\xp_{1:0}) = \prop_1(\xp_1)}$ and $\uwp_0 = 
\umod_0 = 1$.
\begin{algorithm}[tb]
    \SetKwInOut{Input}{input}\SetKwInOut{Output}{output}
    \Input{Unnormalized target distributions $\umod_\tm$, proposals 
    $\prop_\tm$, number of samples $\Np$.}
    \BlankLine
    \For{$\tm=1$ \KwTo $\Tm$}{
        \For{$\ip = 1$ \KwTo $\Np$}{
            Sample $\xp_{\tm}^\ip \sim 
            \prop_\tm(\xp_\tm\given\xp_{1:\tm-1}^\ip)$\\
            Append $\xp_{1:\tm}^\ip = 
            \left(\xp_{1:\tm-1}^\ip,\xp_\tm^\ip\right)$ \\
            Set $\uwp_\tm^\ip = \uwp_{\tm-1}^\ip \frac{\umod_\tm(\xp_{1:\tm}^\ip)}
            {\umod_{\tm-1}(\xp_{1:\tm-1}^\ip) \prop_\tm(\xp_{\tm}^\ip\given \xp_{1:\tm-1}^\ip)}$
        }
        Set $\nwp_\tm^\ip = \frac{\uwp_\tm^\ip}{\sum_\jp \uwp_\tm^\jp}$, 
        for $\ip =1,\ldots,\Np$
    }
    \caption{\glsreset{SIS}\Gls{SIS}}\label{alg:sis}
\end{algorithm}

If we need to evaluate the normalization constant estimate 
$\hatZ_\tm$, analogously to \gls{IS} we make use of 
\cref{eq:is:zhat}. However, we may also obtain a (strongly) 
consistent estimate of the ratio of normalization constants
$\nicefrac{\Z_\tm}{\Z_{\tm-1}}$
\begin{align*}
\frac{\Z_\tm}{\Z_{\tm-1}} &= 
\Exp_{\nmod_\tm (\xp_{1:\tm-1})\prop_\tm(\xp_\tm\given\xp_{1:\tm-1})}
\left[\frac{\umod_\tm(\xp_{1:\tm})}{\umod_{\tm-1}(\xp_{1:\tm-1})
\prop_\tm(\xp_\tm\given \xp_{1:\tm-1})}\right] \\
&\approx \sum_{\ip=1}^\Np \nwp_{\tm-1}^\ip 
\frac{\uwp_\tm^\ip}{\uwp_{\tm-1}^\ip}.
\end{align*}
While the estimate of the \emph{ratio} is consistent, it is in general not unbiased. 
However, \gls{SIS} is a special case of \gls{IS}. This means that the \gls{SIS} 
estimate of the normalization constant for $\umod_\tm$, \ie $\hatZ_\tm$ in 
\cref{eq:is:zhat}, is still unbiased and consistent.

In \cref{ex:sis} we detail a first example proposal $\prop_\tm$ for 
the running example, and derive the corresponding weights $\uwp_\tm$.
Furthermore, we include a code snippet that illustrates how to 
implement the sampler in Python.

\begin{example}[\Acrlong{SIS} for \cref{ex:running}]
We revisit our running non-Markovian Gaussian example. The target 
distribution is
\begin{align*}
    \umod_\tm(\xp_{1:\tm}) &= p(\xp_1)\gmod(\y_1\given\xp_1)\prod_{\km=2}^\tm 
    \fmod(\xp_\km\given\xp_{\km-1}) \gmod(\y_\km\given\xp_{1:\km}),
\end{align*}
with $p(\xp_1) = \Norm(\xp_1\given 0, q)$ and 
\begin{align*}
    \fmod(\xp_\km\given\xp_{\km-1}) &= 
    \Norm\left(\xp_\km \given \phi \xp_{\km-1}, q \right), 
    &&\gmod(\y_\km\given\xp_{1:\km}) = \Norm\left(\y_\tm\given
        \sum_{\lm=1}^\km \beta^{\km-\lm} \xp_{\lm}, r\right).
\end{align*}
A common approach is to set the proposal to 
be the prior (or transition) distribution $\fmod$. A sample from the 
proposal $\prop_\tm(\xp_\tm\given\xp_{\tm-1}) = 
\fmod(\xp_\tm\given\xp_{\tm-1})$ is generated as follows
\begin{align}
    \xp_\tm &= \phi \xp_{\tm-1} + \sqrt{q} \varepsilon_\tm, 
    \quad \varepsilon_\tm \sim \Norm(0,1).
    \label{eq:sis:running:bootstrap_prop}
\end{align}
We refer to this proposal simply as the \emph{prior} 
proposal. The corresponding weight update is
\begin{align}
	\notag
    \uwp_\tm(\xp_{1:\tm}) &= \uwp_{\tm-1}(\xp_{1:\tm-1}) 
    \frac{\umod_\tm(\xp_{1:\tm})}{\umod_{\tm-1}(\xp_{1:\tm-1}) 
    \prop_\tm(\xp_\tm \given \xp_{1:\tm-1})} \\
    &= \uwp_{\tm-1}(\xp_{1:\tm-1}) \Norm\left(\y_\tm\given
        \sum_{\km=1}^\tm \beta^{\tm-\km} \xp_{\km}, r\right),
\end{align}
where $\uwp_0 = 1$. We provide \cref{code:sis:running} to illustrate 
how to implement \gls{SIS} with the prior proposal for this model in 
Python.
\begin{lstlisting}[caption={\Acrlong{SIS} for \cref{ex:running}.},
label=code:sis:running]
x = np.zeros((N,T))
logw = np.zeros(N)
mu = np.zeros(N)
for t in range(T):
    x[:,t]= phi*x[:,t-1]+np.sqrt(q)*npr.randn(N)
    mu = beta*mu + x[:,t]
    logw += norm.logpdf(y[t], mu, np.sqrt(r))
w = np.exp(logw-logsumexp(logw))
\end{lstlisting}
For improved numerical stability we update the log-weights $\log 
\uwp_\tm$ and subtract the logarithm of the sum of weights (the weight normalization) before exponentiating.
\label{ex:sis}
\end{example}
\Gls{SIS} can be implemented efficiently for a large class 
of problems, the computational complexity is usually linear in $\Np$ 
and $\Tm$. Even so, the \gls{IS} methods suffer from severe drawbacks limiting 
their practical use for many high-dimensional problems.

\subsection{Shortcomings of Importance Sampling}
The main drawback of \gls{IS} is that the variance of the estimator 
scales unfavorably with the dimension of the problem; the 
variance generally increases exponentially in $\Tm$ \citep{doucet2009tutorial}. Because 
\gls{SIS} is a special case of \gls{IS} it inherits this 
unfavorable property. To see that this is true we can illustrate using a toy example where the target factorizes completely over time and with identical distributions for each factor $\umod_\tm(\xp_{1:\tm}) = \prod_{\km=1}^\tm \umod(\xp_\km)$. We illustrate in \cref{ex:sis:exponential}.
\begin{example}[\Acrlong{SIS} variance of $\hatZ_\tm$]
Assume that the target distribution of interest is given by $\umod_\tm(\xp_{1:\tm}) = \prod_{\km=1}^\tm \umod(\xp_\km)$, and we pick a proposal distribution that follows the same factorization  $\prop_\tm(\xp_{1:\tm}) = \prod_{\km=1}^\tm \prop(\xp_\km)$. The normalization constant is then given by $\Z_\tm= \Z^\tm$ for $\Z = \int \umod(\xp) \dif \xp$. The normalization constant estimate is
\begin{align*}
	\hatZ_\tm &= \frac{1}{\Np}\sum_{\ip=1}^\Np \prod_{\km=1}^\tm  \frac{\umod(\xp_\km^\ip)}{\prop(\xp_\km^\ip)},
\end{align*}
and its corresponding variance, normalized by $\Z_\tm$, is
\begin{align*}
	\Var\left(\frac{\hatZ_\tm}{\Z_\tm}\right) &= \frac{1}{\Np} \frac{\Var\left( \prod_{\km=1}^\tm \frac{\umod(\xp_\km^\ip)}{\prop(\xp_\km^\ip)} \right)}{\Z_\tm^2} = \frac{1}{\Np} \left(\Exp_{\prop(\xp)}\left[\frac{\nmod(\xp)^2}{\prop(\xp)^2}\right]\right)^\tm - \frac{1}{\Np},
\end{align*}
where $\Exp_{\prop(\xp)}\left[\frac{\nmod(\xp)^2}{\prop(\xp)^2}\right] = \Exp_{\prop(\xp_\km)}\left[\frac{\nmod(\xp_\km)^2}{\prop(\xp_\km)^2}\right]$ for all $\km$. This means that if the proposal is not exactly equal to the target the variance of the normalization constant estimate will in general increase exponentially with $\tm$.
\label{ex:sis:exponential}
\end{example}

Another way in which the unfavorable scaling in $\tm$ manifests itself in practice is through 
the normalized weights $\nwp_\tm^\ip$. The maximum of the weights, 
$\max_\ip \nwp_\tm^\ip$, will quickly approach one as $\tm$ increases, and consequently all other weights approach zero; 
a phenomena known as \emph{weight degeneracy}. 
This means that, effectively, we approximate the target distribution 
using a single sample.

We illustrate weight degeneracy in \cref{ex:sis:weightdegeneracy} using the 
running example.
\begin{example}[\GLS{SIS} weight degeneracy]
We return to our running example, set the length $\Tm = 6$, number of 
particles $\Np = 5$, and $(\phi, q, \beta, r) = (0.9, 1.0, 0.5, 1.0)$. 
\cref{fig:sisweightdegeneracy} shows the particles and the normalized 
weights $\nwp_\tm^\ip$, where the area of the discs correspond to the 
size of the weights. We can see that as $\tm$ increases nearly all mass concentrates 
on the fourth particle $\xp_\tm^4$. This means that the normalized 
weight of the particle is almost one, $\nwp_\tm^4 \approx 1$. The remaining 
particles have normalized weights that are all close to zero, and thus 
have a negligible contribution to the approximation.
\begin{figure}[tb]
    \centering
    \includegraphics[width=0.75\columnwidth]{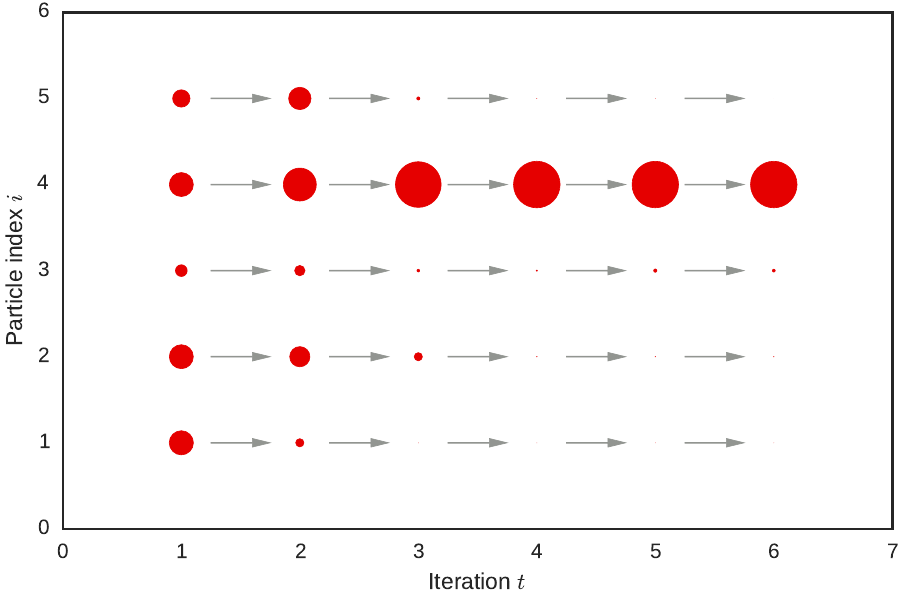}
    \caption{Weight degeneracy of the \gls{SIS} method. The size of the 
    disks represent the size of the corresponding weights 
    $\nwp_\tm^\ip$.}\label{fig:sisweightdegeneracy}
\end{figure}
This concentration of mass for \gls{SIS} to a single particle happens 
very quickly. 
Even for very simple Markovian models 
the variance of \eg our normalization constant estimator can increase 
exponentially fast as a function of $\Tm$.
\label{ex:sis:weightdegeneracy}
\end{example}

\Acrlong{SMC} methods tackle the weight degeneracy issue by 
choosing a proposal that leverages information contained in 
$\widehat\nmod_{\tm-1}$, the previous iteration's 
target distribution approximation.

\section{Sequential Monte Carlo}\label{sec:smc}
\Acrlong{SMC} methods \citep{GordonSS:1993,stewart1992,Kitagawa:1993} improve upon 
\gls{IS} by mitigating the weight degeneracy issue through a clever 
choice of proposal distribution. For certain sequence models the 
weight degeneracy issue can be resolved 
altogether, providing estimators to the final marginal distribution 
$\nmod_\Tm(\xp_\Tm)$ that do not deteriorate for increasing $\Tm$. 
For other sequence models, \gls{SMC} still tends to 
provide more accurate estimates in practice compared to \gls{IS}.

Just like in \gls{SIS} we need a sequence of proposal distributions 
$\prop_\tm(\xp_\tm\given\xp_{1:\tm-1})$ for $\tm=1,\ldots,\Tm$. 
This is a user choice that can significantly impact the 
accuracy of the \gls{SMC} approximation. For now we assume that the 
proposal is given and return to this choice in \cref{sec:proptwisting}. 
Below, we detail the iterations (or steps) of a basic \gls{SMC} 
algorithm. 

\paragraph{Step 1:}
The first iteration of \gls{SMC} boils down to approximating the 
target distribution $\nmod_1$ using standard \gls{IS}. Simulating $\Np$ 
times independently from the first proposal
\begin{align}
    \xp_1^\ip &\iidsim \prop_1(\xp_1), \quad \ip = 1,\ldots, \Np,
    \label{eq:smc:firstprop}
\end{align} 
and assigning corresponding weights
\begin{align}
    &\uwp_1^\ip = \frac{\umod_1(\xp_1^\ip)}{\prop_1(\xp_1^\ip)},
    \quad \nwp_1^\ip = \frac{\uwp_1^\ip}{\sum_{\jp=1}^\Np \uwp_1^\jp},
    \quad \ip = 1,\ldots, \Np,
    \label{eq:smc:firstweight}
\end{align}
lets us approximate $\nmod_1$ (\cf \cref{eq:is:nmodapprox}) by
\begin{align}
    \widehat \nmod_1(\xp_1) &= \sum_{\ip=1}^\Np \nwp_1^\ip 
    \dirac_{\xp_1^\ip}(\xp_1).
    \label{eq:smc:approx1}
\end{align}
The key innovation of the \gls{SMC} algorithm is that it takes 
advantage of the information provided in $\widehat \nmod_1(\xp_1)$, 
\cref{eq:smc:approx1}, when 
constructing a proposal for the next target distribution $\nmod_2$. 

\paragraph{Step 2:} 
In the second iteration of \gls{SMC} we sample $\xp_{1:2}$ from the 
proposal
$\widehat\nmod_1(\xp_1)\prop_2(\xp_2\given\xp_1)$, rather than from 
$\prop_1(\xp_1)\prop_2(\xp_2\given\xp_1)$ like \gls{SIS}.
We sample $\Np$ times independently from
\begin{align}
    \xp_{1:2}^\ip \iidsim  \widehat\nmod_1(\xp_1)
    \prop_2(\xp_2\given\xp_1), \quad \ip = 1,\ldots, \Np,
    \label{eq:smc:secondprop}
\end{align}
and assign weights
\begin{align}
    \uwp_2^\ip &= 
    \frac{\umod_2(\xp_{1:2}^\ip)}{\umod_1(\xp_{1}^\ip)\prop_2(\xp_2^\ip\given\xp_1^\ip)},
    \quad \nwp_2^\ip = \frac{\uwp_2^\ip}{\sum_{\jp=1}^\Np \uwp_2^\jp},
    \quad \ip = 1,\ldots, \Np.
    \label{eq:smc:secondweight}
\end{align}
Simulating $\xp_{1:2}^\ip$, \cref{eq:smc:secondprop}, can be broken 
down into parts: \emph{resampling} 
$\xp_1^\ip\sim\widehat\nmod_1(\xp_1)$, \emph{propagation} 
$\xp_2^\ip\given \xp_1^\ip \sim \prop_2(\xp_2\given\xp_1^\ip)$, and 
concatenation ${\xp_{1:2}^\ip = (\xp_1^\ip, \xp_2^\ip)}$. Note the 
overloaded notation for $\xp_1^\ip$. We replace the 
initial sample set $\{\xp_1^\ip\}_{\ip=1}^\Np$ from Step~1, 
with the resampled set $\xp_1^\ip\sim\widehat\nmod_1(\xp_1)$, 
$\ip=1,\ldots,\Np$.

Resampling can refer to a variety of methods in statistics, for our 
purpose it is simple (weighted) random sampling with replacement from 
$\xp_1^{1:N} = \{\xp_1^\ip\}_{\ip=1}^\Np$ with weights 
$\nwp_1^{1:N}=\{\nwp_1^\ip\}_{\ip=1}^\Np$. Resampling $\Np$ times 
independently means that the number of times each particle is 
selected is multinomially distributed. This resampling algorithm is 
known as \emph{multinomial resampling}, 
see~\cref{code:multinomial_resampling}. In \cref{sec:resampling,sec:ess} 
we revisit resampling and present alternative resampling algorithms, 
increasing efficiency by correlation and adaptation.
\begin{lstlisting}[caption={Sampling $\Np$ times independently from 
$\sum_{i} \nwp^i \dirac_{\xp^i}$.}, 
label=code:multinomial_resampling]
def multinomial_resampling(w, x):
    u = npr.rand(*w.shape)
    bins = np.cumsum(w)
    return x[np.digitize(u,bins)]
\end{lstlisting}

Propagation generates new samples independently from the proposal, \ie 
$\xp_2^\ip \sim \prop_2(\xp_2\given \xp_1^\ip)$ for each $\ip=1,\ldots,\Np$. 
By concatenating $\xp_{1:2}^\ip = (\xp_1^\ip, \xp_2^\ip)$ we obtain a 
complete sample from the proposal 
$\widehat\nmod_1(\xp_1)\prop_2(\xp_2\given\xp_1)$.

Finally, new importance weights are computed according to \eqref{eq:smc:secondweight}. This expression can be understood as a standard importance sampling weight---the target divided by the proposal. Note, however, that we use $\umod_1$ in the denominator: when computing the weights we ``pretend'' that the proposal is given by $\nmod_1(\xp_1)\prop_2(\xp_2\given\xp_1)$ rather than by its approximation $\widehat\nmod_1(\xp_1)\prop_2(\xp_2\given\xp_1)$. This is of course just an interpretation and the motivation for using this particular weight expression is given by the convergence analysis for \gls{SMC}; see \cref{sec:analysisconvergence}.

The resulting approximation of $\nmod_2$ is
\begin{align*}
    \widehat\nmod_2(\xp_{1:2}) &= \sum_{\ip=1}^\Np \nwp_2^\ip 
    \dirac_{\xp_{1:2}^\ip}(\xp_{1:2}).
\end{align*}
\gls{SMC} can in essence be described as a synthesis of \gls{SIS} and 
resampling, which explains its alternate names \gls{SIR} or \gls{SISR}.

\paragraph{Step $\tm$:} The remaining iterations follow the recipe 
outlined in step $2$. First, the proposal is the product of the 
previous empirical distribution approximation and a conditional 
distribution  
\begin{align}
    \prop_\tm(\xp_{1:\tm}) &= \widehat\nmod_{\tm-1}(\xp_{1:\tm-1}) 
    \prop_\tm(\xp_\tm\given\xp_{1:\tm-1}).
    \label{eq:smc:proposal}
\end{align}
Samples for $\ip=1,\ldots, \Np$ are generated as follows
\begin{align*}
    & &  & \textit{resample} 
    & & \xp_{1:\tm-1}^\ip\sim\widehat\nmod_{\tm-1}(\xp_{1:\tm-1}), \\
    & &  & \textit{propagate} 
    & & \xp_\tm^\ip\given\xp_{1:\tm-1}^\ip \sim 
    \prop_\tm(\xp_\tm\given\xp_{1:\tm-1}^\ip), \\
    & &  & \textit{concatenate} 
    & & \xp_{1:\tm}^\ip = \left(\xp_{1:\tm-1}^\ip,\xp_\tm^\ip\right). 
\end{align*}
Finally, we assign the weights
\begin{align}
    \uwp_\tm^\ip &= 
    \frac{\umod_\tm(\xp_{1:\tm}^\ip)}{\umod_{\tm-1}(\xp_{1:\tm-1}^\ip)
    \prop_\tm(\xp_\tm^\ip\given\xp_{1:\tm-1}^\ip)},
    \quad \nwp_\tm^\ip = \frac{\uwp_\tm^\ip}{\sum_{\jp=1}^\Np \uwp_\tm^\jp},
    \quad \ip = 1,\ldots, \Np,
    \label{eq:smc:weights}
\end{align}
and approximate $\nmod_\tm$ by
\begin{align}
    \widehat\nmod_\tm(\xp_{1:\tm}) &= \sum_{\ip=1}^\Np \nwp_\tm^\ip 
    \dirac_{\xp_{1:\tm}^\ip}(\xp_{1:\tm}).
    \label{eq:smc:target_approx}
\end{align}
The normalization constant $\Z_\tm$ can be estimated by
\begin{align}
    \widehat \Z_\tm &= \prod_{\km=1}^\tm \frac{1}{\Np} 
    \sum_{\ip=1}^\Np \uwp_\km^\ip.
    \label{eq:smc:zhat}
\end{align}

We summarize the full \acrlong{SMC} sampler in \cref{alg:esmc:smc}, where 
$\widehat\nmod_0 = \umod_0 = 1$, and 
${\prop_1(\xp_1\given\xp_{1:0}) = \prop_1(\xp_1)}$.
\begin{algorithm}[tb]
    \SetKwInOut{Input}{input}\SetKwInOut{Output}{output}
    \Input{Unnormalized target distributions $\umod_\tm$, proposals 
    $\prop_\tm$, number of samples $\Np$.}
    \BlankLine
    \For{$\tm=1$ \KwTo $\Tm$}{
        \For{$\ip = 1$ \KwTo $\Np$}{
            Sample $\xp_{1:\tm}^\ip \sim 
            \widehat\nmod_{\tm-1}(\xp_{1:\tm-1})
            \prop_\tm(\xp_\tm\given\xp_{1:\tm-1})$ 
            (see \cref{eq:smc:proposal})\\
            Set $\uwp_\tm^\ip = \frac{\umod_\tm(\xp_{1:\tm}^\ip)}
            {\umod_{\tm-1}(\xp_{1:\tm-1}^\ip) \prop_\tm(\xp_{\tm}^\ip\given \xp_{1:\tm-1}^\ip)}$
            (see \cref{eq:smc:weights})
        }
        Set $\nwp_\tm^\ip = \frac{\uwp_\tm^\ip}{\sum_\jp \uwp_\tm^\jp}$, 
        for $\ip =1,\ldots,\Np$
    }
    \caption{\glsreset{SMC}\Gls{SMC}}\label{alg:esmc:smc}
\end{algorithm}

The \gls{SMC} method typically achieves drastic improvements compared 
to \gls{SIS}. In \cref{ex:basic_smc} we return to our running example, 
using the same settings as in \cref{ex:sis:weightdegeneracy}, to study 
the sample diversity and quality of the basic \gls{SMC} method 
compared to \gls{SIS}.

\begin{example}[\GLS{SMC} sample diversity]
We illustrate the weights and resampling dependencies in \cref{fig:smc:weightdegeneracy}. 
The grey arrows represent what sample from iteration $\tm-1$ that generated 
the current sample at iteration $\tm$, referred to as its \emph{ancestor}.
\begin{figure}[tb]
    \centering
    \includegraphics[width=0.75\columnwidth]{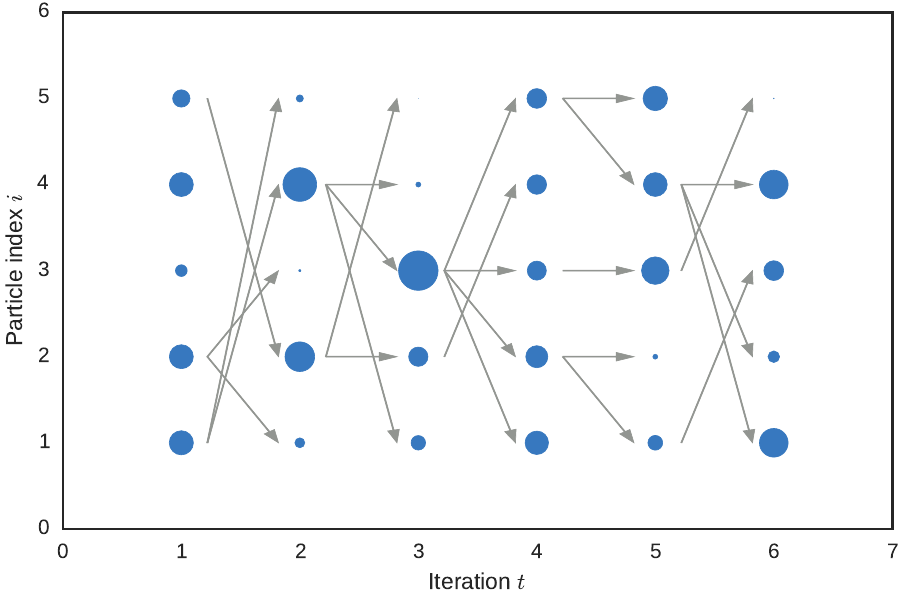}
    \caption{Diversity of samples in the \gls{SMC} method. The size of the 
    disks represent the size of the weights $\nwp_\tm^\ip$, and the 
    grey arrows represent resampling.}
    \label{fig:smc:weightdegeneracy}
\end{figure}
We can see that the weights tend to be more evenly distributed for 
\gls{SMC}. The algorithm dynamically chooses to focus computational 
effort on more promising samples through the resampling step. 
Particles with low weights tend to not be resampled, and particles 
with high weights are resampled more frequently.

\begin{table}[h]
    \centering
    \begin{tabular}{c | c | c | c}
    & $\Exp_{\widehat\nmod_{10}}[\nicefrac{\log \umod_{10}}{10}]$ & 
    $\Exp_{\widehat\nmod_{20}}[\nicefrac{\log \umod_{20}}{20}]$  &
    $\Exp_{\widehat\nmod_{40}}[\nicefrac{\log \umod_{40}}{40}]$  \\
    \hline
    \gls{SIS} & $-2.76$ & $-3.35$ & $-9.86$ \\
    \gls{SMC} & $\mathbf{-2.47}$ & $\mathbf{-2.51}$ & $\mathbf{-2.77}$ 
    \end{tabular}
    \caption{Average log-probability values of the unnormalized target 
    distribution with respect to the sampling distributions of 
    \gls{SIS} and \gls{SMC}. The number of particles $\Np=10$ is fixed 
    for both methods.}\label{tab:avg_logpdfvals}
\end{table}
Not only do we get more diversity in our sample set, \gls{SMC} also 
tends to find areas of higher probability. We illustrate this 
phenomenon in \cref{tab:avg_logpdfvals}. We fix the number of 
particles $\Np=10$, then study the 
average log-probability of our target distribution $\log \umod_\Tm$, 
under the sampling distributions $\widehat \nmod_\Tm$,
normalized by the number of iterations $\Tm$. 
\label{ex:basic_smc}
\end{example}

While \gls{SMC} methods do not suffer from weight degeneracy as $\tm$ increases,
the fundamental difficulty of simulating from a high-dimensional target still remains. For \gls{SMC} this instead manifests itself in
another form of weight degeneracy and something known as \emph{path degeneracy}. We discuss weight degeneracy below and 
illustrate path degeneracy in \cref{ex:pathdegen_smc}.

\paragraph{Weight degeneracy in \GLS{SMC}:} While introducing resampling can alleviate the weight degeneracy stemming from \gls{SIS}'s unfavorable scaling in $\tm$, \gls{SMC} can still suffer from weight degeneracy due to a high-dimensional $\xp_\tm$. If at each iteration of \cref{alg:esmc:smc} the maximum of the normalized importance weight $\max_\ip \nwp_\ip$ is close to one, and the remaining close to zero, we say that the \gls{SMC} sampler suffers from weight degeneracy. This occurs when the variance of the unnormalized importance weights $\uwp_\tm^\ip$ is high. \citet{snyder2008obstacles,bickel2008,bengtsson2008,snyder2015} showed that to avoid weight degeneracy the number of particles $\Np$ must scale at least exponentially with the variance of the log-weights, which in simple examples is proportional to the dimension $\dimX$ of the latent variable $\xp_\tm$. These results rely on a Gaussian target, are only valid as both $\Np$ and $\dimX$ tend to infinity, and focus on the variability of the log-weights. \citet[Proposition 3]{naesseth2016high} showed that asymptotic variance of the resulting estimator of simple test functions can also scale exponentially in the dimension $\dimX$, relying on asymptotics only in $\Np$ and not $\dimX$. We return to this example as we discuss the \gls{CLT} in \cref{sec:analysisconvergence}.

The weight degeneracy in \gls{SMC} caused by high-dimensional latent variables is still an open research problem, see \eg \citet{djuric2013particle,briggs2013data,beskos2014stability,rebeschini2015can,naesseth2016high} and the references therein. In this tutorial we focus on how choosing good proposals and target distributions can ameliorate, but not completely solve, this important issue. 

\begin{example}[\GLS{SMC} path degeneracy]
In \cref{fig:smc:pathdegeneracy} we reiterate the result from our 
previous example, \cref{ex:basic_smc}. However, this time we only 
include the arrows corresponding to samples that have been 
consistently resampled and form our final approximation 
$\widehat\nmod_\Tm$. 
\begin{figure}[tb]
    \centering
    \includegraphics[width=0.75\columnwidth]{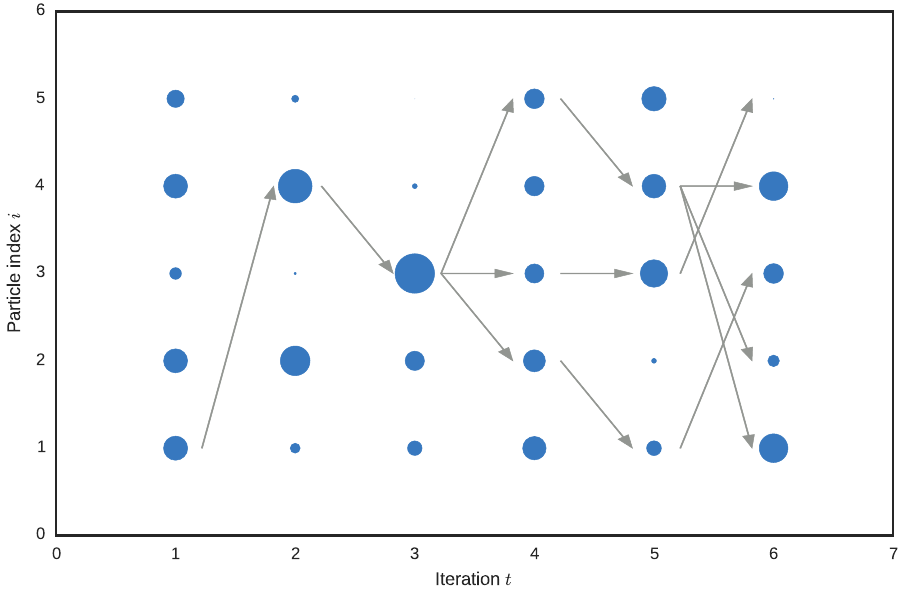}
    \caption{Path degeneracy of the \gls{SMC} method for smoothing 
    approximation. The size of the 
    disks represent the size of the weights $\nwp_\tm^\ip$, and the 
    grey arrows represent resampling.}
    \label{fig:smc:pathdegeneracy}
\end{figure}
We can see that our approximation for early iterations collapses back 
to a single ancestor, \eg we have $N=5$ identical copies of $\xp_1^1$ in 
\cref{fig:smc:pathdegeneracy}. This phenomena is known as path 
degeneracy and occurs because of the resampling mechanism. In 
\citet{jacob2015path} the authors study the impact this has on the 
memory requirements. They show that for state space models, under 
suitable conditions on the observation \gls{PDF} $\gmod$, the expected distance from the current 
iteration to a coalescence of the paths is bounded from above by 
$\ordo(\Np \log \Np)$.

In \cref{fig:smc:pathdegeneracy_logz} we study the impact that 
increasing dependence on earlier iterations has on our \gls{SMC} 
estimate of the log-normalization constant. We let 
$N=20$, $\Tm=100$ be fixed and vary the value of $\beta \in (0,1)$, 
where increasing values of $\beta$ correspond to more long-range 
dependencies in our non-Markovian \gls{LVM}. We can see that for modest 
values of $\beta$ the \gls{SMC} method achieves accurate estimates, 
whereas for higher values the drop-off in efficiency is significant.
\begin{figure}[tb]
    \centering
    \includegraphics[width=0.75\columnwidth]{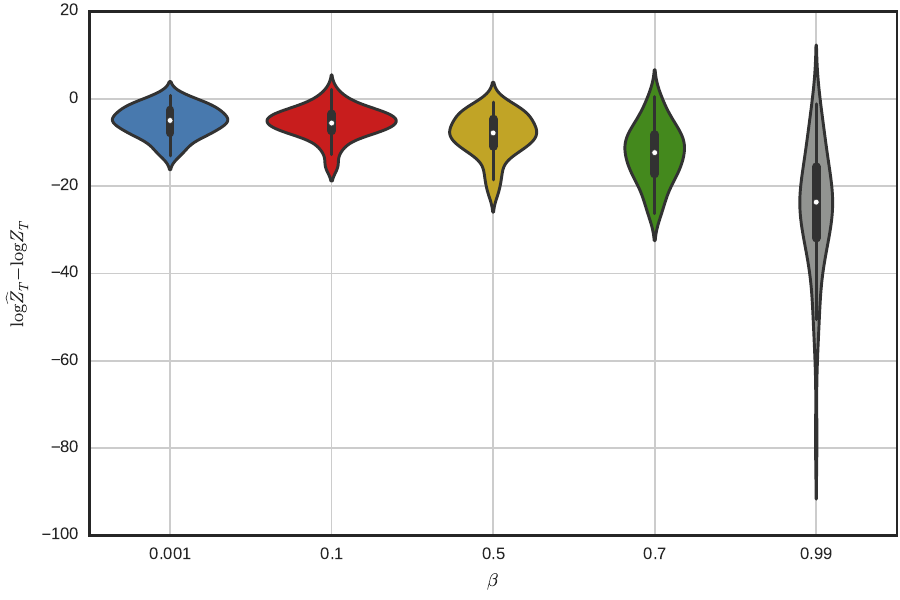}
    \caption{Violinplots for the log of the \gls{SMC} estimate of the 
    normalization constant divided by the true value, \ie
    $\log \widehat\Z_\Tm -\log\Z_\Tm$. The number of particles are 
    $N=20$, length of the data is $T=100$, and we study five different 
    settings of $\beta =(0.001, 0.1, 0.5, 0.7, 0.99)$.}
    \label{fig:smc:pathdegeneracy_logz}
\end{figure}
This manifests itself as an increase in the \gls{MC} variance in the 
estimate (width of bars), as well as a negative bias. As we will discuss 
in \cref{sec:analysisconvergence}, this negative bias in the estimate of 
$\log \Z_\Tm$ is typical for \gls{SMC} and \gls{IS}.
\label{ex:pathdegen_smc}
\end{example}

In \cref{sec:resampling,sec:ess} we will discuss two standard practical
approaches that help alleviate (but not solve) the issue of path degeneracy and improve 
overall estimation accuracy. The first is low-variance resampling --- 
we lower the variance in the resampling step by correlating random 
variables. The second is adaptive resampling, which essentially means 
that we do not resample at each iteration but only on an as-needed basis.
In \cref{sec:proptwisting} we go a step further and show how to 
choose, or learn, proposal and target distributions that lead to even 
further improvements. 

\subsection{Low-variance Resampling}\label{sec:resampling}
The resampling step introduces extra variance in the short-term, using 
$\widehat\nmod_{\tm-1}$ to estimate $\nmod_{\tm-1}$ is better than 
using the resampled particle set. However, discarding improbable particles, and 
putting more emphasis on promising ones is crucial to the long-term 
performance of \gls{SMC}.
To keep long-term performance but minimize the added variance, we can employ 
standard variance reduction techniques based on correlating samples 
drawn from $\widehat\nmod_{\tm-1}$. First, we explain a common 
technique for implementing multinomial resampling based on inverse 
transform sampling. Then, we explain two low-variance 
resampling alternatives, stratified and systematic resampling.

To sample from $\widehat\nmod_{\tm-1}$ we use inverse transform 
sampling based on the weights $\nwp_{\tm-1}^\ip$. We have that
\begin{align*}
\xp_{1:\tm-1} &\sim \widehat\nmod_{\tm-1} \Leftrightarrow 
 \xp_{1:\tm-1} = \xp_{1:\tm-1}^{\ap},
\end{align*}
where 
$\ap$ is an integer random variable 
on $\{1,\ldots,\Np\}$, such that the following is true
\begin{align}
\quad \textstyle 
 \sum_{\ip=1}^{\ap-1}\nwp_{\tm-1}^\ip \leq u < \sum_{\ip=1}^{\ap}
 \nwp_{\tm-1}^\ip,
 \label{eq:ancestor}
\end{align}
for $u\sim\Uni(0,1)$. Repeating the above process independently $\Np$ 
times gives multinomial resampling. That is, we draw $u^\ip \sim 
\Uni(0,1)$ and find the corresponding $\ap^\ip$ for each 
$\ip=1,\ldots,\Np$. In the context of \gls{SMC} the index variables $\{\ap^\ip\}_{\ip=1}^\Np$ determine the ancestry of the particles and they are therefore often referred to as \emph{ancestor indices}.

\paragraph{Stratified Resampling} One way to improve resampling is by 
stratification on the uniform random numbers $u^\ip$ \citep{kitagawa1996monte,fearnhead1998sequential}. This means we 
divide the unit interval into $\Np$ strata, \newline${(0,\nicefrac{1}{\Np}), 
(\nicefrac{1}{\Np},\nicefrac{2}{\Np}), \ldots, 
(1-\nicefrac{1}{\Np},1)}$. Then, we generate $u^\ip \sim 
\Uni(\nicefrac{\ip-1}{\Np}, \nicefrac{\ip}{\Np})$ for each strata 
$\ip=1,\ldots,\Np$. Finally, the corresponding ancestor indices
$\ap^\ip$ are given by studying \cref{eq:ancestor}.

\cref{code:stratified_resampling} shows how this can be implemented in 
Python. The main change compared to multinomial resampling, 
\cref{code:multinomial_resampling}, is the way the $u^\ip$'s are generated.
\begin{lstlisting}[caption={Sampling $\Np$ times from 
$\sum_{i} \nwp^i \dirac_{\xp^i}$ using stratification.}, 
label=code:stratified_resampling]
def stratified_resampling(w, x):
    N = w.shape[0]
    u = (np.arange(N) + npr.rand(N))/N
    bins = np.cumsum(w)
    return x[np.digitize(u,bins)]
\end{lstlisting}

\paragraph{Systematic Resampling} We can take correlation to the 
extreme by only generating a single uniform number $u\sim\Uni(0,1)$ to 
set all the $u^\ip$'s. Systematic resampling \citep{carpenter1999improved,whitley1994genetic} means that we let
\begin{align*}
    u^\ip = \frac{i-1}{\Np} + \frac{u}{\Np},
\end{align*}
where the random number $u\sim\Uni(0,1)$ is identical for all $\ip=1,\ldots,\Np$. Then, 
we find corresponding indices $\ap^\ip$ by again studying
\cref{eq:ancestor}. Note that just like in stratified resampling, 
systematic resampling also generates one $u^\ip$ in each strata 
$(\nicefrac{\ip-1}{\Np},\nicefrac{\ip}{\Np})$. 
However, in this case the $u^\ip$'s are based on a single random value 
$u$. This means that systematic resampling will be more
computationally efficient than stratifies and multinomial resampling.

The code change to implement systematic resampling is simple. We only 
need to change a single line in \cref{code:stratified_resampling}. We 
replace line $3$ by: \lstinline"u = (np.arange(N) + npr.rand())/N".

Both systematic and stratified resampling are heavily used in practice. 
In many cases systematic resampling achieves slightly better results 
\citep{hol2006resampling}.
However, systematic resampling can sometimes lead to non-convergence 
\citep{gerber2017negative}, depending on the ordering of the samples. 

For a more thorough discussion on these, and other, resampling methods
see \eg \citet{douc2005comparison,hol2006resampling}.

\subsection{Effective Sample Size and Adaptive Resampling}\label{sec:ess}
The resampling step introduces extra variance by eliminating 
particles with low weights, and replicating particles with high 
weights. If the variance of the normalized weights is low, this step 
might be unnecessary. By tracking the variability of the normalized 
weights, and trigger a resampling step only when the variability crosses a 
pre-specified threshold, we can alleviate this issue. This is known as 
\emph{adaptive resampling} in the \gls{SMC} literature. Often we study 
the \gls{ESS} \citep{liu2004monte} to gauge the variability of the weights, which for 
iteration $\tm$ is
\begin{align}
    \ess_\tm &= \frac{1}{\sum_{\ip=1}^\Np \left(\nwp_\tm^\ip 
    \right)^2}.
\end{align}
The \gls{ESS} is a positive variable taking values in the 
continuous range between $1$ and $\Np$. For \gls{IS} the \gls{ESS} is 
an approximation of the number of 
exact samples from $\nmod_{\tm}$ that we would need to achieve a comparable 
estimation accuracy \citep{doucet2009tutorial}. A common approach is to resample only at 
iterations when the \gls{ESS} falls below $\nicefrac{\Np}{2}$.

Adaptive resampling can be implemented by slight alterations to the 
proposal and weight updates in \cref{eq:smc:proposal,eq:smc:weights}. 
If $\ess_{\tm-1}$ is above the prespecified threshold we simply omit 
the resampling step in \cref{eq:smc:proposal} and obtain
\begin{align*}
    & &  & \textit{propagate} 
    & & \xp_\tm^\ip\given\xp_{1:\tm-1}^\ip \sim 
    \prop_\tm(\xp_\tm\given\xp_{1:\tm-1}^\ip), \\
    & &  & \textit{concatenate} 
    & & \xp_{1:\tm}^\ip = \left(\xp_{1:\tm-1}^\ip,\xp_\tm^\ip\right). 
\end{align*}
The fact that resampling is omitted at some iterations needs to be accounted for when computing the weights. For iterations where we do not resample the weight update is,
\begin{align}
    \uwp_\tm^\ip &= \frac{\nwp_{\tm-1}^\ip}{1/N} \cdot 
    \frac{\umod_\tm(\xp_{1:\tm}^\ip)}{\umod_{\tm-1}(\xp_{1:\tm-1}^\ip)
    \prop_\tm(\xp_\tm^\ip\given\xp_{1:\tm-1}^\ip)},
    \quad \nwp_\tm^\ip = \frac{\uwp_\tm^\ip}{\sum_{\jp=1}^\Np \uwp_\tm^\jp},
    \quad \ip = 1,\ldots, \Np.
    \label{eq:smc:ess-weights}
\end{align}
This can be thought of as adding an extra importance correction, where the previous weights $\{ \nwp_{\tm-1}^\ip \}_{\ip=1}^\Np$ define a ``target distribution'' for the ancestor indices and the factor $\nicefrac{1}{N}$ is the ``proposal'' corresponding to not resampling.
When the \gls{ESS} falls below our pre-set threshold, we use the standard 
update \cref{eq:smc:proposal,eq:smc:weights} with resampling instead.

Adaptive resampling is usually combined with the low-variance 
resampling techniques explained above for further variance reduction.

\begin{remark}
	The constant factors in the weight expression will cancel when normalizing the weights. However, these constants still need to be included for the expression for the normalizing constant estimate \cref{eq:smc:zhat} to be valid. That is, the extra importance correction added at those iterations when we do not resample should be the ratio of the \emph{normalized weights} from the previous iteration, divided by the constant $\nicefrac{1}{N}$. An alternative approach appearing in the literature is to neglect the constants when computing the weights, but then modify the expression for the normalizing constant estimate \cref{eq:smc:zhat} instead.
\end{remark}


\section{Analysis and Convergence}\label{sec:analysisconvergence}
Since its conception in the 1990s, significant effort has been spent 
on studying the theoretical properties of \gls{SMC} methods. We will 
review and discuss a few select results in this section. For an early 
review of the area, see \eg \citet{delMoral2004}. The theorems we 
discuss below all hold for a number of conditions on the proposal, 
probabilistic model, and test functions. For brevity we have omitted these 
exact conditions and refer to the cited proofs for details.

\paragraph{Unbiasedness}
One of the key properties of \gls{SMC} approximations is that they provide 
\emph{unbiased} approximations of integrals of functions $\fun_\tm$
with respect to the \emph{unnormalized} target distribution $\umod_\tm$. We 
formalize this in \cref{thm:unbiasedness}.
\begin{theorem}[Unbiasedness]
    \begin{align*}
    \Exp\left[\prod_{\km=1}^{\tm} \left( \frac{1}{N}\sum_{\jp=1}^\Np 
    \uwp_{\km}^\jp \right) \cdot \sum_{\ip=1}^{\Np} \nwp_{\tm}^{\ip} \fun_\tm( \xp_{1:\tm}^{\ip} )\right]
    = \int \fun_\tm( \xp_{1:\tm} ) \umod_\tm( \xp_{1:\tm} ) 
    \dif \xp_{1:\tm}
    \end{align*}
    \label{thm:unbiasedness}
\end{theorem}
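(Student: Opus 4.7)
The plan is to prove this by induction on $\tm$, using the tower property of conditional expectation to integrate out the particles at step $\tm$ given all earlier particles, and then applying the inductive hypothesis with a cleverly chosen test function.

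\textbf{Setup.} First I would simplify the left-hand side: since $\nwp_\tm^i = \uwp_\tm^i / \sum_j \uwp_\tm^j$, the product telescopes so that
\begin{align*}
\prod_{\km=1}^{\tm} \Bigl(\tfrac{1}{N}\sum_{\jp} \uwp_\km^\jp\Bigr) \cdot \sum_{\ip=1}^{\Np} \nwp_\tm^\ip \fun_\tm(\xp_{1:\tm}^\ip)
= \prod_{\km=1}^{\tm-1} \Bigl(\tfrac{1}{N}\sum_\jp \uwp_\km^\jp\Bigr) \cdot \tfrac{1}{N}\sum_{\ip=1}^{\Np} \uwp_\tm^\ip \fun_\tm(\xp_{1:\tm}^\ip),
\end{align*}
so it is this quantity whose expectation I want to identify with $\int \fun_\tm(\xp_{1:\tm}) \umod_\tm(\xp_{1:\tm})\, \dif \xp_{1:\tm}$. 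The induction hypothesis is exactly the theorem statement at step $\tm - 1$, but allowed to be applied with an arbitrary (integrable) test function of $\xp_{1:\tm-1}$.

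\textbf{Base case.} At $\tm=1$ there is no resampling, and $\xp_1^\ip \iidsim \prop_1$. Then
\begin{align*}
\Exp\Bigl[\tfrac{1}{N}\sum_\ip \uwp_1^\ip \fun_1(\xp_1^\ip)\Bigr]
= \Exp_{\prop_1}\!\Bigl[\tfrac{\umod_1(\xp_1)}{\prop_1(\xp_1)}\fun_1(\xp_1)\Bigr]
= \int \fun_1(\xp_1)\,\umod_1(\xp_1)\,\dif \xp_1,
\end{align*}
exactly as in the unbiasedness calculation for plain \gls{IS} given earlier in the excerpt.

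\textbf{Inductive step.} Let $\mathcal{F}_{\tm-1}$ denote the sigma-algebra generated by all particles and weights up to time $\tm-1$. Conditionally on $\mathcal{F}_{\tm-1}$, each new particle is obtained by sampling an ancestor $\xp_{1:\tm-1}^\ip$ from $\widehat\nmod_{\tm-1}$ and then drawing $\xp_\tm^\ip \sim \prop_\tm(\cdot\given \xp_{1:\tm-1}^\ip)$, independently across $\ip$. A direct calculation then gives, for each $\ip$,
\begin{align*}
\Exp\bigl[\uwp_\tm^\ip \fun_\tm(\xp_{1:\tm}^\ip) \bigm| \mathcal{F}_{\tm-1}\bigr]
= \sum_{\jp=1}^\Np \nwp_{\tm-1}^\jp \, \bar{\fun}_{\tm-1}(\xp_{1:\tm-1}^\jp),
\end{align*}
where I define the ``reduced'' test function
\begin{align*}
\bar{\fun}_{\tm-1}(\xp_{1:\tm-1}) \defeq \frac{1}{\umod_{\tm-1}(\xp_{1:\tm-1})}\int \umod_\tm(\xp_{1:\tm})\,\fun_\tm(\xp_{1:\tm})\,\dif \xp_\tm,
\end{align*}
the proposal $\prop_\tm$ cancelling against its appearance in $\uwp_\tm^\ip$.

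\textbf{Closing the induction.} Using the tower property, the quantity of interest becomes
\begin{align*}
\Exp\Bigl[\,\prod_{\km=1}^{\tm-1}\bigl(\tfrac{1}{N}\sum_\jp \uwp_\km^\jp\bigr)\cdot \sum_{\jp=1}^\Np \nwp_{\tm-1}^\jp \,\bar{\fun}_{\tm-1}(\xp_{1:\tm-1}^\jp)\Bigr],
\end{align*}
which is exactly the left-hand side of the theorem at level $\tm-1$ with test function $\bar{\fun}_{\tm-1}$. By the inductive hypothesis this equals $\int \bar{\fun}_{\tm-1}(\xp_{1:\tm-1})\,\umod_{\tm-1}(\xp_{1:\tm-1})\,\dif \xp_{1:\tm-1}$, and the $\umod_{\tm-1}$ cancels the denominator in $\bar{\fun}_{\tm-1}$ to leave $\int \fun_\tm(\xp_{1:\tm})\,\umod_\tm(\xp_{1:\tm})\,\dif \xp_{1:\tm}$, as required.

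The main obstacle is purely bookkeeping: correctly accounting for the resampling step in the conditional expectation (in particular, how the $\nwp_{\tm-1}^\jp$ factors emerge from the multinomial draw of the ancestor) and recognising that the reduced test function $\bar{\fun}_{\tm-1}$ has exactly the right form to let the telescoping $\umod_\tm/\umod_{\tm-1}$ ratios collapse cleanly on integration. If adaptive or low-variance resampling is used, the same argument goes through provided the ancestor-index distribution still satisfies $\Exp[\,\cdot \mid \mathcal{F}_{\tm-1}]$ having marginals $\nwp_{\tm-1}^\jp$, which is why these schemes are explicitly designed to preserve this conditional mean.
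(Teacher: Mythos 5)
Your proof is correct, but it takes a genuinely different route from the paper's own self-contained argument. The paper (Appendix on the unbiasedness of $\hatZ_\Tm$) only treats the special case $\fun_\tm \equiv 1$ and does so by a single global computation: it writes down the full joint density $\qSMC$ of all particles and ancestor indices, introduces the lineage variables $\bp_\tm^\ip$ to trace one surviving trajectory through the telescoping product of weights, and marginalizes everything else out at once; the general test-function case is delegated to \citet[Theorem 7.4.2]{delMoral2004}. You instead run a backward induction with the tower property, strengthening the induction hypothesis to hold for an arbitrary test function and absorbing the one-step integral into the reduced function $\bar\fun_{\tm-1} = \umod_{\tm-1}^{-1}\int \umod_\tm \fun_\tm \dif\xp_\tm$, so that the $\umod_\tm/\umod_{\tm-1}$ ratios collapse on integration. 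Your route is shorter, covers general $\fun_\tm$ directly rather than by citation, and isolates exactly which property of the resampling step is used (conditionally on $\mathcal{F}_{\tm-1}$, the ancestor draw has marginal probabilities $\nwp_{\tm-1}^\jp$) — which is precisely the condition under which the result extends to stratified, systematic and adaptive resampling, as you note. What the paper's global computation buys in exchange is the explicit extended-space density $\qSMC$ and the lineage bookkeeping, which are reused later for \gls{PMMH}, \gls{PIMH} and the \gls{CSMC} inverse-normalizing-constant proof. The only loose end in your write-up is integrability of $\bar\fun_{\tm-1}$ when applying the induction hypothesis, but since the paper itself suppresses all regularity conditions this is consistent with its level of rigor.
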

\begin{proof}
See \citet[Theorem 7.4.2]{delMoral2004}. For the special case 
$\fun_\tm\equiv 1$ see also \cref{sec:unbiasedZ} and Appendix~\ref{sec:smc:unbiasedZ}.
\end{proof}
A particularly important special case is when $\fun_\tm \equiv 1$ and 
we approximate the normalization constant of $\umod_\tm$. We have that 
$\Exp[\widehat\Z_\tm] = \Z_\tm$, where the expectation is taken with 
respect to all the random variables generated by the \gls{SMC} 
algorithm. If we instead consider the more numerically stable 
$\log \widehat\Z_\tm$, we have by 
Jensen's inequality that $\Exp\left[\log \widehat\Z_\tm\right] \leq 
\log\Z_\tm$. This means that the estimator of the log-normalization 
constant is negatively biased. This is illustrated by the violin plot 
discussed in \cref{ex:pathdegen_smc}.

We will delve 
deeper into the applications of the unbiasedness property and its 
consequences in \cref{sec:pm}.

\paragraph{Laws of Large Numbers}
While integration with respect to unnormalized distributions can be 
estimated unbiasedly, this is unfortunately not true when estimating 
expectations with respect to the normalized target distribution $\nmod_\tm$. 
However, \gls{SMC} methods are still strongly consistent, leading to 
exact solutions when the number of particles $\Np$ tend to infinity. 
We formalize this law of large numbers in \cref{thm:lln}.
\begin{theorem}[Law of Large Numbers]
    \begin{align*}
    \widehat\nmod_\tm(\fun_\tm) \eqdef 
    \sum_{\ip=1}^{\Np} \nwp_{\tm}^{\ip} \fun_\tm( \xp_{1:\tm}^{\ip} )
    \convAS \nmod_\tm(\fun_\tm) = \int \fun_\tm( \xp_{1:\tm} ) \nmod_\tm( \xp_{1:\tm} ) 
    \dif \xp_{1:\tm}, ~\Np \to \infty
    \end{align*}
    \label{thm:lln}
\end{theorem}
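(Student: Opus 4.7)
The plan is to prove the stronger statement that, for a suitable class of test functions, the unnormalized empirical measure converges, namely
\begin{align*}
\frac{1}{\Np}\sum_{\ip=1}^{\Np} \uwp_\tm^\ip \fun(\xp_{1:\tm}^\ip) \convAS \Z_{\tm-1} \cdot \int \frac{\umod_\tm(\xp_{1:\tm})}{\umod_{\tm-1}(\xp_{1:\tm-1})} \fun(\xp_{1:\tm}) \prop_\tm(\xp_\tm \given \xp_{1:\tm-1}) \nmod_{\tm-1}(\xp_{1:\tm-1}) \dif \xp_{1:\tm},
\end{align*}
and then recover the theorem by specializing to $\fun_\tm$ and to $\fun \equiv 1$, taking the ratio via the continuous mapping theorem. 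Throughout I proceed by induction on $\tm$.

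For the base case $\tm=1$, the samples $\xp_1^\ip$ are \iid from $\prop_1$, so the classical strong law of large numbers applied to $\uwp_1^\ip \fun_1(\xp_1^\ip)$ and to $\uwp_1^\ip$ gives a.s. convergence of numerator and denominator to $\int \fun_1 \umod_1 \dif\xp_1$ and $\Z_1$, respectively. The ratio yields $\widehat\nmod_1(\fun_1) \convAS \nmod_1(\fun_1)$.

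For the inductive step, let $\mathcal{F}_{\tm-1}$ be the $\sigma$-algebra generated by the particle system up through iteration $\tm-1$. Conditional on $\mathcal{F}_{\tm-1}$, the pairs $\xp_{1:\tm}^\ip$ are \iid from $\widehat\nmod_{\tm-1}(\xp_{1:\tm-1})\prop_\tm(\xp_\tm \given \xp_{1:\tm-1})$. Applied conditionally, the strong law gives that $\frac{1}{\Np}\sum_\ip \uwp_\tm^\ip \fun(\xp_{1:\tm}^\ip)$ converges a.s., on the probability space restricted to conditioning on $\mathcal{F}_{\tm-1}$, to the integral
\begin{align*}
\sum_{\ip=1}^\Np \nwp_{\tm-1}^\ip \int \frac{\umod_\tm(\xp_{1:\tm-1}^\ip, \xp_\tm)}{\umod_{\tm-1}(\xp_{1:\tm-1}^\ip)} \fun(\xp_{1:\tm-1}^\ip, \xp_\tm) \dif \xp_\tm,
\end{align*}
which is of the form $\widehat\nmod_{\tm-1}(\tilde\fun)$ for the test function $\tilde\fun(\xp_{1:\tm-1}) \eqdef \int \frac{\umod_\tm(\xp_{1:\tm})}{\umod_{\tm-1}(\xp_{1:\tm-1})}\fun(\xp_{1:\tm})\dif\xp_\tm$. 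The inductive hypothesis, applied to $\tilde\fun$, then yields convergence to $\nmod_{\tm-1}(\tilde\fun) = \frac{\Z_\tm}{\Z_{\tm-1}}\nmod_\tm(\fun)$. Taking $\fun = \fun_\tm$ and $\fun \equiv 1$ and forming the ratio $\widehat\nmod_\tm(\fun_\tm)$ finishes the induction.

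The main technical obstacle is that the ``conditional a.s.\ convergence'' and the ``inductive a.s.\ convergence'' are two limit statements that must be combined into a single a.s.\ statement on the joint probability space. The cleanest way around this is to strengthen the induction hypothesis from a.s.\ convergence to $L^p$ convergence (typically $p=2$), for which Minkowski's and Jensen's inequalities permit a clean recursion: conditional on $\mathcal{F}_{\tm-1}$, the variance of the empirical average is $\ordo(1/\Np)$ times a bounded conditional moment, and the inductive bound on that moment closes the recursion. Strong consistency can then be recovered via standard Borel--Cantelli arguments once an $L^p$ rate is established, avoiding the subtle issue of commuting conditional a.s.\ limits with the inductive a.s.\ limit. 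Boundedness assumptions on $\fun_\tm$ and on the incremental weights, which are absorbed into the unstated regularity conditions, are what make this moment recursion go through.
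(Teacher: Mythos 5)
The paper itself offers no proof of this theorem---it defers entirely to \citet[Theorem 7.4.3]{delMoral2004}---so your sketch is a self-contained alternative, and it is the standard one (the Crisan--Doucet / Del Moral-style induction). The skeleton is right: induct on $\tm$, establish convergence of the unnormalized averages $\frac{1}{\Np}\sum_{\ip}\uwp_\tm^\ip\fun(\xp_{1:\tm}^\ip)$, and pass to the self-normalized ratio by continuous mapping. The body of your inductive step is also correct: conditionally on $\mathcal{F}_{\tm-1}$ the expectation of the average is $\widehat\nmod_{\tm-1}(\tilde\fun)$ with $\tilde\fun(\xp_{1:\tm-1}) = \int \umod_\tm(\xp_{1:\tm})\fun(\xp_{1:\tm})\dif\xp_\tm\,/\,\umod_{\tm-1}(\xp_{1:\tm-1})$, whose inductive limit is $\nmod_{\tm-1}(\tilde\fun) = (\Z_\tm/\Z_{\tm-1})\,\nmod_\tm(\fun)$, and this constant cancels in the ratio. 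Your opening display, however, is inconsistent with this: evaluated literally it leaves an uncancelled $\prop_\tm$ inside the integral and is off by a factor of $\Z_{\tm-1}$; the correct limit of the unnormalized average is $(\Z_\tm/\Z_{\tm-1})\nmod_\tm(\fun)$, exactly as your later computation says. This is harmless for the theorem (only the ratio matters) but should be fixed.

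Two substantive points. First, you are right to flag the ``conditional SLLN'' step as the crux, and your diagnosis is accurate: conditionally on $\mathcal{F}_{\tm-1}$ the step-$\tm$ particles are \iid, but $\mathcal{F}_{\tm-1}$ itself depends on $\Np$, so one faces a triangular array rather than a single \iid sequence and the strong law cannot be invoked as stated; replacing the a.s.\ induction by a conditional moment recursion is the standard and correct repair. Second, the quantitative claim needs tightening: an $L^2$ rate of $\ordo(1/\Np)$ for the squared error is \emph{not} summable in $\Np$, so Borel--Cantelli at $p=2$ only yields convergence in probability (or a.s.\ convergence along the subsequence $\Np = k^2$), not the almost-sure statement of the theorem. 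One needs $p \geq 4$---for instance a conditional Marcinkiewicz--Zygmund/Burkholder bound giving a fourth-moment error of order $\Np^{-2}$ at each step, which the same recursion delivers under the boundedness assumptions on $\fun_\tm$ and the incremental weights that you invoke. With $p=4$ substituted for $p=2$, the argument closes and matches the cited result (for multinomial resampling, which is what makes the conditional \iid structure you rely on valid).
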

\begin{proof}
See \citet[Theorem 7.4.3]{delMoral2004}.
\end{proof}

\paragraph{Central Limit Theorem} While the law of large numbers from 
the previous section shows that \gls{SMC} approximations are exact in 
the limit of infinite computation, it tells us nothing about the 
quality of our estimate. The \gls{CLT} in \cref{thm:esmc:clt} tells us about 
the limiting distribution of our \gls{SMC} estimate and its asymptotic 
variance. This gives us a first approach to get an understanding for 
the precision of our \gls{SMC} approximation to $\nmod_\tm(\fun_\tm)$.
\begin{theorem}[Central Limit Theorem]
    \begin{align*}
    &\sqrt{\Np}\left(
    \widehat\nmod_\tm(\fun_\tm)
    - \nmod_\tm(\fun_\tm) 
    \right) \convD 
    \Norm\left(0, \varp_{\tm} \left(\fun_\tm\right) \right), ~\Np \to 
    \infty,
    \end{align*}
    where $\varp_\tm(\cdot)$ is defined recursively for a measurable 
    function $h$,
    \begin{align*}
    \varp_{\tm}\left(\fun \right) &= 
    \widetilde\varp_{\tm}\left(\nwp_\tm'(\xp_{1:\tm})
    \left(\fun(\xp_{1:\tm}) - \nmod_\tm(\fun)\right)\right), ~\tm \geq 
    1,\\
    \widetilde\varp_{\tm}(\fun) &= 
    \widehat\varp_{\tm-1}\left(\Exp_{\prop_\tm(\xp_\tm\given\xp_{1:\tm-1})}
    \left[\fun(\xp_{1:\tm})\right]\right) + 
    \Exp_{\nmod_{\tm-1}}\left[\Var_{\prop_\tm(\xp_\tm\given\xp_{1:\tm-1})}(\fun)\right],
    ~\tm > 1, \\
    \widehat \varp_\tm(\fun) &= \varp_\tm(\fun) + 
    \Var_{\nmod_\tm}(\fun), ~\tm \geq 1,
    \end{align*}
    initialized with $\widetilde\varp_1(\fun) = \Var_{\prop_1}(\fun)$ 
    and where 
    \begin{align*}
    \nwp_\tm'(\xp_{1:\tm}) &= 
    \frac{\nmod_\tm(\xp_{1:\tm})}{\nmod_{\tm-1}(\xp_{1:\tm-1}) 
    \prop_\tm(\xp_\tm\given\xp_{1:\tm-1})}, \\
    \nwp_1'(\xp_{1:\tm}) &= \frac{\nmod_1(\xp_1)}{\prop_1(\xp_1)}.
    \end{align*}
    \label{thm:esmc:clt}
\end{theorem}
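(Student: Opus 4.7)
The plan is to prove the theorem by induction on $\tm$, propagating a Gaussian limit through the three operations performed at each iteration: resampling, propagation, and self-normalized importance weighting. Throughout I would condition on the filtration $\mathcal{F}_{\tm-1}$ generated by all randomness up to iteration $\tm-1$, pick up one independent Gaussian source of noise at each step, and pass from unnormalized to normalized estimators only at the end via the delta method; this is what produces the centering $\fun_\tm - \nmod_\tm(\fun_\tm)$ that appears inside $\varp_\tm$.

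For the base case $\tm = 1$, the samples $\xp_1^\ip$ are \iid from $\prop_1$, so applying the classical \gls{CLT} to the pair $\bigl(\tfrac{1}{\Np}\sum_\ip \uwp_1^\ip \fun_1(\xp_1^\ip),\,\tfrac{1}{\Np}\sum_\ip \uwp_1^\ip\bigr)$ and then the delta method for the ratio yields asymptotic normality with variance $\Var_{\prop_1}(\nwp_1'(\xp_1)(\fun_1(\xp_1) - \nmod_1(\fun_1)))$, which matches $\varp_1(\fun_1) = \widetilde\varp_1(\nwp_1'(\fun_1 - \nmod_1(\fun_1)))$.

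For the inductive step I would decompose the error at iteration $\tm$ into three pieces picking up $\widehat\varp_{\tm-1}$, $\widetilde\varp_\tm$, and $\varp_\tm$ in turn. In step (a), conditionally on $\mathcal{F}_{\tm-1}$ the resampled particles are \iid from $\widehat\nmod_{\tm-1}$, so a conditional \gls{CLT} gives a fluctuation whose conditional variance converges to $\Var_{\nmod_{\tm-1}}(\fun)$; asymptotic independence from the Gaussian of the induction hypothesis (which is $\mathcal{F}_{\tm-1}$-measurable) lets variances add, producing the identity $\widehat\varp_{\tm-1}(\fun) = \varp_{\tm-1}(\fun) + \Var_{\nmod_{\tm-1}}(\fun)$. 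In step (b), sampling $\xp_\tm^\ip \sim \prop_\tm(\cdot \given \bar\xp_{1:\tm-1}^\ip)$ independently across $\ip$ contributes, by another conditional \gls{CLT}, the average conditional variance $\Exp_{\nmod_{\tm-1}}[\Var_{\prop_\tm(\xp_\tm \given \xp_{1:\tm-1})}(\fun)]$ on top of $\widehat\varp_{\tm-1}(\Exp_{\prop_\tm}[\fun])$; this is precisely $\widetilde\varp_\tm(\fun)$. In step (c), I rewrite the normalized estimator through the identity
\begin{align*}
\widehat\nmod_\tm(\fun_\tm) - \nmod_\tm(\fun_\tm) &= \frac{\tfrac{1}{\Np}\sum_\ip \uwp_\tm^\ip \bigl(\fun_\tm(\xp_{1:\tm}^\ip) - \nmod_\tm(\fun_\tm)\bigr)}{\tfrac{1}{\Np}\sum_\ip \uwp_\tm^\ip},
\end{align*}
identify the numerator (up to a denominator that converges in probability to $\nicefrac{\Z_\tm}{\Z_{\tm-1}}$) as the empirical measure from step (b) evaluated at the test function $\nwp_\tm'(\xp_{1:\tm})(\fun_\tm(\xp_{1:\tm}) - \nmod_\tm(\fun_\tm))$, and apply the step (b) recursion to this new test function to obtain $\varp_\tm(\fun_\tm) = \widetilde\varp_\tm(\nwp_\tm'(\fun_\tm - \nmod_\tm(\fun_\tm)))$.

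The main obstacle is cleanly combining the nested conditional \glspl{CLT}: one must show that the Gaussians introduced at steps (a) and (b), both constructed by conditioning on different sub-filtrations, combine with the inductive Gaussian into a single Gaussian whose variance is the sum of the three contributions. This calls for a Slutsky / Cram\'er--Wold argument together with uniform integrability conditions that justify the convergences $\Var_{\widehat\nmod_{\tm-1}}(\fun) \to \Var_{\nmod_{\tm-1}}(\fun)$ and $\widehat\nmod_{\tm-1}(\Var_{\prop_\tm}(\fun)) \to \Exp_{\nmod_{\tm-1}}[\Var_{\prop_\tm}(\fun)]$; these are the regularity hypotheses that the theorem statement suppresses and that are carried out in full in \citet[Ch.~9]{delMoral2004}.
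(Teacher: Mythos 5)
The paper offers no proof of its own here---it simply defers to \citet{chopin2004}---and your inductive scheme (alternating conditional \glspl{CLT} for the resampling, propagation, and self-normalized reweighting steps, with the three recursive variance operators $\widehat\varp$, $\widetilde\varp$, $\varp$ arising from exactly those three operations) is precisely the strategy of that cited proof, with the base case and the identification $\uwp_\tm/(\Z_\tm/\Z_{\tm-1}) = \nwp_\tm'$ handled correctly. Your sketch honestly flags the one piece of real technical work that remains, namely the conditional CLT for triangular arrays needed to merge the nested Gaussian contributions, which is where the suppressed regularity conditions enter.
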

\begin{proof}
See \citet{chopin2004}.
\end{proof}

\begin{example}[State Space Model]
We study the following example of a state space model
\begin{align*}
	p(\xm,\ym) &= \prod_{\km=1}^{\dimX} p(\xp_{1,\km}) \gmod(\y_{1,\km}\given\xp_{1,\km}) \cdot \prod_{\lm=2}^\tm \left[ \prod_{\km=1}^{\dimX}  \fmod(\xp_{\lm,\km}\given\xp_{\lm-1,\km}) \gmod(\y_{\lm,\km}\given\xp_{\lm,\km}) \right],
\end{align*}
where $\dimX$ denotes the dimension of $\xp_\lm$ and $\y_\lm$. For simplicity assume that $\y_{\lm,\km}=\y_{\lm,\mm}$, $\forall \km, \mm$ and that $\Exp_{p(\xm\given\ym)}\left[\xm\right] = 0$. Let the function of interest be $\fun_{\tm}(\xp_{1:\tm}) = \sum_{\km=1}^{\dimX} \xp_{\tm,\km}$, \ie the sum of the components of the state at the most recent iteration. Studying the asymptotic variance in \cref{thm:esmc:clt} for the so called \emph{fully adapted} \gls{SMC}, explained in \cref{sec:twist:learn}, we get
\begin{align*}
	 \varp_{\tm} \left(\fun_\tm\right) &= \dimX A_{\tm} + \sum_{\lm=1}^{\tm-1} \dimX B_{\lm}^{\dimX-1} A_{\lm} + \dimX (\dimX-1)B_{\lm}^{\dimX-2} C_{\lm}^2, 
\end{align*}
where the constants $A_{\lm}$, $B_{\lm}$, and $C_{\lm}$ are defined by
\begin{align*}
	 A_\tm &= \int \xp_{\tm,\km}^2 p(\xp_{1:\tm,\km} \given \y_{1:\tm,\km}) \dif \xp_{1:\tm,\km}, \\
	 A_{\lm} &= \int \frac{p(\xp_{1:\lm,\km} \given \y_{1:\tm,\km})^2}{p(\xp_{1:\lm,\km} \given \y_{1:\lm,\km})} \left(\int \xp_{\tm,\km} p(\xp_{\tm,\km}\given \xp_{\lm,\km}, \y_{1:\tm,\km}) \dif \xp_{\tm,\km}\right)^2 \dif \xp_{1:\lm,\km}, \\
	 B_{\lm} &= \int \frac{p(\xp_{1:\lm,\km} \given \y_{1:\tm,\km})^2}{p(\xp_{1:\lm,\km} \given \y_{1:\lm,\km})}\dif \xp_{1:\lm,\km}, \\
	 C_{\lm} &=  \int \frac{p(\xp_{1:\lm,\km} \given \y_{1:\tm,\km})^2}{p(\xp_{1:\lm,\km} \given \y_{1:\lm,\km})} \int \xp_{\tm,\km} p(\xp_{\tm,\km}\given \xp_{\lm,\km}, \y_{1:\tm,\km}) \dif \xp_{\tm,\km} \dif \xp_{1:\lm,\km}.
\end{align*}
For the proof see \citet[Proposition 3]{naesseth2016high}. 

This result tells us that the asymptotic variance is exponential in $\dimX$, the dimension of $\xp_\tm$. Furthermore, it tells us that for the asymptotic variance to be stable in $\tm$, \ie to not be increasing with increasing $\tm$,  we need that our model forgets its initial conditions. Concretely, we need the inner expectations in $A_\lm$ and $C_\lm$ to tend to $0$ (the posterior mean) quickly enough to be able to ensure that we can bound the sum by a (finite) constant that does not depend on $\tm$. This holds more generally (under various assumptions on the model and proposal) for approximations of the filtering distribution, $\nmod_{\tm}(\xp_\tm) = p(\xp_\tm\given\y_{1:\tm})$, of \glspl{SSM}. We formalize the result in \cref{prop:clt:ssm}.
\label{ex:clt:ssm}
\end{example}
\begin{proposition}[Asymptotic Variance for Filtering Approximation in \GLS{SSM}]
Under appropriate forgetting conditions on the model
 	\begin{align*}
 		\varp_{\tm} \left(\fun(\xp_\tm)\right) \leq \Var_{\nmod_\tm}\left(\fun(\xp_\tm)\right) + \|\fun\|^2 \mathrm{c},
 	\end{align*}
 	for some finite constant $\mathrm{c}$.
    \label{prop:clt:ssm}
\end{proposition}
\begin{proof}
See \citet{whiteley2013stability}.
\end{proof}
\cref{prop:clt:ssm} shows that the asymptotic variance can be stable as a function of $\tm$, this is in general only true for test functions that depend only on the most recent latent variable $\xp_\tm$ and under various assumptions on the model and proposal \citep{del2001stability,delMoral2004,whiteley2013stability}. When considering the variance of $\nicefrac{\hatZ_\tm}{\Z_\tm}$ of \glspl{SSM}, it is instead possible to show that the variance increases only linearly with $\tm$ \citep{whiteley2013stability,delMoral2004}. This implies that to obtain a good estimate of the normalization constant, we must choose the number of particles $\Np$ to scale at least linearly in $\tm$.

\paragraph{Sample Bounds} 
Another way of looking at the \gls{SMC} method, disregarding test 
functions, is as a direct approximation to the target distribution 
itself. The weights and samples generated by the \gls{SMC} algorithm provide an approximation
$\widehat{\nmod}_\tm\left(\xp_{1:\tm}\right)$, \cref{eq:smc:target_approx}, to the true target distribution $\nmod_\tm(\xp_{1:\tm})$.

With this point of view we can study bounds on the difference between the
distribution of a sample drawn from the \gls{SMC} approximation compared to
that of a sample drawn from the target distribution. Specifically, assume that we generate 
a sample $\xp_{1:\tm}'$ by first running an \gls{SMC} sampler to 
generate an approximation $\widehat\nmod_\tm$ of $\nmod_\tm$, and then 
simulate $\xp_{1:\tm}' \sim \widehat\nmod_\tm$. Then, the marginal 
distribution of $\xp_{1:\tm}'$ is $\Exp[\widehat\nmod_\tm]$, where the 
expectation is taken with respect to all random variables generated by 
the \gls{SMC} algorithm. It is worth noting that this distribution, 
$\Exp[\widehat\nmod_\tm]$, may be continuous despite the fact that 
$\widehat\nmod_\tm$ is a point-mass distribution by construction. 
In \cref{thm:samplebound} we restate a generic sample 
bound on the \gls{KL} divergence from the expected \gls{SMC} approximation 
$\Exp\left[\widehat \nmod_\tm\right]$ to the target distribution $\nmod_\tm$. The \gls{KL} divergence from a distribution $q$ to another distribution $p$ is defined by $\Kl\,\left(p(\xp) \| q(\xp)\right) \eqdef \int p(\xp) \left(\log p(\xp) - \log q(\xp)\right) \dif \xp$.
\begin{theorem}[Sample Bound]
    \begin{align*}
     \Kl\,\left(\Exp\left[\widehat \nmod_\tm\right] \| \nmod_\tm \right) \leq 
     \frac{\mathcal{C}}{\Np},
    \end{align*}
    for a finite constant $\mathcal{C}$.
    \label{thm:samplebound}
\end{theorem}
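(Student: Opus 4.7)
The plan is to lift the problem to the extended probability space of all particles and ancestor indices produced by \cref{alg:esmc:smc}, where \gls{SMC} becomes a single importance sampler, and then exploit the unbiasedness of $\widehat\Z_\tm$ established in \cref{thm:unbiasedness}.

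First, I would note that a draw from $\Exp\left[\widehat\nmod_\tm\right]$ can be realised operationally by running \gls{SMC} to produce the collection $\bar\xp = \{\xp_{1:\tm}^{1:\Np}, \ap_{1:\tm-1}^{1:\Np}\}$, then sampling a terminal index $B$ with probability $\nwp_\tm^{B}$ and tracing the ancestor lineage back to recover a full path $\xp_{1:\tm}^{B_{1:\tm}}$. Let $\qSMC$ denote the joint law of $(\bar\xp,B)$ under this procedure; by construction the marginal of $\qSMC$ in the $\xp_{1:\tm}^{B_{1:\tm}}$ coordinate is exactly $\Exp[\widehat\nmod_\tm]$.

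Next I would introduce an extended target $\bar\nmod_\tm$ on the same space whose marginal in $\xp_{1:\tm}^{B_{1:\tm}}$ is $\nmod_\tm$. This is precisely the conditional \gls{SMC} construction discussed in \cref{sec:csmc}: draw the reference trajectory from $\nmod_\tm$, assign its lineage $B_{1:\tm}$ uniformly on $\{1,\dots,\Np\}^\tm$, and populate the remaining $\Np-1$ particles at each generation by the usual \gls{SMC} mechanism. A direct calculation, parallel to the one underpinning \cref{thm:unbiasedness}, yields the clean Radon--Nikodym identity
\begin{align*}
\frac{\bar\nmod_\tm(\bar\xp, B)}{\qSMC(\bar\xp, B)} = \frac{\widehat\Z_\tm}{\Z_\tm}.
\end{align*}
Now the data processing inequality for KL divergence, applied to the projection onto the $\xp_{1:\tm}^{B_{1:\tm}}$ coordinate, gives
\begin{align*}
\Kl\!\left(\Exp\!\left[\widehat\nmod_\tm\right]\;\|\;\nmod_\tm\right)
\;\leq\; \Kl\!\left(\qSMC\;\|\;\bar\nmod_\tm\right)
\;=\; -\,\Exp\!\left[\log\!\left(\frac{\widehat\Z_\tm}{\Z_\tm}\right)\right],
\end{align*}
so the problem reduces to bounding the log-bias of the normalizing-constant estimator.

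Writing $R \eqdef \widehat\Z_\tm/\Z_\tm$, \cref{thm:unbiasedness} gives $\Exp[R]=1$, and a second-order Taylor expansion of $-\log(1+u)$ around $u=0$ together with a tail truncation gives $-\Exp[\log R] = \ordo(\Var R)$. Invoking the \gls{CLT} of \cref{thm:esmc:clt} with $\fun_\tm\equiv 1$ (or the elementary variance bound propagated through the recursion) yields $\Var R = \ordo(1/\Np)$, producing the claimed rate with a constant $\mathcal{C}$ depending on the asymptotic variance $\varp_\tm(1)/\Z_\tm^2$ of the normalizing-constant estimate. The main obstacle is the last step: since $-\log$ blows up at $0$, the passage from $-\Exp[\log R]$ to $\ordo(\Var R)$ is not free and requires moment control on $R$, for instance a uniform-in-$\Np$ $L^p$ bound on the normalized importance weights. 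These are precisely the unstated regularity conditions on the proposals and targets alluded to in the theorem, and they are what make $\mathcal{C}$ finite.
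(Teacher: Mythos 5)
The paper does not actually prove this theorem---it cites \citet[Theorem~8.3.2]{delMoral2004}, whose argument rests on propagation-of-chaos expansions of the particle block distributions and delivers the $\nicefrac{\mathcal{C}}{\Np}$ rate directly. Your route is genuinely different and, in fact, is the one the paper itself uses \emph{elsewhere}: the reduction $\Kl\left(\Exp[\widehat\nmod_\tm]\,\|\,\nmod_\tm\right) \leq -\Exp\left[\log\left(\widehat\Z_\tm/\Z_\tm\right)\right]$ is exactly \cref{eq:vsmc:ubound} in the \gls{VSMC} section, and your extended-target construction (reference path drawn from $\nmod_\tm$, lineage uniform on $\{1,\dots,\Np\}^\tm$, remaining particles filled in by the \gls{CSMC} mechanism, Radon--Nikodym derivative $\widehat\Z_\tm/\Z_\tm$) is precisely the computation carried out in the proofs of \cref{prop:pm:unbiased,prop:csmc:unbiased} and in Appendix~\ref{sec:csmc:supp:bound2}. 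So the first half of your argument is fully consistent with the paper's own machinery and is more self-contained than the citation; what it buys is an elementary, checkable derivation, at the cost of pushing all the hard analysis into a single scalar quantity, the log-bias of the normalizing-constant estimator. Del Moral's approach avoids that scalar bottleneck but requires considerably heavier combinatorial apparatus.

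The one genuine gap is the final step, which you correctly flag but do not close. Passing from $-\Exp[\log R]$ with $\Exp[R]=1$ to $\ordo(\Var R)$ requires $R = \widehat\Z_\tm/\Z_\tm$ to be bounded away from zero uniformly in $\Np$ (or at least uniform moment control of $1/R$), since $-\log$ is unbounded near the origin; and $\Var R = \ordo(\nicefrac{1}{\Np})$ is a \emph{non-asymptotic} statement that the \gls{CLT} of \cref{thm:esmc:clt} alone does not give you---you need the standard non-asymptotic $L^2$ error bounds for \gls{SMC}, which again hold under the assumption that the incremental weights are bounded above and below. Under those (omitted) regularity conditions both pieces go through, e.g.\ via $-\log(1+u)\leq -u + \tfrac{u^2}{2(1-\delta)^2}$ for $u\geq-\delta>-1$, yielding a finite $\mathcal{C}$ that in general grows with $\tm$. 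So the proof is completable, but the last step is where essentially all of the analytic work of the cited theorem is hiding.
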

\begin{proof}
See \citet[Theorem 8.3.2]{delMoral2004}.
\end{proof}
From this result we can conclude that in fact the \gls{SMC} 
approximation tends to the true target in distribution as the number 
of particles increase. 

Recently there has been an increased interest for using \gls{SMC} as 
an approximation to the target distribution, rather than just as a method 
for estimating expectations with respect to test functions. This point 
of view has found applications in \eg probabilistic programming and 
variational inference \citep{wood2014new,naesseth18a,huggins2015sequential}.


\chapter{Learning Proposals and Twisting Targets}\label{sec:proptwisting}
The two main design choices of \acrlong{SMC} methods are the proposal 
distributions $\prop_\tm$, $\tm \in \{1,\dots,\Tm\}$ and the intermediate (unnormalized) target distributions $\umod_\tm$, $\tm \in \{1,\dots,\Tm-1\}$. 
Carefully choosing these can drastically improve the efficiency of the 
algorithm and the accuracy of our estimation. Adapting both the proposal 
and the target distribution is especially important if the latent 
variables are high-dimensional.

In the first section below we discuss how to choose, or learn, the 
proposal distribution for a fixed target distribution. Then, in the final 
section we discuss how we can design a good sequence of intermediate 
target distributions for a given probabilistic model.

\section{Designing the Proposal Distribution}\label{sec:proposal}
The choice of the proposal distribution is perhaps the most important 
design choice for an 
efficient \acrlong{SMC} algorithm. A common choice is to 
propose samples from the model prior; this is simply known as the 
prior proposal. When using the prior as proposal the algorithm is commonly known as the bootstrap particle filter or bootstrap \gls{SMC} \citep{GordonSS:1993}. However, using the prior can lead to poor 
approximations for a small number of particles, especially if the 
latent space is high-dimensional. 

We will in this section derive the locally (one-step) \emph{optimal} 
proposal distribution, or \emph{optimal} proposal for short. 
Because it is typically intractable, we further 
discuss various ways to either emulate it or approximate it directly. 
Finally, we discuss alternatives to learn efficient proposal 
distributions using an end-to-end variational perspective.

\subsection{Locally Optimal Proposal Distribution}
The locally optimal proposal distribution \citep{doucet2000sequential} is the distribution we obtain 
if we assume that we have a perfect approximation at iteration 
$\tm-1$. 
Then, choose 
the proposal $\prop_{\tm}$ that minimizes the \gls{KL} divergence from 
the joint distribution
$\nmod_{\tm-1}(\xp_{1:\tm-1})\prop_{\tm}(\xp_\tm \given \xp_{1:\tm-1})$ 
to $\nmod_{\tm}(\xp_{1:\tm})$. We formalize this result in 
\cref{thm:optimalprop}. Equivalently, we can view this as the proposal minimizing the variance 
of the incremental weights, \ie 
$\nicefrac{\uwp_\tm^\ip}{\uwp_{\tm-1}^\ip}$, 
with respect to the newly generated samples $\xp_{\tm}^\ip$. 
\begin{proposition}[Locally optimal proposal distribution]
The optimal proposal $\prop_\tm^\star(\xp_\tm\given\xp_{1:\tm-1})$ 
minimizing $\Kl\left(\nmod_{\tm-1}(\xp_{1:\tm-1})\prop_{\tm}(\xp_\tm 
\given \xp_{1:\tm-1}) \| \nmod_{\tm}(\xp_{1:\tm})\right)$ is given by
\begin{align}
    \prop_\tm^\star(\xp_\tm \given \xp_{1:\tm-1}) &= 
    \nmod_{\tm}(\xp_{\tm} \given \xp_{1:\tm-1}) =
    \frac{\umod_\tm(\xp_{1:\tm})}{\umod_\tm(\xp_{1:\tm-1})},
    \label{eq:prop:localoptimal}
\end{align}
where $\umod_\tm(\xp_{1:\tm-1}) = \int \umod_\tm(\xp_{1:\tm}) \dif 
\xp_\tm$.
\label{thm:optimalprop}
\end{proposition}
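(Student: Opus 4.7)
The plan is to expand the KL divergence using the chain rule and isolate the terms depending on $\prop_\tm$. To that end, I would factor the second argument of the KL as $\nmod_\tm(\xp_{1:\tm}) = \nmod_\tm(\xp_{1:\tm-1})\,\nmod_\tm(\xp_\tm \given \xp_{1:\tm-1})$, where $\nmod_\tm(\xp_{1:\tm-1})$ denotes the marginal obtained by integrating out $\xp_\tm$. The first argument is already given in factored form, so the logarithm in the KL integrand splits additively into a term involving only $\xp_{1:\tm-1}$ and a term involving $\xp_\tm$ conditionally.

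Carrying out this decomposition yields
\begin{align*}
    \Kl\bigl(\nmod_{\tm-1}\prop_\tm \,\|\, \nmod_\tm\bigr)
    &= \Kl\bigl(\nmod_{\tm-1}(\xp_{1:\tm-1}) \,\|\, \nmod_\tm(\xp_{1:\tm-1})\bigr) \\
    &\quad + \Exp_{\nmod_{\tm-1}(\xp_{1:\tm-1})}\!\Bigl[\Kl\bigl(\prop_\tm(\xp_\tm \given \xp_{1:\tm-1}) \,\|\, \nmod_\tm(\xp_\tm \given \xp_{1:\tm-1})\bigr)\Bigr],
\end{align*}
after which the proof is essentially immediate: the first term does not depend on $\prop_\tm$, and the second is a nonnegative expectation of KL divergences, minimized (at zero) precisely when $\prop_\tm(\xp_\tm \given \xp_{1:\tm-1}) = \nmod_\tm(\xp_\tm \given \xp_{1:\tm-1})$ for $\nmod_{\tm-1}$-almost every $\xp_{1:\tm-1}$.

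It then remains to rewrite the conditional $\nmod_\tm(\xp_\tm \given \xp_{1:\tm-1})$ in terms of the unnormalized target $\umod_\tm$, matching the form stated in the proposition. Since $\nmod_\tm = \umod_\tm/\Z_\tm$, the normalization constant cancels in both numerator and denominator of $\nicefrac{\nmod_\tm(\xp_{1:\tm})}{\nmod_\tm(\xp_{1:\tm-1})}$, and the marginal in the denominator becomes $\umod_\tm(\xp_{1:\tm-1}) = \int \umod_\tm(\xp_{1:\tm})\dif \xp_\tm$, exactly as stated.

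The main obstacle is not mathematical but notational: one must carefully distinguish the joint $\nmod_\tm(\xp_{1:\tm})$ from its marginal $\nmod_\tm(\xp_{1:\tm-1})$ and from the different distribution $\nmod_{\tm-1}(\xp_{1:\tm-1})$ (which in general are not equal, so the first KL term above is not trivially zero). Once this is set up correctly, the chain-rule decomposition and nonnegativity of KL do all the work.
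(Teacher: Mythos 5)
Your proposal is correct and follows essentially the same route as the paper: both reduce the objective to a $\prop_\tm$-independent term plus $\Exp_{\nmod_{\tm-1}}\bigl[\Kl\bigl(\prop_\tm(\xp_\tm\given\xp_{1:\tm-1})\,\|\,\nmod_\tm(\xp_\tm\given\xp_{1:\tm-1})\bigr)\bigr]$ and conclude by nonnegativity of the conditional KL. The only (harmless) difference is that you explicitly identify the constant term as the marginal divergence $\Kl\bigl(\nmod_{\tm-1}(\xp_{1:\tm-1})\,\|\,\nmod_\tm(\xp_{1:\tm-1})\bigr)$, whereas the paper simply absorbs it into an unspecified constant.
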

\begin{proof}
If we let ``$\text{const}$'' denote terms constant with respect to 
the proposal distribution $\prop_{\tm}(\xp_\tm \given \xp_{1:\tm-1})$, we get
    \begin{align*}
        &\Kl\left(\nmod_{\tm-1}(\xp_{1:\tm-1})\prop_{\tm}(\xp_\tm \given 
        \xp_{1:\tm-1}) \| \nmod_{\tm}(\xp_{1:\tm})\right)  \\
        &= \Exp_{\nmod_{\tm-1} \prop_\tm}\left[\log \prop_{\tm}(\xp_\tm \given 
        \xp_{1:\tm-1}) - \log \nmod_\tm(\xp_{1:\tm})\right] + 
        \text{const} \\
        &= \Exp_{\nmod_{\tm-1} \prop_\tm}\left[\log \prop_{\tm}(\xp_\tm \given 
        \xp_{1:\tm-1}) - \log \nmod_\tm(\xp_\tm \given \xp_{1:\tm-1})\right] + 
        \text{const} \\
        &=\Exp_{\nmod_{\tm-1}(\xp_{1:\tm-1})}\left[\Kl\left(\prop_{\tm}(\xp_\tm \given 
        \xp_{1:\tm-1}) \| \nmod_\tm(\xp_\tm \given \xp_{1:\tm-1})\right)\right] + 
        \text{const},
    \end{align*}
where the inner (conditional) \acrlong{KL} divergence is zero if and only 
if $\prop_{\tm}(\xp_\tm \given \xp_{1:\tm-1}) \equiv \nmod_\tm(\xp_\tm 
\given \xp_{1:\tm-1})$.
\end{proof}

In \cref{ex:optprop:running} we show that the optimal proposal is 
analytically tractable for our running non-Markovian Gaussian example.
\begin{example}[Optimal proposal for \cref{ex:running}]
If we let $\umod_\tm(\xp_{1:\tm}) = 
\umod_{\tm-1}(\xp_{1:\tm-1}) \fmod(\xp_{\tm}\given\xp_{\tm-1}) 
\gmod(\y_{\tm} \given \xp_{1:\tm})$, then the (locally) optimal proposal for our 
running example is analytically tractable. It is
\begin{align*}
    \prop_\tm^\star(\xp_\tm \given \xp_{1:\tm-1}) &= \frac{
    \umod_\tm(\xp_{1:\tm})}{\umod_\tm(\xp_{1:\tm-1})} \propto
    \fmod(\xp_{\tm}\given\xp_{\tm-1}) \gmod(\y_{\tm} \given 
    \xp_{1:\tm})\\
    &\propto \Norm\left(\xp_\tm \given \frac{r\phi \xp_{\tm-1}+q 
    \y_\tm- q \sum_{\km=1}^{\tm-1}\beta^{\tm-\km} \xp_{\km}}{q+r}, 
    \frac{qr}{q+r} \right).
\end{align*}
\label{ex:optprop:running}
\end{example}

In most practical cases the optimal proposal distribution is not a 
feasible alternative. The resulting importance weights 
are intractable, or simulating random variables from the optimal proposal is too computationally 
costly. Below we discuss various common approaches for approximating it.

\subsection{Approximations to the Optimal Proposal Distribution}
There have been a number of suggestions over the years on how to
approximate the optimal distribution. We will review three analytic 
approximations based on a Gaussian assumption, as well as briefly 
describe an \emph{exact 
approximation}. 

\paragraph{Laplace Approximation}
We obtain the Laplace approximation to the optimal proposal by a 
second-order Taylor approximation of the log-\gls{PDF} around a point 
$\bar\xp$ \citep{doucet2000sequential}. If we let $l_\tm(\xp_{\tm}) 
\eqdef \log \nmod_{\tm} (\xp_{\tm} \given \xp_{1:\tm-1} )$, supressing 
the dependence on $\xp_{1:\tm-1}$, then
\begin{align*}
    &l_\tm(\xp_{\tm}) \approx l_\tm(\bar\xp)+ \grad l_\tm(\bar\xp)^{\top}
    (\xp_{\tm} -\bar\xp)
    + \frac{1}{2} (\xp_{\tm} -\bar\xp)^\top \hessian l_\tm(\bar\xp)
    (\xp_{\tm} -\bar\xp) \\
    &= \const \,+ \\
    &\frac{1}{2} \left(\xp_\tm-\bar\xp+\left(\hessian 
    l_\tm(\bar\xp)\right)^{-1} \grad l_\tm(\bar\xp) \right)^\top \hessian 
    l_\tm(\bar\xp)\left(\xp_\tm-\bar\xp+\left(\hessian 
    l_\tm(\bar\xp)\right)^{-1} \grad l_\tm(\bar\xp) \right),
\end{align*}
where $\grad l_\tm$ and $\hessian l_\tm$ are the gradient and the Hessian 
of the log-\gls{PDF} with respect to $\xp_\tm$, respectively. A natural 
approximation to the optimal proposal is then
\begin{align}
    \prop_\tm(\xp_{\tm}\given \xp_{1:\tm-1}) &= \Norm\left(\xp_\tm \given 
    \bar\xp - \left(\hessian 
    l_\tm(\bar\xp)\right)^{-1}\grad l_\tm(\bar\xp), 
    - \hessian l_\tm(\bar\xp)^{-1} \right).
    \label{eq:optprop:laplace}
\end{align}
The mode of the distribution can be a good choice for the 
linearization point $\bar\xp$ if the distribution is unimodal. With this 
choice the mean simplifies to just $\bar\xp$. However, the mode is 
usually unknown and will depend on the value of $\xp_{1:\tm-1}$. This 
means that we are required to run a separate 
optimization for each particle $\xp_{1:\tm-1}^\ip$ and iteration $\tm$ 
to find the mode and Hessian. This can outweigh the benefits of the improved proposal 
distribution.

\paragraph{Extended and Unscented Kalman Filter Approximations}
The \gls{EKF} and the \gls{UKF} \citep{anderson1979,julier1997new}
are by now standard solutions to non-linear and non-Gaussian filtering 
problems. We can leverage these ideas to derive another class of 
Gaussian approximations to the optimal proposal distribution
\citep{doucet2000sequential,van2001unscented}. 

The two methods depend on a structural equation representation of our 
probabilistic model,
\begin{subequations}
    \begin{align}
    \xp_\tm &= \amod(\xp_{1:\tm-1}, \vn_\tm), \\
    \y_\tm &= \cmod(\xp_{1:\tm}, \en_\tm),
    \end{align}\label{eq:structural:ac}%
\end{subequations}
where $\vn_\tm$ and $\en_\tm$ are random variables. The representation 
implies a joint distribution on $\xp_\tm$ and $\y_\tm$ conditional on 
$\xp_{1:\tm-1}$, \ie $p(\xp_\tm, \y_\tm \given \xp_{1:\tm-1})$. The 
locally optimal proposal distribution corresponding to this 
representation is given by $\prop_\tm^\star(\xp_\tm\given\xp_{1:\tm-1}) = p(\xp_\tm\given 
\xp_{1:\tm-1}, \y_\tm)$. 

The \gls{EKF} and \gls{UKF} approximations rely on a Gaussian 
approximation to the joint conditional distribution of $\xp_\tm$ and $\y_\tm$,
\begin{align}
    p(\xp_\tm, \y_\tm \given \xp_{1:\tm-1}) &\approx 
    \widehat{p}(\xp_\tm, \y_\tm \given \xp_{1:\tm-1}) =
    \Norm\left(
    \left(\begin{array}{c}
        \xp_\tm \\
        \y_\tm
    \end{array}\right)\,
    \Big|\,
    \widehat\mu, \,
    \widehat\Sigma
    \right),
    \label{eq:proptwist:ekfukf}
\end{align}
where we have suppressed the dependence on $\xp_{1:\tm-1}$ for 
clarity. We block the mean $\widehat\mu$ and covariance 
$\widehat\Sigma$ as follows
\begin{align}
    \widehat\mu &= \left(\begin{array}{c}
        \widehat\mu_\xp \\
        \widehat\mu_\y
    \end{array}\right), \quad
    \widehat\Sigma = \left(\begin{array}{cc}
        \widehat\Sigma_{\xp\xp} & \widehat\Sigma_{\xp\y}\\
        \widehat\Sigma_{\y\xp} &\widehat\Sigma_{\y\y}
    \end{array}\right).
\end{align}
Under the assumptions of the approximation in \cref{eq:proptwist:ekfukf}, 
the distribution of $\xp_\tm \given \xp_{1:\tm-1}, \y_\tm$ is 
tractable:
\begin{align}
    \widehat{p}(\xp_\tm \given \xp_{1:\tm-1}, \y_\tm) &=
    \Norm\left(\xp_\tm\given \mu_\tm, \, \Sigma_\tm\right),
\end{align}
with
\begin{subequations}
    \begin{align}
    \mu_\tm &= \widehat\mu_\xp + \widehat\Sigma_{\xp\y} 
    \widehat\Sigma_{\y\y}^{-1}\left(\y_\tm -\widehat\mu_\y\right),\\
    \Sigma_\tm &=  \widehat\Sigma_{\xp\xp} -  \widehat\Sigma_{\xp\y}  
    \widehat\Sigma_{\y\y}^{-1}  \widehat\Sigma_{\y\xp}.
    \end{align}\label{eq:proptwist:ekfukf:prop}%
\end{subequations}
The key difference between the \gls{EKF}- and \gls{UKF}-based 
approximations is how to compute the estimates $\widehat\mu$ and 
$\widehat\Sigma$. We leave the details of these procedures to 
Appendix~\ref{sec:proptwisting:tut}, and focus here on the intuition behind 
them.

The \gls{EKF} uses a first-order Taylor approximation to the 
non-linear functions $\amod, \cmod$ to derive an approximation to the 
full posterior distribution based on 
exact (analytical) updates of the approximate model. Following this line of thinking 
we can similarly linearize $\amod, \cmod$ locally. Then under a Gaussian 
assumption on the noise $\vn_\tm, \en_\tm$, we compute the distribution 
$\widehat{p}(\xp_\tm, \y_{\tm} \given \xp_{1:\tm-1})$ exactly for the 
linearized model. The \gls{EKF}-based approximations 
$\widehat\mu$ and $\widehat\Sigma$ can be found in 
\cref{eq:ekf_proposal} in the appendix.

The \gls{UKF} on the other hand uses so-called \emph{sigma points} to 
reach the Gaussian approximation $\widehat{p}(\xp_\tm, \y_{\tm} \given \xp_{1:\tm-1})$. 
The key idea is to choose a set of points, pass them through the 
nonlinear functions $\amod$ and $\cmod$, and then estimate the mean and variance of the 
transformed point-set. Unlike the \gls{EKF}-based approximation, the 
\gls{UKF}-based approximation does 
not require that the noises $\vn_\tm$ and $\en_\tm$ are Gaussian distributed. 
However, we do require that the mean and variance for these random 
variables are available. The \gls{UKF}-based approximations 
$\widehat\mu$ and $\widehat\Sigma$ can be found in 
\cref{eq:ukf_based_musig} in the appendix.

\paragraph{Analytic Gaussian Approximations}
We refer to the Laplace-, \gls{EKF}- and \gls{UKF}-based 
approximations as \emph{analytic Gaussian} approximations to the locally 
optimal proposal distribution. We summarize these three approaches 
in \cref{alg:analytic_gaussian_proposal}. Which one works best depends 
heavily on the model being studied. 
\begin{algorithm}[tb]
    Proposal: $\prop_\tm(\xp_\tm \given \xp_{1:t-1}) = \Norm(\xp_\tm \given \mu_\tm, \Sigma_\tm)$
    
    \begin{itemize}
    \item The \textrm{Laplace} approximation around the point $\bar\xp$ is
    \begin{align*}
        \mu_\tm &= \bar\xp - \left(\hessian l_\tm(\bar\xp)\right)^{-1} \grad 
        l_\tm(\bar\xp),\\
        \Sigma_\tm &= - \hessian l_\tm(\bar\xp)^{-1},
    \end{align*}
    where $l_\tm(\xp_\tm) = \log \umod_{\tm}(\xp_{1:\tm})$ and 
    derivatives are \wrt $\xp_\tm$.
    
    \item The \gls{EKF}- and \gls{UKF}-based approximations are
    \begin{align*}
        \mu_\tm &= \widehat\mu_\xp + \widehat\Sigma_{\xp\y} 
        \widehat\Sigma_{\y\y}^{-1}\left(\y_\tm -\widehat\mu_\y\right),\\
        \Sigma_\tm &=  \widehat\Sigma_{\xp\xp} -  \widehat\Sigma_{\xp\y}  
        \widehat\Sigma_{\y\y}^{-1}  \widehat\Sigma_{\y\xp},
    \end{align*}
    where the variables are defined in \cref{eq:ekf_proposal}~(\gls{EKF}) 
    and in \cref{eq:ukf_based_musig}~(\gls{UKF}), respectively.
    \end{itemize}
    \caption{Analytic Gaussian Proposal Approximations}
    \label{alg:analytic_gaussian_proposal}
\end{algorithm}
In \cref{ex:analyticgaussian} we study a simple example when $\xp$ and 
$\y$ are one-dimensional random variables.

\begin{example}[Gaussian Proposal Approximations]
We illustrate the analytic Gaussian approximations from 
\cref{alg:analytic_gaussian_proposal} based on a simple scalar example
\begin{subequations}
    \begin{align}
        \xp &= \vn, \quad \vn \sim \Norm(0,1),\\
        \y &= \frac{(\xp + 1)(\xp - 1)(\xp - 3)}{6} + \en,  \quad \en \sim 
        \Norm(0,0.5).
    \end{align}
    \label{eq:analytic_example}%
\end{subequations}
The prior on the latent variable $\xp$ is a standard normal, but the 
measurement model $\cmod$ is a polynomial in $\xp$ with additive 
Gaussian noise. In \cref{fig:proptwisting:analytic_poly} we show the 
true (bimodal) posterior $p(\xp\given\y)$, and the corresponding 
approximations based on the Laplace, \gls{EKF}, and \gls{UKF} methods. 
Neither of these methods can capture the bimodality of the true 
normalized distribution, since they are all based on the basic 
simplifying assumption that the posterior is Gaussian. However, as 
previously discussed a good proposal will cover the bulk of 
probability mass of the target. This means it is likely that the 
\gls{EKF} and \gls{UKF} proposals will outperform the Laplace proposal 
in this case.
\begin{figure}[tb]
    \centering
    \includegraphics[width=0.75\columnwidth]{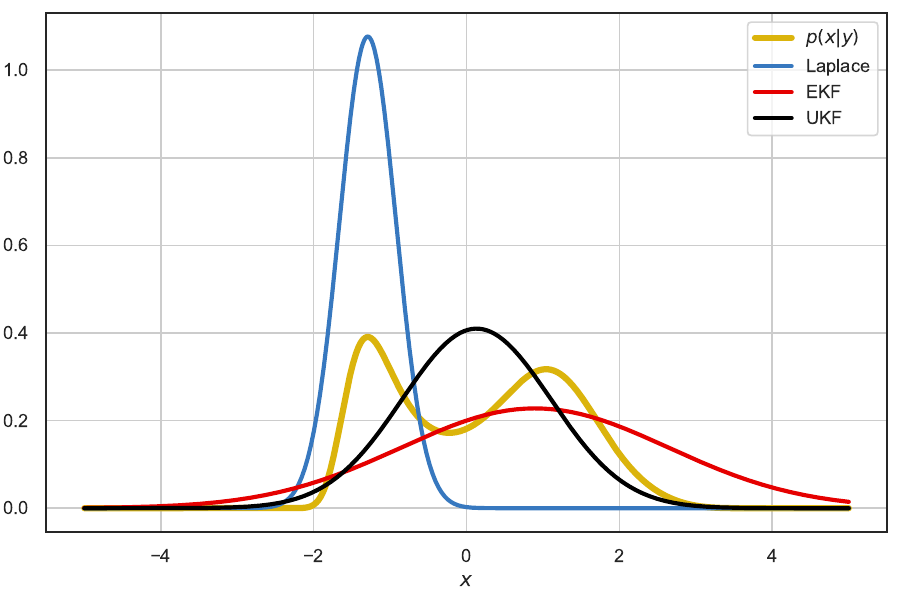}
    \caption{Analytic Gaussian approximations of $p(\xp\given\y)$ for 
    the model in \cref{eq:analytic_example}.}
    \label{fig:proptwisting:analytic_poly}
\end{figure}
\label{ex:analyticgaussian}
\end{example}

\paragraph{Exact Approximations}
In \emph{exact approximations} we use another level of 
\gls{MC}. Instead of drawing exact samples from 
$\prop_\tm^\star$, we approximate draws from it using a nested \gls{MC} 
algorithm \citep{naesseth2015nested,naesseth2016high}. This nested 
algorithm is chosen such that it does not alter the asymptotic 
exactness of the outer \gls{SMC} method. We have discussed 
how \gls{SMC} and \gls{IS} can be used as distribution approximators.
Nested \gls{MC} methods takes this one step further, using a separate 
\gls{SMC} (or \gls{IS}) approximation $\widehat\prop_\tm^\star$ for each sample from 
$\prop_\tm^\star$ we would like. 
This can lead to substantial benefits for \eg high-dimensional latent variables 
$\xp_\tm$. One of the key differences between the analytic Gaussian 
approximations and the exact approximations is the fixed form 
distribution assumption on the analytic approximation. The exact 
approximation is not limited to a standard parametric distribution, 
and we can obtain arbitrarily accurate approximations with enough 
compute. The trade-off is the additional effort needed to generate each 
sample $\xp_\tm^\ip$, which requires us to run independent \gls{SMC} 
algorithms for each particle $\ip$ at each iteration $\tm$. 
While this might seem wasteful, it can in fact improve accuracy 
compared to \gls{SMC} methods with standard proposals 
\citep{naesseth2015nested}, even for equal compute.

We are interested in approximating the locally optimal proposal 
distribution, \ie $\prop_\tm^\star(\xp_\tm \given\xp_{1:\tm-1})
\propto \umod_\tm\left(\xp_{1:\tm}\right)$, 
separately for each particle $\ip$ using an \gls{MC} method. A first 
approach is to use a nested \gls{IS} sampler with a proposal 
$\nprop_\tm(\xp_\tm\given\xp_{1:\tm-1})$ targeting 
$\prop_\tm^\star(\xp_\tm \given\xp_{1:\tm-1})$. We construct an 
approximation of the optimal proposal independently for each particle 
$\xp_{1:\tm-1}^\ip$ as follows
\begin{align}
    \widehat\prop_\tm^\star(\xp_{\tm}\given\xp_{1:\tm-1}^\ip) &= 
    \sum_{\jp=1}^\Mp \frac{\nuwp_\tm^{\jp,\ip}}{\sum_\lm \nuwp_\tm^{\lm,\ip}} 
    \dirac_{\nxp^{\jp,\ip}_\tm}(\xp_\tm), \quad {\nxp^{\jp,\ip}_\tm}\sim 
    \nprop_\tm(\xp_\tm\given\xp_{1:\tm-1}^\ip),
    \label{eq:nis:prop}
\end{align}
where the weights $\nuwp_\tm$ are given by
\begin{align*}
    \nuwp_\tm^{\jp,\ip} &= 
    \frac{\umod_\tm\left((\xp_{1:\tm-1}^\ip,\nxp^{\jp,\ip}_\tm)\right)}
    {\umod_{\tm-1}(\xp_{1:\tm-1}^\ip)\nprop_\tm(\nxp^{\jp,\ip}_\tm\given\xp_{1:\tm-1}^\ip)}.
\end{align*}
We replace $\prop_\tm(\xp_\tm\given\xp_{1:\tm-1})$ in 
\cref{eq:smc:proposal} with samples from the distribution defined in 
\cref{eq:nis:prop}. The corresponding weight update, under the 
assumption that we have resampled $\xp_{1:\tm-1}^\ip$, is given by
\begin{align}
    \uwp_\tm^\ip &= \frac{1}{\Mp}\sum_{\jp=1}^\Mp 
    \nuwp_\tm^{\jp,\ip}.
    \label{eq:nis:weights}
\end{align}
We summarize the nested \gls{IS} approach to approximate the optimal 
proposal in \cref{alg:nis}. The algorithm generates a single sample, 
and must be repeated for each $\ip = 1,\ldots,\Np$.
\begin{algorithm}[tb]
    \SetKwInOut{Input}{input}\SetKwInOut{Output}{output}
    \Input{Unnormalized target distributions $\umod_\tm$, $
    \umod_{\tm-1}$, resampled particle $\xp_{1:\tm-1}^\ip$, nested proposal 
    $\nprop_\tm$, number of samples $\Mp$.}
    
    Sample $\nxp^{\jp,\ip}_\tm \sim \nprop_\tm(\xp_\tm\given\xp_{1:\tm-1}^\ip), 
    ~j=1,\ldots,\Mp$.
    
    Set $\nuwp_\tm^{\jp,\ip} = \frac{\umod_\tm\left((\xp_{1:\tm-1}^\ip,\nxp^{\jp,\ip}_\tm)\right)}
    {\umod_{\tm-1}(\xp_{1:\tm-1}^\ip)\nprop_\tm(\nxp^{\jp,\ip}_\tm\given\xp_{1:\tm-1}^\ip)}$.
    
    Sample $\xp_\tm^\ip \sim \sum_{\jp=1}^\Mp 
    \frac{\nuwp_\tm^{\jp,\ip}}{\sum_\lm \nuwp_\tm^{\lm,\ip}} 
    \dirac_{\nxp^{\jp,\ip}_\tm}(\xp_\tm)$. 
    
    Set $\uwp_\tm^\ip = \frac{1}{\Mp}\sum_{\jp=1}^\Mp 
    \nuwp_\tm^{\jp,\ip}$. 
    
    \caption{Nested \gls{IS} Approximation}
    \label{alg:nis}
\end{algorithm}

By what we know from the theory of \gls{SMC}, see 
\cref{sec:analysisconvergence}, we would expect 
that $\widehat\prop_\tm^\star \convD \prop_\tm^\star$ if we let 
$\Mp\to\infty$. This is true, and as shown by \citet{naesseth2016high} 
the asymptotic variance in the limit of $\Mp\to\infty$ is equal to that of 
using the locally optimal proposal distribution. 
The main implication is that we can obtain arbitrarily accurate optimal 
proposal approximations at the cost of increasing the number of samples 
$\Mp$ for the nested \gls{MC} method. We can also combine the analytic 
Gaussian approximations above with exact approximations by choosing 
$\nprop_\tm$ as either the Laplace, \gls{EKF}, \gls{UKF}, or any other 
analytic approximation. 

The computational complexity of the nested \gls{IS} method is 
increased by a factor of $\Mp$ compared to standard \gls{SMC}. 
A natural question is therefore whether or not we are better of running standard \gls{SMC} with $\nprop_\tm$ as a proposal and using $\Np\Mp$ particles.
However, the memory requirement is lower for nested \gls{SMC} and there are opportunities for 
speeding up through parallelization. There are even variants of 
nested \gls{MC} that can outperform on the same computational 
budget \citep{naesseth2015nested}, \ie when compared to standard 
\gls{SMC} with $\Np\Mp$ particles without taking parallelization into account.
We return to this class of methods in \cref{sec:pm:nsmc}.

\subsection{Learning a Proposal Distribution}
Rather than relying on approximating the locally optimal proposal 
distribution, we can instead directly learn a good proposal 
distribution that can take into account \emph{global} information of the 
target distribution. By parameterizing a suitable class of distributions and 
choosing a cost function to optimize, we can use standard optimization 
tools (such as stochastic gradient descent) to adapt our proposal to the 
problem we are trying to solve. When combined with the \gls{SMC} 
approximation, we refer to these types of approaches as \emph{adaptive} or 
\emph{variational} \gls{SMC} methods. 

We will in this section focus on proposal distributions, $\prop_\tm$, 
parameterized by $\pparams$, we denote this as 
$\prop_\tm(\xp_\tm\given\xp_{1:\tm-1} \idxby \pparams)$. A common 
choice is to use a conditional Gaussian distribution
\begin{align*}
    \prop_\tm(\xp_\tm\given\xp_{1:\tm-1} \idxby \pparams) &= 
    \Norm\left(\xp_\tm\given \mu_\pparams(\xp_{1:\tm-1}), 
    \sigma_{\pparams}^2(\xp_{1:\tm-1})\right),
\end{align*}
where $\mu_\pparams(\cdot), \sigma_{\pparams}^2(\cdot)$ are neural 
networks parameterized by $\pparams$. The proposal is also 
a function of the data, either explicitly as a part of the input to the 
functions or implicitly through the cost function for the parameters. 
These types of conditional 
distributions have recently been used with success in everything from 
image and speech generation \citep{gulrajani2017pixelvae,maddison2017} 
to causal inference \citep{louizos2017,krishnan2017structured}.

Below we discuss two classes of methods to learn the parameters 
$\pparams$, adaptive and variational methods, respectively.

\paragraph{Adaptive Sequential Monte Carlo} Adaptive methods are 
characterized by choosing a cost function that tries to fit directly 
the proposal distribution $\prop_\tm(\xp_{1:\Tm})$ to the target 
distribution $\nmod_\tm(\xp_{1:\Tm})$. This can either be done locally 
at each iteration for $\prop_\tm(\xp_\tm\given\xp_{1:\tm-1} \idxby 
\pparams)$ as in \eg \citet{cornebise2008adaptive}, or globally for 
$\prop_\Tm(\xp_{1:\Tm} \idxby \pparams)$ as in \eg
\citet{gu2015neural,paige2016inference}. Because we are mainly 
interested in the approximation of the final target distribution 
$\nmod_\Tm$, we will here focus our attention to methods that take a 
global approach to learning proposals. Below we focus on recent methods
developed in the machine learning literature. However, there is also an important body of work on adaptive \gls{SMC} samplers \citep{del2006sequential} stemming from the statistics community. For a comprehensive description of those methods we refer to \citet{jasra2011inference, fearnhead2013adaptive, schafer2013sequential,buchholz2018adaptive}.

Since we generally need the proposal distribution to cover areas of 
high probability under the target distribution, we need the proposal 
to be more diffuse than the target. This will ensure that the weights 
we assign will have finite variance and that our approximation gets more 
accurate. This means that using the cost function 
$\Kl(\prop_\Tm(\xp_{1:\Tm}\idxby \pparams)\|\nmod_\Tm(\xp_{1:\Tm}))$, like in standard 
variational inference \citep[\eg][]{blei2017variational}, would not result in 
a good proposal distribution. This cost function will encourage a 
proposal that is more concentrated than the target, and not less. 
Instead we focus our attention on the inclusive \gls{KL} divergence 
$\Kl(\nmod_\Tm(\xp_{1:\Tm})\|\prop_\Tm(\xp_{1:\Tm}\idxby \pparams))$ 
like in \eg \citet{gu2015neural} or \citet{paige2016inference}. The 
inclusive \gls{KL} divergence encourages a $\prop_\Tm$ that covers the 
high-probability regions of $\nmod_\Tm$.

We focus our exposition on adaptive methods to the case when 
$\nmod_\Tm(\xp_{1:\Tm})$ is the posterior 
distribution $p(\xp_{1:\Tm}\given \y_{1:\Tm})$. The cost function to 
minimize is the inclusive \gls{KL} divergence
\begin{align}
    &\Kl(p(\xp_{1:\Tm}\given \y_{1:\Tm})\|\prop_\Tm(\xp_{1:\Tm}\idxby \pparams)) 
    = \int p(\xp_{1:\Tm}\given \y_{1:\Tm}) \log 
    \frac{p(\xp_{1:\Tm}\given \y_{1:\Tm}) }{\prop_\Tm(\xp_{1:\Tm}\idxby 
    \pparams)} \dif \xp_{1:\Tm}\nonumber \\
    &=- \int p(\xp_{1:\Tm}\given \y_{1:\Tm}) \log \prop_\Tm(\xp_{1:\Tm}\idxby 
    \pparams)\dif \xp_{1:\Tm}+\text{const},
    \label{eq:adaptive:kl}
\end{align}
where ``$\text{const}$'' includes all terms constant with respect to 
the proposal distribution.
If we could compute the gradients of \cref{eq:adaptive:kl} with 
respect to $\pparams$ we could employ a standard gradient descent 
method to optimize it. Unfortunately the gradients, given by
\begin{align}
    &\gadapt = \grad_\pparams\Kl(p(\xp_{1:\Tm}\given 
    \y_{1:\Tm})\|\prop_\Tm(\xp_{1:\Tm}\idxby \pparams)) = \nonumber\\
    &-\Exp_{p(\xp_{1:\Tm}\given \y_{1:\Tm})}\left[\sum_{\tm=1}^\Tm
    \grad_\pparams \log \prop_\tm(\xp_\tm\given\xp_{1:\tm-1}\idxby 
    \pparams) 
    \right],
    \label{eq:adaptive:gradkl}
\end{align}
requires us to compute expectations with respect to the posterior 
distribution. Computing these types of expectations is the problem we 
try to solve with \gls{SMC} in the first place! Below we will detail two 
approaches to solve this problem. First, we consider a stochastic 
gradient method that uses \gls{SMC} to estimate the gradients. Then, 
we consider an alternative solution that uses \emph{inference amortization} 
combined with stochastic gradient methods.

In the first approach, we use our current best guess for the parameters $\pparams$ 
and then the \gls{SMC} procedure itself to approximate the gradient 
in \cref{eq:adaptive:gradkl}. Given this estimate of the gradient we 
can do an update step based on standard stochastic gradient descent 
methods. 
Suppose we have the iterate $\pparams^{n-1}$, we estimate the gradient in 
\cref{eq:adaptive:gradkl} using \cref{alg:esmc:smc} with proposals 
$\prop_\tm(\xp_\tm \given\xp_{1:\tm-1}\idxby \pparams^{n-1})$, and get
\begin{align}
    \gadapt^n \approx \widehat\gadapt^n \eqdef
    \sum_{\ip=1}^\Np \nwp_\Tm^\ip
    \grad_\pparams \log \prop_\Tm(\xp_{1:\Tm}^\ip\idxby 
    \pparams) \Big|_{\pparams = \pparams^{n-1}}.
    \label{eq:adaptive:gradest}
\end{align}
With stochastic gradient descent we update our iterate by
\begin{align}
    \pparams^n &= \pparams^{n-1} - \alpha_n \widehat\gadapt^n,
    \label{eq:adaptive:sgdupdate}
\end{align}
where $\alpha_n$ are a set of positive step-sizes, typically chosen 
such that $\sum_n \alpha_n = \infty$ and $\sum_n \alpha_n^2 < \infty$. 
Note that unlike standard stochastic optimization methods, the 
gradient estimate is \emph{biased} and convergence to a local minima 
of the cost function is not guaranteed. However, this approach has 
still shown to deliver useful proposal adaptation in practice \citep{gu2015neural}.

The other approach relies on amortizing inference 
\citep{gershman2014amortized,paige2016inference}. One view of 
inference amortization is as a procedure that does not
just learn a single optimal $\pparams^\star$ for the observed 
dataset $\y_{1:\Tm}$, but rather learns a \emph{mapping} $\lambda^\star(\cdot)$ 
from the data space to the parameter space. 
This mapping is optimized to make sure $\prop_\Tm(\xp_{1:\Tm}\idxby 
\lambda^\star(\y_{1:\Tm}))$ is a good approximation to $p(\xp_{1:\Tm}\given 
\y_{1:\Tm})$ for any $\y_{1:\Tm}$ that is likely under 
the probabilistic model $p(\y_{1:\Tm})$. Note that the way we parameterize $\prop_\Tm$ may differ from the above approach even though we use the same notation for the parameters $\lambda$.

In this setting instead of the \gls{KL} divergence in \cref{eq:adaptive:kl}, we consider minimizing
\begin{align}
    &\Kl\left(p(\xp_{1:\Tm},\y_{1:\Tm})\| p(\y_{1:\Tm}) \prop_\Tm(\xp_{1:\Tm}\idxby 
    \lambda(\y_{1:\Tm})) \right) \nonumber \\
    &= -\Exp_{p(\xp_{1:\Tm},\y_{1:\Tm})}\left[\log \prop_\Tm(\xp_{1:\Tm}\idxby 
    \lambda(\y_{1:\Tm}))\right] + 
    \text{const}.
    \label{eq:adaptive:amortized}
\end{align}

If we let $\lambda_\pfun(\cdot)$ be a parametric function with 
parameters $\pfun$, we can compute stochastic gradients of 
\cref{eq:adaptive:amortized} \wrt $\pfun$ with no need to resort to 
\gls{SMC}. A common choice is to let $\pparams_\pfun$ be defined by a neural 
network where $\pfun$ are the weights and biases. We define the gradient
\begin{align}
    \gadapt &= -\Exp_{p(\xp_{1:\Tm},\y_{1:\Tm})}\left[\grad_\pfun\log \prop_\Tm(\xp_{1:\Tm}\idxby 
    \lambda_\pfun(\y_{1:\Tm}))\right],
\end{align}
and estimate it, for the current iterate $\pfun^n$, using \gls{MC}
\begin{align}
&\gadapt^n \approx \widehat\gadapt^n \eqdef \nonumber \\
&-\grad_\pfun \log \prop_\Tm(\bar\xp_{1:\Tm}\idxby 
    \lambda_\pfun(\bar\y_{1:\Tm})) \Big|_{\pfun=\pfun^{n-1}}, \quad 
    (\bar\xp_{1:\Tm},\bar\y_{1:\Tm}) \sim p(\xp_{1:\Tm},\y_{1:\Tm}).
    \label{eq:adaptive:amortizedgradest}
\end{align}
Previous methods we have considered focus on proposals that try to 
emulate the posterior or the locally optimal proposal, both 
conditionally on the observed data $\y_{1:\Tm}$. However, the 
amortized inference approach, in this setting, learns proposals based on 
simulated data from the model $p(\xp_{1:\Tm},\y_{1:\Tm})$. This is 
performed offline before using the learned proposal and the real dataset 
for inference.

The amortized inference method follows the same procedure as above 
in \cref{eq:adaptive:sgdupdate} when updating $\pfun^n$, replacing the 
gradient with the expression from \cref{eq:adaptive:amortizedgradest}
\begin{align}
    \pfun^n &= \pfun^{n-1} - \alpha_n \widehat\gadapt^n.
    \label{eq:adaptive:amortizedup}
\end{align}
Unlike the above approach that uses \gls{SMC} to estimate the 
gradients, the amortized inference approach results in an unbiased 
approximation of the gradient. This means that using the update 
\cref{eq:adaptive:amortizedup} will ensure convergence to a local 
minima of its cost function \cref{eq:adaptive:amortized} by standard 
stochastic approximation results \citep{robbins1951}. On the other hand, this approach 
requires us to learn a proposal that works well for any dataset that 
could be generated by our model. This puts extra stress on the model 
to be accurate for the actual observed dataset to be able to learn a 
good proposal. For more thorough discussion of these topics, see 
\citet{paige2016inference}.

We summarize these two approaches to optimize the proposal in 
\cref{alg:adaptive}.
\begin{algorithm}[tb]
    \begin{description}
    \item[Stochastic Gradient] $\prop_\tm(\xp_\tm\given\xp_{1:\tm-1}\idxby 
    \pparams)$
    \begin{align*}
        \pparams^n &= \pparams^{n-1} - \alpha_n \widehat\gadapt^n,
    \end{align*}
    where $\widehat\gadapt^n$ is given in \cref{eq:adaptive:gradest}.
    
    \item[Amortized Inference] 
    $\prop_\tm(\xp_\tm\given\xp_{1:\tm-1}\idxby 
    \pparams_\pfun(\y_{1:\Tm}))$
    \begin{align*}
        \pfun^n &= \pfun^{n-1} - \alpha_n \widehat\gadapt^n,
    \end{align*}
    where $\widehat\gadapt^n$ is given in \cref{eq:adaptive:amortizedgradest}.
    \end{description}
    
    (The stepsizes satisfy $\alpha_n > 0$, 
    $\sum_n \alpha_n = \infty$, and $\sum_n \alpha_n^2 < \infty$.)
    \caption{Adaptive \gls{SMC}}
    \label{alg:adaptive}
\end{algorithm}

\paragraph{Variational Sequential Monte Carlo} The key idea in 
\gls{VSMC} \citep{naesseth18a, anh2018autoencoding, maddison2017} is to 
use the parametric distribution for our proposals, $\prop_\tm(\xp_\tm\given\xp_{1:\tm-1} \idxby 
\pparams)$, and 
optimize the fit in \gls{KL} divergence from the expected \gls{SMC} 
approximation $\Exp\left[\widehat\nmod_\Tm\right]$ to the target distribution 
$\nmod_\Tm$. The dependence on $\pparams$ enters implicitly in the 
expected 
\gls{SMC} approximation $\Exp\left[\widehat\nmod_\Tm\right]$ through the proposed 
samples, weights, and resampling step. In contrast to mimicking the 
locally optimal proposal 
distribution, this cost function takes into account the complete 
\gls{SMC} approximation and defines a coherent global objective 
function for it. 
Compared to the previously described adaptive \gls{SMC} methods, 
\gls{VSMC} optimizes the fit of the final \gls{SMC} distribution 
approximation to the true target, rather than the fit between the proposal to the target 
distribution. This means that we explicitly take into account the 
resampling steps that are a key part of the \gls{SMC} algorithm. 

Studying the marginal distribution of a single sample from 
$\widehat\nmod_\Tm$, \ie $\Exp\left[\widehat\nmod_\Tm\right]$, it is possible to show that 
\citep{naesseth18a} the \gls{KL} divergence from this distribution to 
the target distribution $\nmod_\Tm$ can be bounded from above
\begin{align}
    \Kl \left(\Exp\left[\widehat\nmod_\Tm(\xp_{1:\Tm}\idxby \pparams) 
    \right]\| 
    \nmod_\Tm(\xp_{1:\Tm}) \right) \leq -\Exp\left[\log 
    \frac{\widehat\Z_\Tm}{\Z_\Tm}\right],
    \label{eq:vsmc:ubound}
\end{align}
where the log-normalization constant estimate, \cf \cref{eq:smc:zhat}, is
\begin{align}
    \log\widehat\Z_\Tm &= \sum_{\tm=1}^\Tm \log\left( \frac{1}{\Np}
    \sum_{\ip=1}^\Np \uwp_\tm\left(\xp_{1:\tm}^\ip\idxby\pparams\right)
    \right).
    \label{eq:vsmc:lognorm}
\end{align}
The expectation is taken with respect to all random variables 
generated by the \gls{SMC} algorithm. Because $\log\Z_\Tm$ does not 
depend on the parameters $\pparams$, minimizing \cref{eq:vsmc:ubound} 
is equivalent to maximizing
\begin{align}
    \Exp\left[\log \widehat \Z_\Tm\right] &=
    \Exp\left[ \sum_{\tm=1}^\Tm \log\left( \frac{1}{\Np}
    \sum_{\ip=1}^\Np \uwp_\tm\left(\xp_{1:\tm}^\ip\idxby\pparams\right)
    \right)\right].
    \label{eq:vsmc:lbound}
\end{align}
The \gls{KL} divergence $\Kl(\Exp[\widehat\nmod_\Tm]\|\nmod_\Tm)$ 
is non-negative, which means that \cref{eq:vsmc:lbound} is a lower bound to the 
log-normalization constant $\log \Z_\Tm$. This is why 
\cref{eq:vsmc:lbound} is typically known as an \acrlong{ELBO} in the 
variational inference literature. By maximizing the cost function in 
\cref{eq:vsmc:lbound}, with respect to $\pparams$, we can find a 
proposal that fits the complete \gls{SMC} distribution to the target 
distribution. The main issue is that evaluating and computing the 
gradient of this cost function is intractable, so we need to resort to 
approximations.

We assume that the proposal distributions $\prop_\tm$ are 
\emph{reparameterizable} \citep{Kingma2014, rezende2014, ruiz2016generalized,naesseth2017reparameterization}. 
This means we can replace 
simulating $\xp_\tm\given\xp_{1:\tm-1} \sim 
\prop_\tm(\cdot\given\xp_{1:\tm-1})$ by 
\begin{align}
    \xp_\tm &=\fun_\tm(\xp_{1:\tm-1}, \eps_\tm \idxby\pparams), \quad 
    \eps_\tm \sim p(\eps),
    \label{eq:vsmc:reparam}
\end{align}
where the distribution of the random variable $\eps_\tm$ is independent 
of the parameters $\pparams$. If we 
further assume that $\fun_\tm$ is differentiable we can use 
\begin{align}
    \grad_\pparams \Exp\left[\log \widehat \Z_\Tm\right] &\approx
    \Exp\left[ \sum_{\tm=1}^\Tm \sum_{\ip=1}^\Np \nwp_\tm^\ip \grad_\pparams
    \log\uwp_\tm\left(\xp_{1:\tm}^\ip\idxby\pparams\right)\right] \defeq 
    \gvsmc,
    \label{eq:vsmc:gvsmc}
\end{align}
where $\grad_\pparams\log\uwp_\tm(\cdot)$ can be computed using \eg automatic 
differentiation, replacing $\xp_{1:\tm}^\ip$ with its defintion 
through \cref{eq:vsmc:reparam}. The approximation follows from 
ignoring the gradient part that results from the resampling step, 
which has been shown to work well in practice \citep{naesseth18a, anh2018autoencoding, maddison2017}.

Just like for the adaptive \gls{SMC} methods, \cref{eq:vsmc:gvsmc} 
suggests a stochastic gradient method to optimize $\pparams$. Given an 
iterate $\pparams^{n-1}$, we compute the gradient estimate by running 
\gls{SMC} with proposals 
$\prop_\tm(\xp_\tm \given\xp_{1:\tm-1}\idxby \pparams^{n-1})$ and 
evaluate
\begin{align}
    \gvsmc &\approx \widehat\gvsmc^n = \sum_{\tm=1}^\Tm \sum_{\ip=1}^\Np \nwp_\tm^\ip \grad_\pparams
    \log\uwp_\tm\left(\xp_{1:\tm}^\ip\idxby\pparams\right)
    \Big|_{\pparams=\pparams^{n-1}}.
    \label{eq:vsmc:gradest}
\end{align}
With stochastic gradient ascent we update our iterate by
\begin{align}
    \pparams^n &= \pparams^{n-1} + \alpha_n \widehat\gvsmc^n,
    \label{eq:vsmc:sgdupdate}
\end{align}
where $\alpha_n$ are positive step-sizes, chosen such that 
$\sum_n \alpha_n = \infty$ and $\sum_n \alpha_n^2 < \infty$. 

\begin{algorithm}[tb]
    \emph{Until convergence:}
    
    Run \cref{alg:esmc:smc} with proposals $\prop_\tm(\xp_\tm\given\xp_{1:\tm-1}\idxby 
    \pparams^{n-1})$.
    
    Update parameters $\pparams^n = \pparams^{n-1} + \alpha_n \widehat\gvsmc^n$, 
    where $\widehat\gvsmc^n$ is given in \cref{eq:vsmc:gradest}.
    
    (The stepsizes satisfy $\alpha_n > 0$, 
    $\sum_n \alpha_n = \infty$, and $\sum_n \alpha_n^2 < \infty$.)
    \caption{Variational \gls{SMC}}
    \label{alg:esmc:vsmc}
\end{algorithm}
We summarize the \gls{VSMC} algorithm to adapt the proposals in 
\cref{alg:esmc:vsmc}.

\section{Adapting the Target Distribution}\label{sec:target}
A less well-known design aspect of \gls{SMC} algorithms is the target 
distribution itself. If we are mainly interested in the final 
distribution $\nmod_{\Tm}$, we are free to choose the intermediate 
target distributions $\nmod_{\tm}$ to maximize the accuracy of our 
estimate $\widehat\nmod_{\Tm}$. By making use of information 
such as future observations, we can change the 
target distribution to take this into account. This leads to so-called 
\emph{auxiliary} or \emph{twisted} \gls{SMC} methods 
\citep{guarniero2017iterated,heng2017controlled}.

\subsection{Twisting the Target}
Even if we could simulate exactly from the locally optimal proposal 
distribution \cref{eq:prop:localoptimal}, we still would not be 
getting exact samples from our target distribution. The reason for 
this is that when sampling and weighting our particles at iteration 
$\tm$ we do not take into account potential future iterations. These 
future iterations can add new information regarding earlier latent variables. 
For instance, in an \gls{LVM} (\cf \cref{eq:nmlvm:filteringtarget}) a 
natural choice of target distributions is
\begin{align*}
    \umod_\tm(\xp_{1:\tm}) &= p(\xp_1) 
    \gmod_1(\y_1\given\xp_1)\prod_{\km=2}^\tm 
    \fmod_\km(\xp_\km\given\xp_{1:\km-1}) 
    \gmod_\km(\y_\km\given\xp_{1:\km}).
\end{align*}
However, with this choice we do not take future observations 
$\y_{\tm+1}, \ldots, \y_\Tm$ into account at iteration $\tm$.
The \gls{SMC} approximation for earlier iterations has finite 
support (represented by the particles $\xp_{1:\tm-1}^\ip)$ and so the 
only way we can incorporate this is by reweighting and resampling. These 
two operations will typically impoverish our approximation for 
the full states $\xp_{1:\tm}$, a phenomena related to the path 
degeneracy discussed in \cref{cha:smc}. The key idea in twisting (or 
\emph{tilting}) the target distribution is to change our intermediate target 
distributions to take into account information from future iterations 
already at the current step. These types of approaches also have a 
connection to so-called lookahead strategies \citep{lin2013lookahead} 
and block sampling \citep{doucet2006efficient} for \gls{SMC}. 

The optimal target distribution, under the assumption that we are using the locally 
optimal proposal, is to let the target at each iteration be the 
marginal distribution of the final iteration's target, \ie
\begin{align}
    \nmod_\tm^\star(\xp_{1:\tm}) &= \nmod_{\Tm}(\xp_{1:\tm}).
    \label{eq:optimal:target}
\end{align}
With this choice all samples from $\xp_{1:\Tm}\sim\widehat\nmod_\Tm$ 
are perfect samples from $\nmod_\Tm$. We can easily see this if we write 
out the optimal proposal for this sequence of targets
\begin{align*}
    \prop_\tm^\star(\xp_\tm \given \xp_{1:\tm-1}) &= 
    \nmod_\tm^\star(\xp_\tm\given 
    \xp_{1:\tm-1}) = \nmod_\Tm(\xp_\tm\given 
    \xp_{1:\tm-1}) = p(\xp_{\tm} \given \xp_{1:\tm-1}, \y_{1:\Tm}),
\end{align*}
where the final equality corresponds to the case when
$\nmod_\Tm$ is the posterior distribution 
$p(\xp_{1:\Tm}\given\y_{1:\Tm})$ for an \gls{LVM}. This means that the locally optimal 
proposal distribution is no longer only locally optimal, it is in fact 
optimal in a global sense. 

In the following discussion we will assume that the final unnormalized 
target distribution $\umod_\Tm(\xp_{1:\Tm})$ 
can be split into a product of factors:
\begin{align}
	\umod_\Tm(\xp_{1:\Tm}) &= \prod_{\tm=1}^\Tm  \ufmod_\tm(\xp_{1:\tm}).
	\label{eq:twist:target-factorizes}
\end{align}
With this form of the joint distribution of data and latent variables, 
we choose the following structure for our unnormalized target 
distributions:
\begin{align}
    \umod_\tm(\xp_{1:\tm}) &= \umod_{\tm-1}(\xp_{1:\tm-1}) 
    \ufmod_\tm(\xp_{1:\tm}) 
    \frac{\tpot_\tm(\xp_{1:\tm})}{\tpot_{\tm-1}(\xp_{1:\tm-1})},
    \label{eq:twist:targetstruct}
\end{align}
where $\tpot_\tm \geq 0$ are our so-called \emph{twisting potentials}, with 
$\tpot_0\equiv \tpot_\Tm \equiv 1$. 
It is easy to check that this recursive definition is consistent with our assumed factorization of the final target $\umod_\Tm(\xp_{1:\Tm})$ in \cref{eq:twist:target-factorizes}.
The  target distribution structure postulated in \cref{eq:twist:targetstruct} 
can be directly applied in our basic \gls{SMC} method in \cref{alg:esmc:smc} 
without any changes.
 
To deduce the optimal twisting potentials $\tpot_\tm^\star$ we can 
make use of the property that the optimal targets must fulfill
\begin{align}
    \umod_\tm^\star(\xp_{1:\tm}) &= \int\umod_{\tm+1}^\star(\xp_{1:\tm+1}) 
    \dif \xp_{\tm+1}, \quad \tm = 1,\ldots,\Tm-1.
    \label{eq:twist:equalmarginals}
\end{align}
This follows from \cref{eq:optimal:target}, each target 
$\nmod_\tm^\star$ 
should be the marginal distribution of $\xp_{1:\tm}$ under the final 
target distribution $\nmod_\Tm$. By replacing $\umod_\tm^\star$ in 
\cref{eq:twist:equalmarginals} with the definition
from \cref{eq:twist:targetstruct} we get
\begin{align}
    \tpot_\tm^\star(\xp_{1:\tm}) &= \int \ufmod_{\tm+1}(\xp_{1:\tm+1})
    \tpot_{\tm+1}^\star(\xp_{1:\tm+1}) \dif \xp_{\tm+1}.
    \label{eq:twist:opttwist}
\end{align}
While \cref{eq:twist:opttwist} is typically not available analytically for practical 
applications, we will see in \cref{sec:twist:learn} that it can serve as 
a guideline for designing tractable twisting potentials.

Below we give a few concrete examples on what the optimal potentials 
might look like for both non-Markovian \gls{LVM}s and conditionally 
independent models with tempering. 

\paragraph{Non-Markovian Latent Variable Model}
The non-Markovian \gls{LVM} can be described by a transition \gls{PDF} 
$\fmod_\tm$ and an observation \gls{PDF} $\gmod_\tm$
and follows the structure in \cref{eq:twist:target-factorizes} by defining
\begin{align}
	\ufmod_\tm(\xp_{1:\tm}) = \fmod_\tm(\xp_\tm\given\xp_{1:\tm-1})
	\gmod_\tm(\y_\tm\given\xp_{1:\tm},\y_{1:\tm-1}).
    \label{eq:twist:nmlvm}%
\end{align}
For this model we can rewrite \cref{eq:twist:opttwist} as follows
\begin{align}
    &\tpot_\tm^\star(\xp_{1:\tm}) = 
    \Exp_{\fmod_{\tm+1}(\xp_{\tm+1}\given\xp_{1:\tm})}\left[
    \gmod_{\tm+1}(\y_{\tm+1}\given\xp_{1:\tm+1},\y_{1:\tm})
    \tpot_{\tm+1}^\star(\xp_{1:\tm+1})
    \right] \nonumber \\
    &= \Exp_{\prod_{\km=\tm+1}^\Tm\fmod_\km(\xp_\km\given\xp_{1:\km-1})}\left[
    \prod_{\lm=\tm+1}^\Tm \gmod_\lm(\y_\lm\given\xp_{1:\lm},\y_{1:\lm-1})
    \right] = p(\y_{\tm+1:\Tm}\given \xp_{1:\tm}),
    \label{eq:twist:nllvm}
\end{align}
where the final expression is the predictive likelihood of 
$\y_{\tm+1:\Tm}$ given the latent variables $\xp_{1:\tm}$. In 
\cref{ex:twist:analytical} we derive the optimal twisting potentials 
for our Gaussian non-Markovian \gls{LVM}.

\begin{example}[Analytical Twisting Potential]
Our running example is a non-Markovian \gls{LVM} with an 
analytical optimal twisting potential $\tpot_\tm^\star$. This is because 
the joint distribution is Gaussian, and thus the integrals we need to 
compute to construct it (\cf \cref{eq:twist:nmlvm}) are tractable. 
We can rewrite the equation for the observations $\y_\tm$ for each 
$\tm$ only as a function of $\xp_{1:\lm}$, $\en_\tm$ and 
$\vn_{\lm+1:\tm}$, denoted by $\y_{\tm\given\lm}$, as follows
\begin{align}
    \y_{\tm\given\lm} &= \left(\sum_{\km=\lm+1}^\tm 
    \beta^{\tm-\km} \phi^{\km-\lm}\right)\xp_\lm +
    \sum_{\km=1}^\lm \beta^{\tm-\km} \xp_\km + \en_\tm + 
    \sum_{\km=\lm+1}^\tm \gamma_\km \vn_\km,
    \label{eq:twist:running:opt}
\end{align}
where $\gamma_\km$ is given by 
\begin{align*}
    \gamma_\km &= \sum_{\mm=\km}^{\tm}\beta^{\tm-\mm} \phi^{\mm-\km}.
\end{align*}
Because we know that 
$\tpot_\tm^\star(\xp_{1:\tm})=p(\y_{\tm+1:\Tm}\given \xp_{1:\tm})$ we 
get the optimal value by considering the distribution of 
$(\y_{\tm+1\given\tm},\ldots,\y_{\Tm\given\tm})^\top$. Since $\en_\tm$ 
and $\vn_\tm$ are independent Gaussian, by \cref{eq:twist:running:opt} 
the predictive likelihood is a correlated multivariate Gaussian. 
\label{ex:twist:analytical}
\end{example}

\paragraph{Conditionally Independent Models}
Another class of probabilistic models we discussed in \cref{sec:seqmodels} 
was conditionally independent models. In the notation of 
\cref{eq:twist:targetstruct} we can express this class of models as 
\begin{align}
    \ufmod_\tm(\xp_{1:\tm}) &= \smod_\tm(\xp_{\tm-1}\given\xp_{\tm})
    \frac{p(\xp_\tm) \gmod_\tm(\xp_\tm,\ym)}
    {p(\xp_{\tm-1}) \gmod_{\tm-1}(\xp_{\tm-1},\ym)}, 
    \quad\tm = 2,\ldots,\Tm, 
    \label{eq:twist:annealing}%
\end{align}
where $p(\xp)$ is the prior distribution on the (static) latent variable, $\smod_\tm$ is a backward kernel,
and $\gmod_\tm$ is a \emph{potential} depending on the observed data $\ym$. 
 We initialize by $\ufmod_1(\xp_1) = p(\xp_1) \gmod_1(\xp_1,\ym)$.
 The potential $\gmod_\tm$ usually takes 
either of the two following forms:
\begin{align*}
    \gmod_\tm(\xp_\tm,\ym) &= \prod_{\km=1}^\tm p(\y_\km \given 
    \xp_\tm), \quad \text{(data tempering)} \\
    \gmod_\tm(\xp_\tm,\ym) &= \left(\prod_{\km=1}^{\Km} p(\y_\km \given 
    \xp_\tm)\right)^{\tmp_\tm}, \quad \text{(likelihood tempering)} 
\end{align*}
where $0=\tmp_1 < \tmp_2 < \ldots < \tmp_\Tm = 1$. The 
backward kernel $\smod_\tm$ is an artificially introduced distribution to 
enable the use of \gls{SMC} methods, requiring approximating a space 
of increasing dimension. See \cref{sec:seqmodels} for more examples of 
useful sequence models in annealing methods, or 
\citet{del2006sequential} for a more thorough treatment of the subject. 

When we replace $\ufmod_\tm$ 
in  \cref{eq:twist:opttwist} with its 
definition from \cref{eq:twist:annealing} we get
\begin{align}
    &\tpot_\tm^\star(\xp_{1:\tm}) = 
    \int \frac{p(\xp_{\tm+1}) \gmod_{\tm+1}(\xp_{\tm+1},\ym)}
    {p(\xp_\tm)\gmod_\tm(\xp_\tm,\ym)} 
    \smod_{\tm+1}(\xp_\tm \given \xp_{\tm+1})
    \tpot_{\tm+1}^\star(\xp_{1:\tm+1}) \dif \xp_{\tm+1} \nonumber \\
    &=  
    \frac{1}{p(\xp_\tm)\gmod_\tm(\xp_\tm,\ym)}
    \int p(\xp_\Tm) g_\Tm(\xp_\Tm, \ym)
    \prod_{\km=\tm+1}^{\Tm} \smod_\km(\xp_{\km-1}\given\xp_{\km})
    \dif \xp_{\tm+1:\Tm},
\end{align}
where we can see that because of the (reverse) Markov structure in 
the annealing model from \cref{eq:twist:annealing}, the optimal twisting 
potential only depends on the current value of $\xp_\tm$, \ie 
$\tpot_\tm^\star(\xp_{1:\tm}) = \tpot_\tm^\star(\xp_{\tm})$. This will 
hold true also for the \gls{SSM}, the Markovian special case of 
\cref{eq:twist:nmlvm} where $\fmod_\tm(\xp_\tm\given\xp_{1:\tm-1}) = 
\fmod_\tm(\xp_\tm\given\xp_{\tm-1})$ and 
$\gmod_\tm(\y_\tm\given\xp_{1:\tm},\y_{1:\tm-1}) = 
\gmod_\tm(\y_\tm\given\xp_{\tm})$.

\subsection{Designing the Twisting Potentials}\label{sec:twist:learn}
The optimal twisting potentials are typically not tractable, requiring 
us to solve integration problems that might be as difficult to compute 
as finding the posterior itself. However, they can give insight into how to 
design and parameterize approximate twisting functions. First, we study a 
certain class of models that admit a \emph{locally optimal} twisting potential, 
which gives rise to the so-called \emph{fully adapted} \gls{SMC} 
\citep{pitt1999filtering,johansen2008note}. After discussing the 
locally optimal choice, we then move on to discuss a general approach 
to learning twisting potentials for \gls{SMC} algorithms.

\paragraph{Locally Optimal Twisting Potential} If we assume that the 
twisting potentials only satisfy the optimal twisting equation, 
\cref{eq:twist:opttwist}, for one iteration we get a locally optimal 
twisting potential
\begin{align}
    \tpot_\tm(\xp_{1:\tm}) &= \int 
    \ufmod_{\tm+1}(\xp_{1:\tm+1})
    \dif \xp_{\tm+1}.
    \label{eq:twisting:localopt}
\end{align}
The target distribution based on these potentials will allow us to 
adapt for information from one step into the future.

Furthermore, we combine the locally optimal twisting potentials with the 
proposals based only on factors up until iteration $\tm$, \ie
\begin{align}
    \prop_\tm(\xp_\tm\given\xp_{1:\tm-1}) \propto \prod_{\km=1}^\tm \ufmod_{\km}(\xp_{1:\km})
	\quad    \Longrightarrow 	\quad
    \prop_\tm(\xp_\tm\given\xp_{1:\tm-1}) = \frac{\ufmod_{\tm}(\xp_{1:\tm})}{\tpot_{\tm-1}(\xp_{1:\tm-1})}.
    \label{eq:proptwist:localopt:proposal}
\end{align}
With these choices of twisting functions and proposals, based on the twisted target structure 
\cref{eq:twist:targetstruct}, we obtain the fully adapted \gls{SMC} \citep{pitt1999filtering,johansen2008note}
with corresponding weights
\begin{align}
    \uwp_\tm(\xp_{1:\tm}) &= 
    \frac{\umod_\tm(\xp_{1:\tm})}{\umod_{\tm}(\xp_{1:\tm-1}) 
    q_\tm(\xp_\tm\given \xp_{1:\tm-1})} =
    \tpot_\tm(\xp_{1:\tm}).
    \label{eq:proptwist:localopt:weights}
\end{align}
Note that this perspective on fully adaptive \gls{SMC} approximates 
the ``predictive target''
\begin{align*}
	\umod_\tm(\xp_{1:\tm}) &=
	\underbrace{\int 
		\ufmod_{\tm+1}(\xp_{1:\tm+1})\dif \xp_{\tm+1}}_{\tpot_\tm(\xp_{1:\tm})}
	\cdot
	\prod_{\km=1}^\tm  \ufmod_\km(\xp_{1:\km}), 
\end{align*}
including information from iteration $\tm+1$ already at iteration 
$\tm$. In some situations we might be interested in the ``filtering 
target'' $\prod_{\km=1}^\tm  \ufmod_\km(\xp_{1:\km}) $. If so, we can make use of the 
approach described above to generate the particles $\xp_{1:\tm}^\ip$. 
Then to estimate the filtering target we simply average the particles 
$\xp_{1:\tm}^\ip$, ignoring the weights in 
\cref{eq:proptwist:localopt:weights}, 
accounting for the discrepancy between the filtering and predictive targets.

\begin{example}[Locally Optimal Twisting for Non-Markovian LVM]
	Recall the non-Markovian \gls{LVM} with factors given by (see \cref{eq:twist:nmlvm})
	\(
		\ufmod_\tm(\xp_{1:\tm}) = \fmod_\tm(\xp_\tm\given\xp_{1:\tm-1})
		\gmod_\tm(\y_\tm\given\xp_{1:\tm},\y_{1:\tm-1})
		= p(\y_\tm, \xp_\tm \given \xp_{1:\tm-1},\y_{1:\tm-1}).
	\)
	For this model the locally optimal twisting potential \cref{eq:twisting:localopt} is given by
	\begin{align*}
		\tpot_\tm(\xp_{1:\tm}) = 
		\Exp_{\fmod_{\tm+1}(\xp_{\tm+1}\given\xp_{1:\tm})}\left[
		\gmod_{\tm+1}(\y_{\tm+1}\given\xp_{1:\tm+1},\y_{1:\tm})
		\right] = p(\y_{\tm+1}\given \xp_{1:\tm}).
	\end{align*}
	The proposal in \cref{eq:proptwist:localopt:proposal} becomes
	\begin{align*}		
		\prop_\tm(\xp_\tm\given\xp_{1:\tm-1}) = 
		\frac{p(\y_\tm, \xp_\tm \given \xp_{1:\tm-1},\y_{1:\tm-1})}{p(\y_{\tm}\given \xp_{1:\tm-1})}
		= p(\xp_\tm \given \xp_{1:\tm-1},\y_{1:\tm})
	\end{align*}	
	with weights according to \cref{eq:proptwist:localopt:weights} given by
	\(
		\uwp_\tm(\xp_{1:\tm}) = p(\y_{\tm+1}\given \xp_{1:\tm}).
	\)
	This can be recognized as a one-step look-ahead strategy, incorporating the observation $\y_{\tm+1}$ already at iteration $\tm$. This is indeed how the fully adaptive \gls{SMC} is often presented for \glspl{SSM} and non-Markovian \glspl{LVM}.
\end{example}

\begin{remark}
The proposal used above is not the locally optimal proposal 
distribution for our (predictive) target distribution. The locally optimal proposal (see \cref{eq:prop:localoptimal}) for the target \cref{eq:twist:targetstruct} is
\begin{align*}
	\prop_\tm(\xp_\tm \given \xp_{1:\tm-1}) &\propto 
	\ufmod_{\tm}(\xp_{1:\tm}) 
	\tpot_\tm(\xp_{1:\tm}),
\end{align*}
which would result in weights
\begin{align}
	\uwp_\tm(\xp_{1:\tm}) &= 
	\frac{\umod_\tm(\xp_{1:\tm})}{\umod_{\tm}(\xp_{1:\tm-1}) 
		q_\tm(\xp_\tm\given \xp_{1:\tm-1})} =
	\frac{\int 
		\ufmod_{\tm}(\xp_{1:\tm}) 
		\tpot_\tm(\xp_{1:\tm}) \dif \xp_\tm}
	{\tpot_{\tm-1}(\xp_{1:\tm-1})}.
	\label{eq:twisted:localoptlocalopt}
\end{align}
Just like in the non-twisted case the weights for the locally optimal 
proposal are independent of the 
actual samples we generate at iteration $\tm$ of the \gls{SMC} 
algorithm. However, even if the potentials in 
\cref{eq:twisting:localopt} are available, the integration in 
\cref{eq:twisted:localoptlocalopt} is not necessarily tractable.
\end{remark}

Except for a few special cases the locally optimal twisting potential is not 
available, therefore we resort to approximating it. We can extend the 
ideas from \cref{sec:proposal} and either make an approximation to the 
locally optimal potential, or we can learn a full twisting potential.

\paragraph{Learning Twisting Potentials} By parameterizing a class of 
positive functions and a cost function, we can effectively learn 
useful twisting potentials for the application at hand. From 
\cref{eq:twist:opttwist} we know that the optimal twisting potentials 
must satisfy a recursive relationship. We can make use of this 
property as we define the cost function and the twisting potentials.

We explain the approach by \citet{guarniero2017iterated}, but 
generalize slightly to allow for non-Markovian \gls{LVM}s. This means that
our model is defined by
$\ufmod_\tm = \fmod_\tm(\xp_\tm\given\xp_{1:\tm-1})\gmod_\tm(\y_\tm\given \xp_{1:\tm})$,
\cf \cref{eq:twist:nmlvm}. 
Furthermore, we define the twisting potentials $\tpot_\tm(\xp_{1:\tm})$ implicitly 
as expectations with respect to $\fmod_{\tm+1}$ of positive functions $\tpsi_{\tm+1}(\xp_{1:\tm+1} 
\idxby \tparams_{\tm+1})$ with parameters $\tparams_{\tm+1}$, \ie
\begin{align}
    \tpot_\tm(\xp_{1:\tm} \idxby \tparams_{\tm+1}) &\eqdef 
    \Exp_{\fmod_\tm(\xp_{\tm+1}\given\xp_{1:\tm})}\left[
    \tpsi_{\tm+1}(\xp_{1:\tm+1} \idxby \tparams_{\tm+1})
    \right], \quad \tm = 1,\ldots,\Tm-1,
    \label{eq:twisted:iapf:potential}
\end{align}
We further assume that this expectation can be computed analytically. With this setup we focus 
on learning the $\tpsi_\tm$'s instead. Using the optimality condition 
from \cref{eq:twist:opttwist} with twisting potentials defined by 
\cref{eq:twisted:iapf:potential}, we get the recursive approximation 
criteria for the $\tpsi_\tm$'s
\begin{subequations}
    \begin{align}
        \tpsi_\Tm(\xp_{1:\Tm} \idxby \tparams_\Tm) &\approx \gmod_\Tm(\y_\Tm\given \xp_{1:\Tm}), \\
        \tpsi_\tm(\xp_{1:\tm} \idxby \tparams_\tm) &\approx \gmod_\tm(\y_\tm\given \xp_{1:\tm}) 
        \Exp_{\fmod_\tm(\xp_{\tm+1}\given\xp_{1:\tm})}\left[
        \tpsi_{\tm+1}(\xp_{1:\tm+1} \idxby \tparams_{\tm+1})\right].
    \end{align}
    \label{eq:twisted:iapf:opttwist}%
\end{subequations}
Because we assumed that 
$\Exp_{\fmod_\tm(\xp_{\tm}\given\xp_{1:\tm-1})}\left[\tpsi_{\tm}(\xp_{1:\tm} 
\idxby \tparams_\tm)\right]$ is available analytically, and 
\cref{eq:twisted:iapf:opttwist} is telling us to view it as an 
approximation of $p(\y_{\tm:\Tm}\given\xp_{1:\tm-1})$, it makes sense 
to use as the proposal
\begin{align}
    \prop_\tm(\xp_\tm\given \xp_{1:\tm-1}\idxby\tparams_\tm) &= 
    \frac{\fmod_\tm(\xp_{\tm}\given\xp_{1:\tm-1})\tpsi_{\tm}(\xp_{1:\tm} \idxby \tparams_\tm)}
    {\Exp_{\fmod_\tm(\xp_{\tm}\given\xp_{1:\tm-1})}\left[\tpsi_{\tm}(\xp_{1:\tm} \idxby \tparams_\tm)\right]}.
    \label{eq:twisted:iapf:prop}
\end{align}
With this choice of proposal we tie together the
parameters for the proposal and the
potentials. The corresponding weights in the \gls{SMC} 
algorithm are
\begin{align}
    \uwp_\tm(\xp_{1:\tm}\idxby\tparams_{\tm:\tm+1}) &= \frac{\gmod_\tm(\y_\tm\given\xp_{1:\tm}) 
    \tpot_\tm(\xp_{1:\tm} \idxby \tparams_{\tm+1})}{\tpsi_{\tm}(\xp_{1:\tm} 
    \idxby \tparams_\tm)}.
    \label{eq:twisted:iapf:weights}
\end{align}
What remains is how to use \cref{eq:twisted:iapf:opttwist} to actually 
learn a useful set of potentials. To do this we follow the approach by 
\citet{guarniero2017iterated} which uses an iterative refinement 
process: initialize parameters, run \gls{SMC} with the current 
parameters, update parameters based on the current \gls{SMC} 
approximation $\widehat\nmod_\tm$, repeat. The update step solves for 
$\tparams_\tm^n$ by recursively minimizing
\begin{subequations}
    \begin{align}
        &\tparams_\Tm^n = \argmin_{\tparams_\Tm} \sum_{\ip=1}^\Np \nwp_\Tm^\ip 
        \left(\tpsi_\Tm(\xp_{1:\Tm}^\ip\idxby 
        \tparams_{\Tm})-\gmod_\Tm(\y_\Tm\given\xp_{1:\Tm}^\ip)
        \right)^2, \\
        &\tparams_\tm^n = \argmin_{\tparams_\tm} \sum_{\ip=1}^\Np \nwp_\tm^\ip 
        \left(\tpsi_\tm(\xp_{1:\tm}^\ip\idxby \tparams_{\tm})-\gmod_\tm(\y_\tm\given\xp_{1:\tm}^\ip)
        \tpot_\tm(\xp_{1:\tm}^\ip\idxby \tparams_{\tm+1}^n)
        \right)^2,
    \end{align}
    \label{eq:iapf:update}%
\end{subequations}
where the last step is for $\tm =\Tm-1, \ldots, 1$. With the updated set 
of parameters, we re-run \gls{SMC} and repeat the updates in \cref{eq:iapf:update}
if a stopping criteria has not been met. 
\citet{guarniero2017iterated} proposes a stopping criteria based on 
the normalization constant estimate $\widehat\Z_\Tm$.

We summarize the iterated auxiliary \gls{SMC} method explained above in 
\cref{alg:iteratedAPF}, and give an example setup for a stochastic 
\gls{RNN} in \cref{ex:twisting:srnn}.
\begin{algorithm}[tb]
    Initialize $\tparams_{1}^0, \ldots, \tparams_{\Tm}^0$.
    
    \While{(stopping criteria not met)}{
        Run \cref{alg:esmc:smc} with current parameters $\tparams_{1}^{n-1}, 
        \ldots, \tparams_{\Tm}^{n-1}$. The proposal and weights are 
        given by \cref{eq:twisted:iapf:prop} and 
        \cref{eq:twisted:iapf:weights}, respectively.
        
        Update the parameters to $\tparams_{1}^{n}, 
        \ldots, \tparams_{\Tm}^{n}$ by solving \cref{eq:iapf:update}.
    }
    \caption{Iterated auxiliary \gls{SMC}}
    \label{alg:iteratedAPF}
\end{algorithm}

\begin{example}[Stochastic Recurrent Neural Network]
A stochastic \gls{RNN} is a non-Markovian \gls{LVM} where the 
parameters of the transition and observation models are defined by 
\gls{RNN}s. A common example is using the conditional Gaussian 
distribution to define the transition \gls{PDF}
\begin{align*}
    \fmod_\tm(\xp_\tm\given\xp_{1:\tm-1}) &= \Norm\left(\xp_\tm\given 
    \mu_\tm(\xp_{1:\tm-1}), \Sigma_\tm(\xp_{1:\tm-1})\right),
\end{align*}
where the functions $\mu_\tm(\cdot), \Sigma_\tm(\cdot)$ are defined by
\gls{RNN}s. This model together with a Gaussian-like definition for 
the potentials satisfies the criteria for using 
\cref{alg:iteratedAPF}. We define $\tpsi_\tm$ as follows
\begin{align*}
    \tpsi_\tm(\xp_{1:\tm}\idxby\tparams_\tm) &= 
    \exp\left(-\frac{1}{2}\xp_\tm^\top 
    \Lambda_\tm(\xp_{1:\tm-1})\xp_\tm + \iota_\tm(\xp_{1:\tm-1})^\top 
    \xp_\tm + c_\tm(\xp_{1:\tm-1})\right),
\end{align*}
where the functions $\Lambda_\tm(\cdot), \iota_\tm(\cdot), c_\tm(\cdot)$ 
depend on the parameters $\tparams_\tm$---for notational brevity we have not made this dependence explicit.
The proposal (\cf \cref{eq:twisted:iapf:prop}) is given by
\begin{align*}
    \prop_\tm(\xp_\tm\given\xp_{1:\tm-1}\idxby\tparams_\tm) &= \Norm\left(\xp_\tm\given 
    \widehat\mu_\tm, \widehat\Sigma_\tm\right), \\
    \widehat\Sigma_\tm  &= 
    \left(\Sigma_\tm(\xp_{1:\tm-1})^{-1}+\Lambda_\tm(\xp_{1:\tm-1})\right)^{-1}, \\
    \widehat\mu_\tm &= \widehat\Sigma_\tm 
    \left(\Sigma_\tm(\xp_{1:\tm-1})^{-1}\mu_\tm(\xp_{1:\tm-1}) + 
    \iota_\tm(\xp_{1:\tm-1})\right),
\end{align*}
and the twisting potential
\begin{align*}
    &\tpot_{\tm-1}(\xp_{1:\tm-1}\idxby\tparams_{\tm}) = 
    \Exp_{\fmod_\tm(\xp_\tm\given\xp_{1:\tm-1})}
    \left[\tpsi_\tm(\xp_{1:\tm}\idxby\tparams_\tm)\right] = \sqrt{\frac{\det\left(\widehat\Sigma_\tm\right)}
    {\det\left(\Sigma_\tm(\xp_{1:\tm-1})\right)}}\\
    &\cdot \exp\left(\frac{1}{2}\widehat\mu_\tm^\top 
    \widehat\Sigma_\tm^{-1} \widehat\mu_\tm - 
    \frac{1}{2}\mu_\tm(\xp_{1:\tm-1})^\top \Sigma_\tm(\xp_{1:\tm-1})^{-1}\mu_\tm(\xp_{1:\tm-1})
    +c_\tm(\xp_{1:\tm-1})\right).
\end{align*}
\label{ex:twisting:srnn}
\end{example}

Twisting the target distribution is an area of active research 
\citep{guarniero2017iterated,heng2017controlled,lawson2018twisted,lindsten2018}.
The iterated auxiliary \gls{SMC} \citep{guarniero2017iterated} makes 
strong assumptions on the model for easier optimization. 
End-to-end optimization of a priori independent twisting potentials 
$\tpot_\tm$ and proposals $\prop_\tm$ might lead to even more accurate 
algorithms.



\newpage
\begin{subappendices}
\section{Taylor and Unscented Transforms}\label{sec:proptwisting:tut}
In this appendix we detail the \gls{EKF}- and \gls{UKF}-based 
approximations to the distribution $p(\xp_\tm, \y_\tm \given \xp_{1:\tm-1})$. We refer 
to the two approaches as Taylor and Unscented transforms, respectively.

\paragraph{Taylor Transform}
If we assume that $\vn_\tm$ and $\en_\tm$ are zero-mean and linearize $\amod$ around $\vn_\tm = 0$ and 
$\cmod$ around $\xp_\tm = \amod(\xp_{1:\tm-1},0),  \en_\tm = 0$ we get
\begin{subequations}
    \begin{align}
    \xp_\tm &\approx  \bar\amod + 
    \jacob_{\vn}^{\amod}(\bar\amod)\vn_\tm,
    \label{eq:linearized_a}\\
    \y_\tm &\approx \bar\cmod + 
    \jacob_{\xp_\tm}^{\cmod}(\bar\cmod)(\xp_\tm-\bar\amod)+
    \jacob_{\en_\tm}^{\cmod}(\bar\cmod)\en_\tm,
    \label{eq:linearized_c}
    \end{align}\label{eq:linearized_ac}%
\end{subequations}
where $\bar \amod = \amod(\xp_{1:t-1}, 0)$, $\bar \cmod =\cmod\left((\xp_{1:t-1},\bar\amod),0\right)$, and 
$\jacob_{\vn_\tm}^{\amod}(\bar\amod)$ denotes the Jacobian of the 
function $\amod$ with respect to $\vn_\tm$ evaluated at the point 
$\bar\amod$. We rewrite \cref{eq:linearized_ac}:
\begin{subequations}
    \begin{align}
    \xp_\tm &\approx  \bar\amod + 
    \jacob_{\vn}^{\amod}(\bar\amod)\vn_\tm,
    \label{eq:linearized_rw_a}\\
    \y_\tm &\approx \bar\cmod +
    \jacob_{\xp_\tm}^{\cmod}(\bar\cmod)\jacob_{\vn}^{\amod}(\bar\amod)\vn_\tm+
    \jacob_{\en_\tm}^{\cmod}(\bar\cmod)\en_\tm.
    \label{eq:linearized_rw_c}
    \end{align}\label{eq:linearized_rw_ac}%
\end{subequations}
For the approximation in \cref{eq:linearized_rw_ac} $\xp_\tm,\y_\tm \given \xp_{1:\tm-1}$ 
is indeed Gaussian, and with $\vn_\tm \sim \Norm(0,Q), \en_\tm ~\sim 
\Norm(0,R)$, we can identify the blocks of $\widehat\mu$ and 
$\widehat\Sigma$:
\begin{subequations}
    \begin{align}
    \widehat\mu_{\xp} &= \bar\amod, \\
    \widehat\mu_\y &= \bar\cmod, \\
    \widehat\Sigma_{\xp\xp} &= \jacob_{\vn}^{\amod}(\bar\amod)\, Q\, 
    \jacob_{\vn}^{\amod}(\bar\amod)^\top, \\
    \widehat\Sigma_{\xp\y} &= \widehat\Sigma_{\y\xp}^\top = \jacob_{\vn}^{\amod}(\bar\amod)\, Q\, 
    \jacob_{\vn}^{\amod}(\bar\amod)^\top 
    \jacob_{\xp_\tm}^{\cmod}(\bar\cmod)^\top,\\
    \widehat\Sigma_{\y\y} &= 
    \jacob_{\xp_\tm}^{\cmod}(\bar\cmod)\jacob_{\vn}^{\amod}(\bar\amod) \, Q\, 
    \jacob_{\vn}^{\amod}(\bar\amod)^\top 
    \jacob_{\xp_\tm}^{\cmod}(\bar\cmod)^\top + 
    \jacob_{\en_\tm}^{\cmod}(\bar\cmod)\, R\, 
    \jacob_{\en_\tm}^{\cmod}(\bar\cmod)^\top.
    \end{align}\label{eq:ekf_proposal}%
\end{subequations}
Using \eg automatic differentiation we can easily compute the Jacobians 
needed for the \gls{EKF}-based proposal. However, we still have to be 
able to evaluate the densities defined by 
$\amod(\xp_{1:\tm-1},\vn_\tm)$ and $\cmod(\xp_{1:\tm},\en_\tm)$ for 
the weight updates $\uwp_\tm$. For notational convenience 
we have omitted the dependence on previous latent states in the 
expressions above. However, in general because of this dependence
we will have to compute the proposals separately for each particle, which 
includes inverse matrix computations that can be costly if 
$\xp_\tm$ is high-dimensional.

\paragraph{Unscented Transform}
We rewrite \cref{eq:structural:ac} as a single joint random variable, 
and such that the functions $\amod,\cmod$ only depend on $\xp_{1:\tm-1}, \vn_\tm$, and $\en_\tm$,
\begin{align}
    \left(\begin{array}{c}
    \xp_\tm \\
    \y_\tm
    \end{array}\right)
    &=
    \left(\begin{array}{c}
    \amod(\xp_{1:\tm-1}, \vn_\tm) \\
    \cmod\left(\left(\xp_{1:\tm-1},\amod(\xp_{1:\tm-1}, \vn_\tm)\right), \en_\tm\right)
    \end{array}\right)
    \defeq \zmod(\xp_{1:\tm-1},\zn_\tm),
    \label{eq:ukf_joint_ac}
\end{align}
where $\zn_\tm = (\vn_\tm, \en_\tm)^\top$. By approximating the 
conditional distribution of $(\xp_\tm, 
\y_\tm)^\top$ given $\xp_{1:\tm-1}$ as Gaussian, we can again compute 
the distribution of $\xp_\tm \given \xp_{1:\tm-1}, \y_{\tm}$ for 
the approximation. 
We choose sigma points based on the two first moments of $\zn_\tm$. 
The mean and variance of $\zn_\tm$ is
\begin{align*}
    \mu_\zn &= \Exp[\zn_\tm], \quad
    \Sigma_\zn =  \Var(\zn_\tm) = \sum_{\lm=1}^{\dimX+\dimY} 
    \sigma_\lm^2 
    \uv_\lm \uv_\lm^\top,
\end{align*}
where the scalars $\sigma_\lm$ and vectors $\uv_\lm$ correspond to the 
singular value decomposition of $\Sigma_\zn$. We then choose the 
$2\dimX+2\dimY + 1$ sigma points for $\zn$ as follows,
\begin{align*}
    \sigp^0 &= \mu_\zn,  \quad
    \sigp^{\pm\lm} = \mu_\zn \pm \sigma_\lm \sqrt{\dimX+\dimY+\lambda} 
    \cdot \uv_\lm,
\end{align*}
where $\lambda=\alpha^2(\dimX+\dimY+\kappa)-\dimX-\dimY$ \citep{julier1997new}. We will 
discuss the choice of design parameters $\alpha$ and $\kappa$ below. 
The mean and variance of $(\xp_\tm, \y_\tm)^\top$ is estimated by
\begin{subequations}
    \begin{align}
        &\Exp\left[(\xp_\tm, \y_\tm)^\top\right] \approx \left(\begin{array}{c} \widehat \mu_{\xp} \\ \widehat 
        \mu_{\y}\end{array}\right) =
        \sum_{\lm=-(\dimX+\dimY)}^{\dimX+\dimY} \sigw^\lm \zmod(\xp_{1:\tm-1}, 
        \sigp^\lm) , \\
        &\Var\left((\xp_\tm, \y_\tm)^\top\right) \approx 
        \left(\begin{array}{cc}
        \widehat \Sigma^{\xp\xp} & \widehat \Sigma^{\xp\y}  \\
        \widehat \Sigma^{\y\xp} & \widehat \Sigma^{\y\y}  
        \end{array}\right) =  \nonumber\\
        &(1-\alpha^2+\beta)\left(\zmod(\xp_{1:\tm-1},\sigp^0)- 
        \left(\begin{array}{c} \widehat \mu_{\xp} \\ \widehat 
        \mu_{\y}\end{array}\right)\right)\left(\zmod(\xp_{1:\tm-1},\sigp^0)- 
        \left(\begin{array}{c} \widehat \mu_{\xp} \\ \widehat 
        \mu_{\y}\end{array}\right)\right)^\top+
        \nonumber\\
        &\sum_{\lm=-(\dimX+\dimY)}^{\dimX+\dimY} \sigw^\lm \left( \zmod(\xp_{1:\tm-1}, 
        \sigp^\lm) - \left(\begin{array}{c} \widehat \mu_{\xp} \\ \widehat 
        \mu_{\y}\end{array}\right)\right) \left( \zmod(\xp_{1:\tm-1}, 
        \sigp^\lm) - \left(\begin{array}{c} \widehat \mu_{\xp} \\ \widehat 
        \mu_{\y}\end{array}\right)\right)^\top,
    \end{align}
    \label{eq:ukf_based_musig}%
\end{subequations}
where the coefficients $\sigw^\lm$ are
\begin{align*}
    \sigw^0 &= \frac{\lambda}{\dimX+\dimY+\lambda}, \quad \sigw^{\pm\lm} = 
    \frac{1}{2(\dimX+\dimY+\lambda)},
\end{align*}
and $\beta$ is another design parameter.

Like in the \gls{EKF}-based 
approximation we still need access to the densities corresponding to 
\cref{eq:structural:ac}. Unlike in the \gls{EKF} case the, 
\gls{UKF}-based approximation does not need access to derivatives. 
However, we need to choose the three design parameters: $\alpha, \beta, \kappa$. 
Both $\alpha$ and $\kappa$ 
control the spread of the sigma points, and $\beta$ is related to the 
distribution of $\zn_\tm$ \citep{julier2002scaled}. Typical values of 
these parameters are $(\alpha, \kappa, \beta) = (10^{-3}, 0, 2)$.
\end{subappendices}

\chapter{Nested Monte Carlo: Algorithms and Applications}\label{sec:pm}
In this chapter we will discuss nesting \acrfull{MC} algorithms, using several layers of \gls{MC} to construct composite algorithms. Usually, this is done to approximate some intractable idealized algorithm. 
The intractability can stem from \eg a likelihood 
or weight that we cannot evaluate in closed form, or a proposal distribution that 
we cannot sample exactly from. Either way, the key idea of these
methods is to introduce additional auxiliary 
variables to tackle the intractability in the idealized method. 
The approximate method is constructed in such a way that it retains 
some of the favorable properties of the idealized \gls{MC} method that 
it tries to emulate. We will in this section study particle-based \acrfull{PM} methods for 
several idealized \gls{MC} methods, ranging from marginal \gls{MH} 
\citep{andrieu2009pseudo,andrieu2010particle} to distributed \gls{SMC} 
\citep{verge2015parallel} and various nested \gls{SMC} methods 
\citep{fearnhead2010random,chopin2013smc2,naesseth2015nested}. 

In the first section below we discuss the unbiasedness property of the 
\gls{SMC} normalization constant estimate, a key property underpinning 
several nested \gls{MC} methods. Then, in the next section we discuss 
\gls{PMH}. Next, we introduce the concept of 
\emph{proper weights} and discuss various \gls{SMC} methods relying on 
random (but proper) weights. 
Finally, we introduce a few perspectives on distributed \gls{SMC}.

\section{The Unbiasedness of the Normalization Constant Estimate}\label{sec:unbiasedZ}
We briefly touched upon one of the key theoretical properties of 
\gls{SMC} in \cref{sec:analysisconvergence}, the unbiasedness of the 
normalization constant estimate $\hatZ_\tm$. This will be important 
for several of the \gls{PM} methods that we discuss in this chapter. 
We repeat the definition of $\hatZ_\tm$ again for clarity:
\begin{align}
    \hatZ_\tm &= \prod_{\km = 1}^\tm \frac{1}{\Np} \sum_{\ip=1}^\Np 
    \uwp_\km^\ip.
    \label{eq:pm:zhat}
\end{align}
In \cref{prop:pm:unbiased} we provide a result that formalizes 
the unbiasedness property. Furthermore, we provide a straightforward self-contained 
proof of this important property of the \gls{SMC} algorithm.
\begin{proposition}
The \gls{SMC} normalization constant estimate $\hatZ_\tm$ is non-negative 
and unbiased
    \begin{align}
        \Exp \left[\hatZ_\tm \right] = \Z_\tm, \quad
        \hatZ_\tm &\geq 0.
    \end{align}
    \label{prop:pm:unbiased}
\end{proposition}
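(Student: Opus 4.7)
Non-negativity is immediate: each $\umod_\tm$ is by assumption a positive integrable function and each proposal $\prop_\tm$ is a density, so every unnormalized weight $\uwp_\km^\ip$ is non-negative and $\hatZ_\tm$ is a product of averages of non-negative quantities. For unbiasedness I would proceed by induction on $\tm$, but stated over \emph{any} admissible sequence of unnormalized targets $\umod_1,\ldots,\umod_\tm$ rather than the fixed sequence we began with. The base case $\tm=1$ is a one-line importance-sampling identity: since $\xp_1^\ip \iidsim \prop_1$, $\Exp[\hatZ_1]=\Exp[\uwp_1^1]=\int \umod_1(\xp_1)\dif\xp_1=\Z_1$.

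For the inductive step, let $\mathcal{F}_{\tm-1}$ denote all particles and ancestor indices generated through iteration $\tm-1$. Conditional on $\mathcal{F}_{\tm-1}$, the ancestor $\ap_\tm^\ip$ is drawn with probability $\nwp_{\tm-1}^{\ap_\tm^\ip}$ and then $\xp_\tm^\ip \sim \prop_\tm(\cdot\given\xp_{1:\tm-1}^{\ap_\tm^\ip})$, independently across $\ip$. Integrating out $\xp_\tm^\ip$ against $\prop_\tm$ cancels the proposal in the denominator of $\uwp_\tm^\ip$, and a short calculation gives
\[
\Exp\!\left[\frac{1}{\Np}\sum_\ip \uwp_\tm^\ip \,\middle|\, \mathcal{F}_{\tm-1}\right] = \sum_\jp \nwp_{\tm-1}^\jp\, \frac{\umod_\tm(\xp_{1:\tm-1}^\jp)}{\umod_{\tm-1}(\xp_{1:\tm-1}^\jp)},
\]
where $\umod_\tm(\xp_{1:\tm-1}) \eqdef \int \umod_\tm(\xp_{1:\tm})\dif\xp_\tm$ is the marginal of $\umod_\tm$, using the paper's convention of overloading the symbol.

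Multiplying by $\hatZ_{\tm-1}$ and using the telescoping identity $\hatZ_{\tm-1}\,\nwp_{\tm-1}^\jp = \tfrac{1}{\Np}\uwp_{\tm-1}^\jp \prod_{\km=1}^{\tm-2}\bigl(\tfrac{1}{\Np}\sum_\ell \uwp_\km^\ell\bigr)$, the $\umod_{\tm-1}(\xp_{1:\tm-1}^\jp)$ factors cancel and the resulting expression is \emph{exactly} the level-$(\tm-1)$ \gls{SMC} normalization-constant estimate for the modified target sequence $\umod_1,\ldots,\umod_{\tm-2},\,\umod_\tm(\xp_{1:\tm-1})$, whose final target integrates to $\int \umod_\tm(\xp_{1:\tm-1})\dif\xp_{1:\tm-1} = \Z_\tm$. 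Taking expectations and invoking the inductive hypothesis on this sequence of length $\tm-1$ yields $\Exp[\hatZ_\tm] = \Z_\tm$, completing the induction. The main obstacle is notational bookkeeping --- one must carefully distinguish the original target sequence from the reduced sequence obtained by marginalizing out $\xp_\tm$, which is exactly why the inductive hypothesis is phrased over all admissible target sequences; once that setup is in place, every step is routine cancellation.
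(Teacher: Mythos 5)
Your proof is correct, but it takes a genuinely different route from the one in the paper. The paper's proof (Appendix~\ref{sec:smc:unbiasedZ}) is a single global computation: it writes out the joint law $\qSMC$ of all particles and ancestor indices, introduces the ancestral-lineage variables $\bp_{1:\Tm}^\ip$, and shows by direct telescoping that $\tfrac{\hatZ_\Tm}{\Z_\Tm}\qSMC$ integrates to one, exploiting the exchangeability over $\ip$ to reduce everything to a single surviving trajectory. You instead argue by backward induction using the tower property, and the crux of your argument is exactly the right one: since $\Exp[\hatZ_\tm \mid \mathcal{F}_{\tm-1}] \neq \hatZ_{\tm-1}$ in general, $\hatZ_\tm$ is not a martingale for the \emph{fixed} target sequence, so you must strengthen the inductive hypothesis to range over all admissible sequences and recognize $\Exp[\hatZ_\tm \mid \mathcal{F}_{\tm-1}]$ as the level-$(\tm-1)$ estimator for the \emph{modified} sequence ending in the marginal $\umod_\tm(\xp_{1:\tm-1})$; the verification that the particles up to iteration $\tm-1$ have the same law under both sequences (identical early targets and proposals) is the one step you leave implicit but it holds. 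What each approach buys: your one-step conditional-expectation identity only uses the marginal law of each ancestor index, so it extends immediately to stratified and systematic resampling, whereas the paper's explicit product form of $\qSMC$ is written for multinomial resampling and must be adapted; conversely, the paper's global rewrite is the one that generalizes to the full statement of \cref{thm:unbiasedness} with a nontrivial test function $\fun_\tm$, and it is reused almost verbatim in the \gls{CSMC} inverse-normalization proof, which your induction does not directly give you.
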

\begin{proof}
See Appendix~\ref{sec:smc:unbiasedZ}.
\end{proof}
Often it will prove useful to think of the $\hatZ_\tm$ simply as a 
function of the random seed used to generate all the random variables in 
the \gls{SMC} algorithm. Below we formalize this point-of-view, which 
will be useful for \eg \gls{PMH} and \gls{NSMC} in \cref{sec:pmh} 
and \cref{sec:pwsmc}, respectively.

\paragraph{Auxiliary Variables and Random Seeds}
The complete set of random variables generated in the \gls{SMC} 
algorithm consists of each of the particles $\xp_\tm^\ip$, for 
$\ip \in \{1,\ldots,\Np\}$ and $\tm\in\{1,\ldots,\Tm\}$, and ancestor 
variables $\ap_\tm^\ip$ for $\ip \in \{1,\ldots,\Np\}$ and 
$\tm\in\{1,\ldots,\Tm-1\}$. Both $\xp_\tm^\ip$ and $\ap_\tm^\ip$ are 
often generated by sampling base auxiliary random numbers such as Gaussian or 
uniforms, then the particles and ancestor variables are simply a 
function of these auxiliary variables. We will denote the collection 
of all base auxiliary random variables, also referred to as the 
\emph{random seed}, by $\up$. Each individual particle 
$\xp_\tm^\ip$ and ancestor $\ap_\tm^\ip$ are then a transformation from 
the random seed. For example we might have $\up = u_{\xp,1:\Tm}^{1:\Np} 
\cup u_{\ap,1:\Tm-1}^{1:\Np}$, where each $u_{\xp,\tm}^\ip, 
u_{\ap,\tm}^\ip\sim \Uni(0,1)$. The function that connects 
$u_{\xp,\tm}^\ip$ to $\xp_\tm^\ip$ and $u_{\ap,\tm}^\ip$ to 
$\ap_\tm^\ip$ are then the inverse \glspl{CDF} of the 
proposal $\prop_\tm$ and resampling mechanism from \cref{sec:resampling}, 
respectively. 
A common tool in \gls{PM} is to think of the normalization constant 
estimate in \cref{eq:pm:zhat} as a function of all the random 
variables $\up$, effectively the random seed of the \gls{SMC} 
algorithm. Then we can write \cref{eq:pm:zhat} as 
$\hatZ_\tm(\up)$, \ie
\begin{align}
    \hatZ_\tm(\up) \eqdef \prod_{\km = 1}^\tm \frac{1}{\Np} \sum_{\ip=1}^\Np 
    \uwp_\km^\ip,
    \label{eq:pm:zhatseed}
\end{align}
where the weights $\uwp_\km^\ip$ are functions of the random seed 
$\up$. We denote the distribution of the random seed $\up$ by 
$\rsdist(\up)$. The main result in \cref{prop:pm:unbiased} can then be 
written in terms of the random seed $\up$: $\Exp_{\rsdist(\up)}\left[\hatZ_\tm(\up)\right] = 
\Z_\tm$.

\begin{example}[$ \hatZ_\tm(\up)$ for \cref{ex:running}]
We revisit our running non-Markovian Gaussian example and consider a standard \gls{SMC} estimator with the prior proposal and no twisting of the target. This means that for $\up = \varepsilon_{1:\Tm}^{1:\Np} \cup u_{1:\Tm-1}^{1:\Np}$, where $\varepsilon_\tm^{\ip}\sim \Norm(0,1)$ and $u_\tm^{\ip}\sim\Uni(0,1)$,  we have
\begin{align*}
	\hatZ_\tm(\up) =\prod_{\km = 1}^\tm \frac{1}{\Np} \sum_{\ip=1}^\Np 
    \uwp_\km^\ip = \prod_{\km = 1}^\tm \frac{1}{\Np} \sum_{\ip=1}^\Np 
   \gmod_\km\left(\y_\km\given \xp_{1:\km}^\ip\right),
\end{align*}
where $\xp_{1:\km}^\ip = (\xp_{1:\km-1}^{\ap_{\km-1}^\ip}, \xp_\km^\ip)$, the ancestor variable $\ap_{\km-1}^\ip$ is defined by the resampling mechanism \cref{sec:resampling} using $u_{\km-1}^\ip$, and 
\begin{align*}
	\xp_\km^\ip = \phi \xp_{\km-1}^{\ap_{\km-1}^\ip} + \sqrt{q} \varepsilon_\km^\ip.
\end{align*}
In \cref{code:rs:running} we provide a code snippet, definitions left out for clarity, that illustrates how $\hatZ_\tm(\up)$ can be computed. For numerical stability purposes we compute the log-normalization constant estimate, $\log\hatZ_\tm(\up)$, instead.
\begin{lstlisting}[caption={Computing $\log\hatZ_\tm(\up)$ for \cref{ex:running} with $\up=\varepsilon_{1:\Tm}^{1:\Np} \cup u_{1:\Tm-1}^{1:\Np}$.},
label=code:rs:running]
def multinomial_resampling(w, u):
	bins = np.cumsum(w)
    return np.digitize(u,bins).astype(int)
...
for t in range(T):
	if t > 0:
		a=multinomial_resampling(w,u[:,t-1])
		mu = mu[a]
	x[:,t]=phi*x[a,t-1]+np.sqrt(q)*eps[:,t]
	mu = beta*mu + x[:,t]
	logw = norm.logpdf(y[t], mu, np.sqrt(r))
	w = np.exp(logw-logsumexp(logw))
	logZ += logsumexp(logw)-np.log(N)
\end{lstlisting}
\label{ex:rs:hatZ}
\end{example}

\section{Particle Metropolis-Hastings}\label{sec:pmh}
\Gls{PMH} melds particle-based methods, such as \gls{IS} and \gls{SMC}, 
with \acrlong{MH}, providing the practitioners with powerful 
approximate Bayesian inference for a large class of intractable 
likelihood problems. The particle \gls{MCMC} methods discussed in this section were mainly developed in \citet{andrieu2010particle}. 

\Gls{MH} is a \gls{MCMC} method 
\citep{robert2004monte} for approximate simulation from a \gls{PDF}. 
The typical example is the posterior distribution of unknown model 
parameters $\mparams$ given the observed data $\ym$, 
\begin{align}
    p(\mparams \given \ym) &= \frac{p(\mparams)p(\ym\given\mparams)}{p(\ym)} = 
    \frac{p(\mparams) \int p(\xm, \ym\given \mparams) \dif 
    \xm}{p(\ym)},
    \label{eq:pm:posterior}
\end{align}
where here $\xm$ are latent (unknown) variables that we wish to 
integrate out. The likelihood $p(\ym\given\mparams) = \int 
p(\xm,\ym\given\mparams) \dif\xm$ is often intractable. A common 
solution is to use data augmentation, sampling instead (approximately) 
from the posterior distribution of both $\mparams$ and $\xm$ given the 
data $\ym$, \ie $p(\mparams,\xm\given\ym)$. Marginal \gls{MH}, see 
\cref{alg:mh}, on the 
other hand generates approximate samples directly from the marginal 
distribution $p(\mparams \given \ym)$, provided that the likelihood 
can be evaluated exactly. The key idea is to generate a Markov chain $\mparams_1, \mparams_2, \ldots$ with a stationary distribution equal to the target distribution of interest \cref{eq:pm:posterior}. The accept-reject step in \cref{alg:mh}, accepting the proposed value $\mparams'$ as the new iterate with probability $A$, ensures that the stationary distribution is equal to \cref{eq:pm:posterior}. The acceptance probability 
\begin{align*}
A =  \min\left(1,\frac{p(\ym\given\mparams')\,p(\mparams')\, \prop(\mparams^{\jp-1}\given\mparams')}
        {p(\ym\given\mparams^{\jp-1})\,p(\mparams^{\jp-1})\,\prop(\mparams'\given\mparams^{\jp-1})}\right)
\end{align*}
is the minimum of one and 
the ratio of the posterior and the proposal evaluated at the proposed value, divided by the same ratio evaluated at the previous iterate.
For a more thorough treatment of the \gls{MH} algorithm see \eg \citet{robert2004monte}.
\begin{algorithm}[tb]
    \SetKwInOut{Input}{input}\SetKwInOut{Output}{output}
    \Input{\Gls{PDF} $p(\ym,\mparams)$, proposal 
    $\prop(\mparams'\given\mparams)$, initial iterate
    $\mparams^0$, number of iterations $\Jp$.}
    \BlankLine
    \For{$\jp=1$ \KwTo $\Jp$}{
        Sample $\mparams' \sim 
        \prop(\mparams'\given\mparams^{\jp-1})$\\
        Compute $A = \min\left(1,\frac{p(\ym\given\mparams')\,p(\mparams')\, \prop(\mparams^{\jp-1}\given\mparams')}
        {p(\ym\given\mparams^{\jp-1})\,p(\mparams^{\jp-1})\,\prop(\mparams'\given\mparams^{\jp-1})}
        \right)$\\
        Sample $u \sim \Uni(0,1)$\\
        
        \eIf{$u < A$}{Set $\mparams^\jp = \mparams'$}
        {Set $\mparams^\jp = \mparams^{\jp-1}$}
    }
    \caption{\Acrlong{MH}}\label{alg:mh}
\end{algorithm}

\subsection{Particle Marginal Metropolis-Hastings}\label{sec:pm:pmmh}
The \acrlong{PM} method replaces 
the exact likelihood with an estimate of it, usually constructed by 
\gls{IS} or \gls{SMC} like in \cref{eq:pm:zhat}. For a fixed value of 
the parameter $\mparams$ we can use the tools from the previous 
chapters to construct a non-negative and unbiased approximation to 
$p(\ym\given\mparams)$, we will use the notation from 
\eqref{eq:pm:zhatseed} and label this estimate 
$\hatZ(\up,\mparams)$. From \cref{prop:pm:unbiased} we have that
\begin{align}
    \Exp_{\rsdist(\up)}\left[\hatZ(\up,\mparams)\right] &= 
    p(\ym\given\mparams), & \hatZ(\up,\mparams) \geq 0.
\end{align}
\begin{algorithm}[tb]
    \SetKwInOut{Input}{input}\SetKwInOut{Output}{output}
    \Input{Proposals 
    $\prop(\mparams'\given\mparams), \rsdist(\up)$, likelihood approximation $\hatZ(\up,\mparams)$, initial iterates
    $\mparams^0, \up^0$, number of iterations $\Jp$.}
    \BlankLine
    \For{$\jp=1$ \KwTo $\Jp$}{
        Sample $\mparams' \sim 
        \prop(\mparams'\given\mparams^{\jp-1})$, $\up' \sim \rsdist(\up')$ \\
        Compute $\hatZ(\up',\mparams')$ \\
        Compute $A = \min\left(1,\frac{\hatZ(\up',\mparams') \,p(\mparams')\, \prop(\mparams^{\jp-1}\given\mparams')}
        {\hatZ(\up^{\jp-1},\mparams^{\jp-1})\,p(\mparams^{\jp-1})\,\prop(\mparams'\given\mparams^{\jp-1})}
        \right)$\\
        Sample $u \sim \Uni(0,1)$\\
        
        \eIf{$u < A$}{Set $\mparams^\jp = \mparams'$, $\up^\jp = \up'$}
        {Set $\mparams^\jp = \mparams^{\jp-1}$, $\up^\jp = \up^{\jp-1}$}
    }
    \caption{\Acrlong{PMMH}}\label{alg:pmmh}
\end{algorithm}
Simply replacing $p(\ym\given\mparams)$ with this unbiased and 
non-negative approximation, $\hatZ(\up,\mparams)$, results in the 
\gls{PMMH} algorithm in \cref{alg:pmmh} \citep{andrieu2010particle}. In a practical implementation we do not need to save the entire seed  $\up$ for each iteration, it suffices to save the scalar value for our normalization constant estimate $\hatZ(\mparams,\up)$. We illustrate the \gls{MH} and \gls{PMMH} algorithms applied to our running example in \cref{ex:pmmh}.
    
\begin{example}[\Acrlong{PMMH} for \cref{ex:running}]
We revisit our running non-Markovian Gaussian example. To keep it simple we focus on the \gls{IS} special case, \ie with $T=1$, which means we are interested in inference for $\mparams = (q, r)$. We assign a log-normal prior to the variance parameters $q$ and $r$, \ie $\log q \sim \Norm(0,1)$ and $\log r \sim \Norm(0,1)$. The corresponding marginal likelihood is then given by $p(\ym\given\mparams) = \Norm(\ym\given 0, q+r)$. We use a normal distribution random walk proposal with variance $0.5$ for the logarithm of the parameters. In \cref{fig:pm:mhvspmmh} we can see the kernel density estimate based on the samples of a $20\, 000$ iterations long run of the two respective algorithms. The two algorithms obtain the same marginal distribution as expected.
\begin{figure}[ht]
    \centering
    \begin{subfigure}[b]{0.45\textwidth}
        \includegraphics[width=\textwidth]{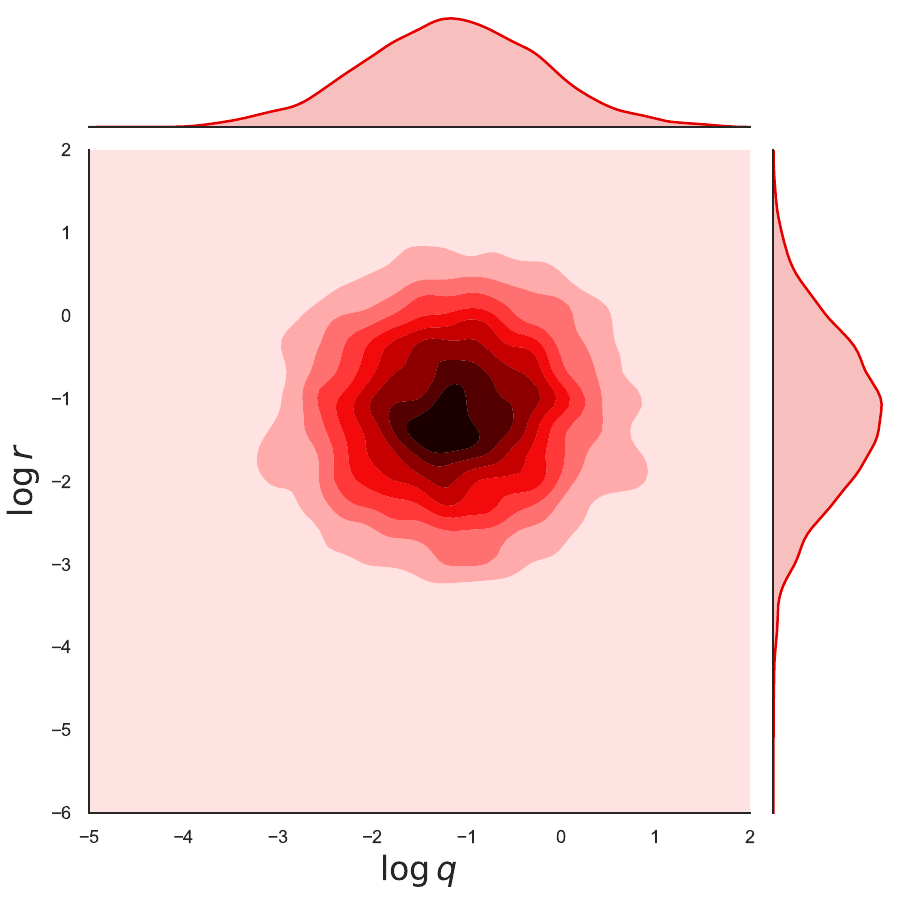}
        \caption{\gls{MH}}
        \label{fig:pm:mh}
    \end{subfigure}
    ~ 
    \begin{subfigure}[b]{0.45\textwidth}
        \includegraphics[width=\textwidth]{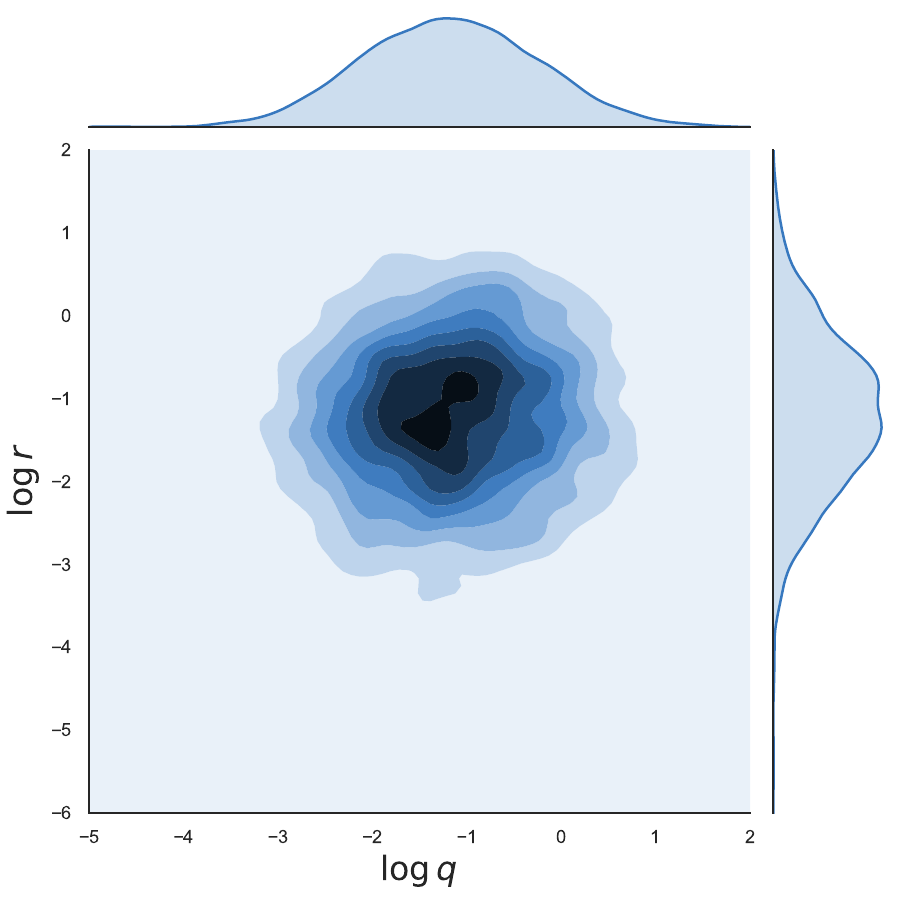}
        \caption{\gls{PMMH} with $\Np=10$}
        \label{fig:pm:pmmh}
    \end{subfigure}
    \caption{Kernel density estimates of the samples for the parameters $\log q$ and $\log r$ using $20\, 000$ \gls{MCMC} iterations.}
    \label{fig:pm:mhvspmmh}
\end{figure}
\label{ex:pmmh}
\end{example}

The motivation for why this works is fairly straightforward. We can think of the \gls{PMMH} algorithm as a standard \gls{MH} on the extended space of $(\mparams,\up)$, \ie adding the random seed as an auxiliary variable. The proposal on this extended space is
\begin{align}
	\prop(\mparams'\given\mparams) \rsdist(\up'),
\end{align}
and the distribution we are trying to draw samples from, \ie the target distribution, is 
\begin{align}
	\frac{\hatZ(\up,\mparams) p(\mparams) \rsdist(\up)}{p(\ym)},
	\label{eq:pm:pmmh:extended}
\end{align}
which by the unbiasedness of $\hatZ(\up,\mparams)$ has the posterior distribution as a marginal 
\begin{align*}
	p(\mparams\given\ym) = \int\frac{\hatZ(\up,\mparams) p(\mparams) \rsdist(\up)}{p(\ym)}\dif \up.
\end{align*}
This means that if we can generate samples approximately distributed as the distribution in \cref{eq:pm:pmmh:extended}, we get samples approximately distributed as $p(\mparams\given\ym)$ as a by-product by considering only the samples for $\mparams$.

To show that \gls{PMMH} is a standard \gls{MH} on an extended space we first note that the proposal $\prop(\mparams'\given\mparams) \rsdist(\up')$ is the distribution of the proposed value $(\mparams',\up')$. Then studying the corresponding acceptance probability $A$ for the above choice of proposal and target distributions,
\begin{align*}
	A &= \min\left( 1 , 
	\frac{\hatZ(\up',\mparams') p(\mparams') \rsdist(\up')}{\hatZ(\up^\jp,\mparams^\jp) p(\mparams^\jp) \rsdist(\up^\jp)}
	\frac{\prop(\mparams^{\jp-1}\given\mparams') \rsdist(\up^\jp)}{ \prop(\mparams'\given\mparams^{\jp-1})\rsdist(\up')}
	\right) \\
	&= \min\left( 1 , 
	\frac{\hatZ(\up',\mparams') p(\mparams')}{\hatZ(\up^\jp,\mparams^\jp) p(\mparams^\jp)}
	\frac{\prop(\mparams^{\jp-1}\given\mparams')}{ \prop(\mparams'\given\mparams^{\jp-1})}
	\right),
\end{align*}
we get exactly the expression from \cref{alg:pmmh}. These two properties ensure that \gls{PMMH} is an \gls{MH} algorithm on the extended space with \cref{eq:pm:pmmh:extended} as its stationary distribution.


\subsection{Sampling the Latent Variables}
\label{sec:pm:pimh}
If we are interested not just in the parameters but in the latent variables also, we can adapt the particle \gls{MH} framework with a simple extension of the \gls{PMMH} algorithm \citep{andrieu2010particle}. We can use particle methods to generate approximate samples from the complete target with both parameters and latent variables, $p(\mparams,\xm\given \ym)$. We simply use the final \gls{SMC} approximation for $p(\xm\given\mparams,\ym)$ that we obtain as a by-product of computing $\hatZ(\up,\mparams)$ as a proposal in the \gls{MH} algorithm. The \gls{SMC} approximation to the conditional distribution $\xm\given\mparams,\ym$ is
\begin{align}
	p(\xm\given\mparams,\ym) &\approx \widehat p(\xm\given\up) \eqdef \sum_{\ip=1}^\Np \nwp_\Tm^\ip \dirac_{\xp_{1:\Tm}^\ip}(\dif \xp_{1:\Tm}),
\end{align}
where we suppress the dependence on the parameters $\mparams$ and data $\ym$ for clarity. Just like $\hatZ(\up,\mparams)$ the conditional distribution estimator is also completely defined by the random seed $\up$.
Using the above distribution as our proposal for $\xm$ lets us derive the \acrlong{PMH} algorithm, see \cref{alg:pimh}. 

Clearly, this algorithm can be applied for sampling the latent variables also when the model
parameters $\mparams$ are all known. In this case we simply drop all $\mparams$-dependent terms and variables in \cref{alg:pimh}. 
The proposal on the extended space is then
\begin{align}
	\rsdist(\up) \, \widehat{p}(\xm\given\up),
\end{align}
which corresponds to first running the \gls{SMC} algorithm (sample $\up'\sim \rsdist(\up)$) and then sample one particle trajectory from the resulting \gls{SMC} approximation (sample $\xm' \sim \widehat{p}(\xm\given\up).$ Since this is done independently from the current state of the Markov chain, it is referred to as an \emph{independent proposal} and the resulting algorithm is know as \gls{PIMH} \citep{andrieu2010particle}.
%
The acceptance probability for \gls{PIMH} is simply
the minimum of one and the ratio of normalization constant estimators, \ie
\begin{align*}
	A &= \min\left(1,\frac{\hatZ(\up')}
        {\hatZ(\up^{\jp-1})}
        \right).
\end{align*}
\begin{algorithm}[tb]
    \SetKwInOut{Input}{input}\SetKwInOut{Output}{output}
    \Input{Proposals 
    $\prop(\mparams'\given\mparams), \rsdist(\up)$, $\widehat{p}(\xm\given\up)$, likelihood approximation $\hatZ(\up,\mparams)$, initial iterates
    $\mparams^0, \up^0, \xm^0$, number of iterations $\Jp$.}
    \BlankLine
    \For{$\jp=1$ \KwTo $\Jp$}{
        Sample $\mparams' \sim 
        \prop(\mparams'\given\mparams^{\jp-1})$, $\up' \sim \rsdist(\up')$, $\xm' \sim \widehat{p}(\xm'\given\up')$ \\
        Compute $\hatZ(\up',\mparams')$ \\
        Compute $A = \min\left(1,\frac{\hatZ(\up',\mparams') \,p(\mparams')\, \prop(\mparams^{\jp-1}\given\mparams')}
        {\hatZ(\up^{\jp-1},\mparams^{\jp-1})\,p(\mparams^{\jp-1})\,\prop(\mparams'\given\mparams^{\jp-1})}
        \right)$\\
        Sample $u \sim \Uni(0,1)$\\
        
        \eIf{$u < A$}{Set $\mparams^\jp = \mparams'$, $\up^\jp = \up'$, $\xm^\jp = \xm'$}
        {Set $\mparams^\jp = \mparams^{\jp-1}$, $\up^\jp = \up^{\jp-1}$, $\xm^\jp = \xm^{\jp-1}$}
    }
    \caption{\Acrlong{PMH}}\label{alg:pimh}
\end{algorithm}

\begin{example}[\Acrlong{PIMH} for \cref{ex:running}]
We revisit our running non-Markovian Gaussian example. We set $T=20$ and run \gls{PIMH} for $\Np=5$, $10$ and $20$. The kernel density estimates of the marginal posterior distributions of $\xp_1$ and $\xp_\Tm$ based on the $10,000$ samples generated can be seen in \cref{fig:illustration:pimh}. We can see that, while in theory exact in the limit of infinite number of samples, the approximations for the lower number of particles $\Np=5$, $10$, are distinctly non-Gaussian. It is only when we use a large enough number of particles, \eg $\Np=20$, that we obtain more satisfactory results for a finite number of \gls{MCMC} iterations.
\begin{figure}[tbp]
\centering
\begin{subfigure}[b]{0.48\textwidth}
\resizebox{1.0\textwidth}{!}{
\includegraphics{{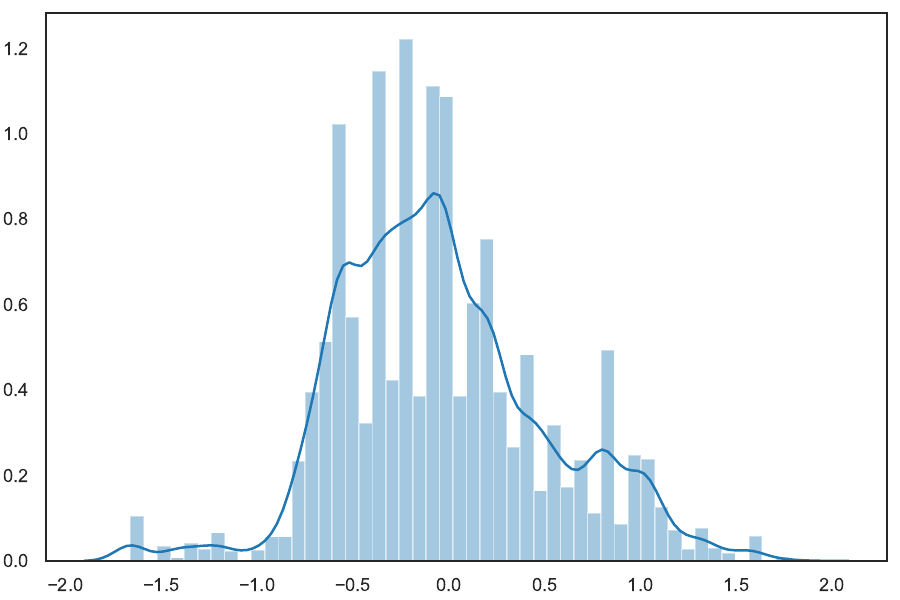}}
}
\caption{$\tm=1$,  $\Np=5$}
\end{subfigure}
\begin{subfigure}[b]{0.48\textwidth}
\resizebox{1.0\textwidth}{!}{
\includegraphics{{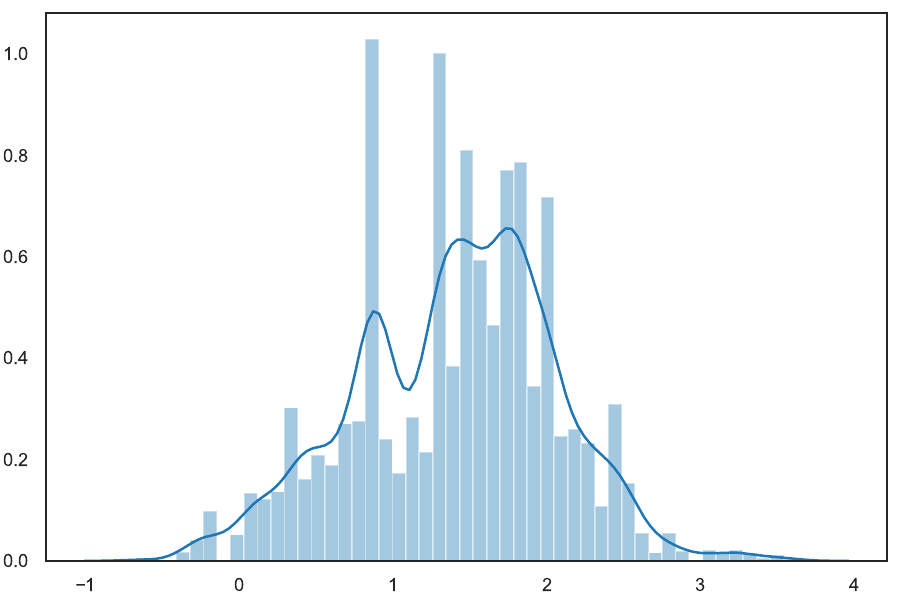}}
}
\caption{$\tm=20$,  $\Np=5$}
\end{subfigure}

\begin{subfigure}[b]{0.48\textwidth}
\resizebox{1.0\textwidth}{!}{
\includegraphics{{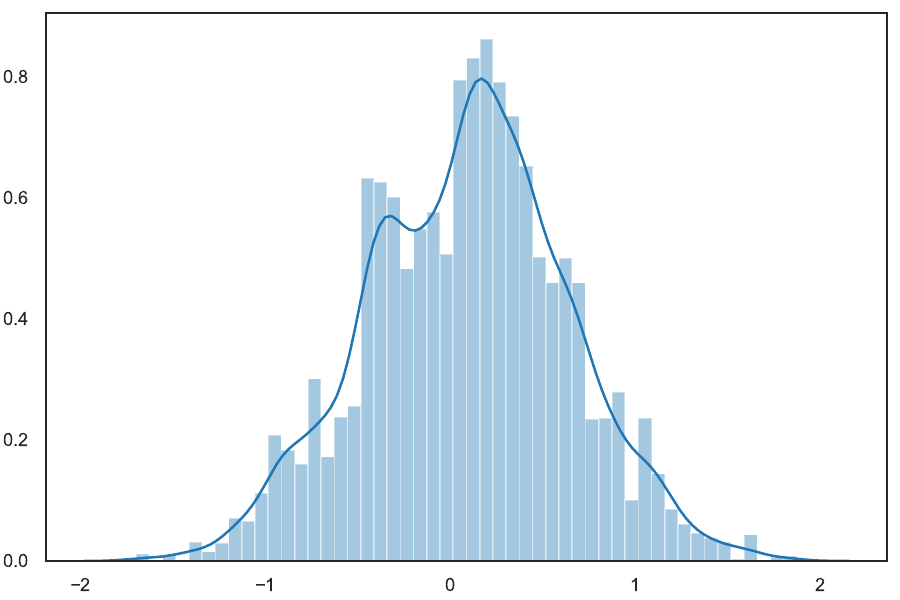}}
}
\caption{$\tm=1$,  $\Np=10$}
\end{subfigure}
\begin{subfigure}[b]{0.48\textwidth}
\resizebox{1.0\textwidth}{!}{
\includegraphics{{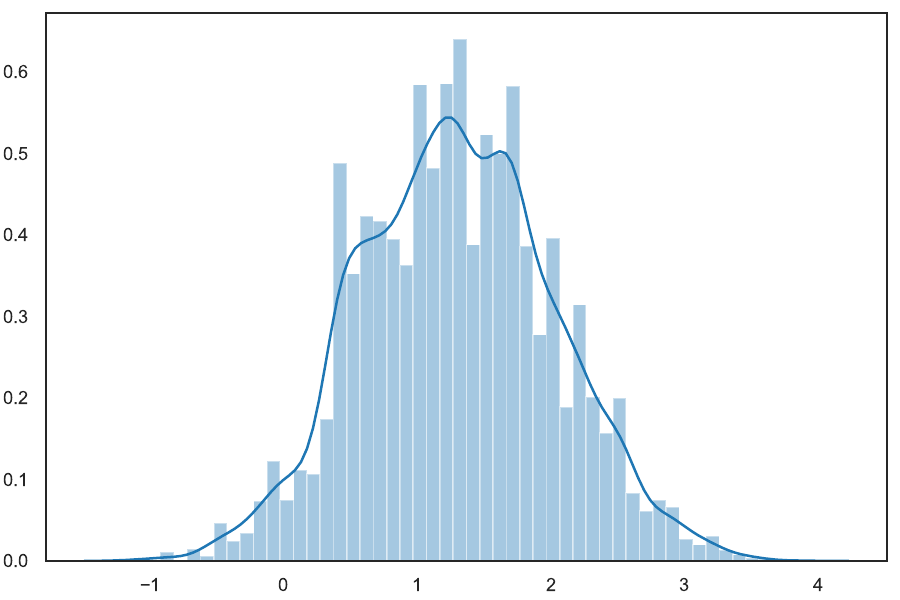}}
}
\caption{$\tm=20$,  $\Np=20$}
\end{subfigure}

\begin{subfigure}[b]{0.48\textwidth}
\resizebox{1.0\textwidth}{!}{
\includegraphics{{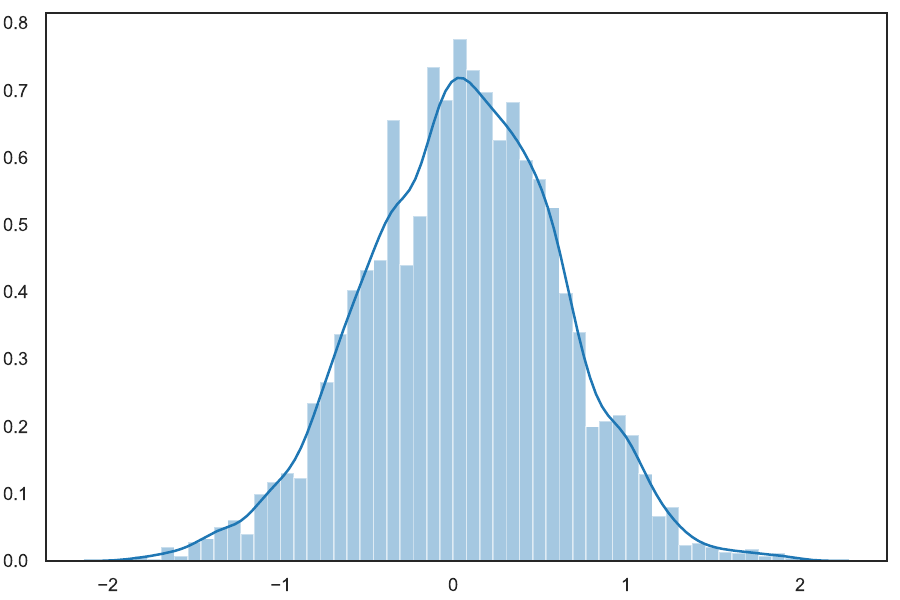}}
}
\caption{$\tm=1$,  $\Np=20$}
\end{subfigure}
\begin{subfigure}[b]{0.48\textwidth}
\resizebox{1.0\textwidth}{!}{
\includegraphics{{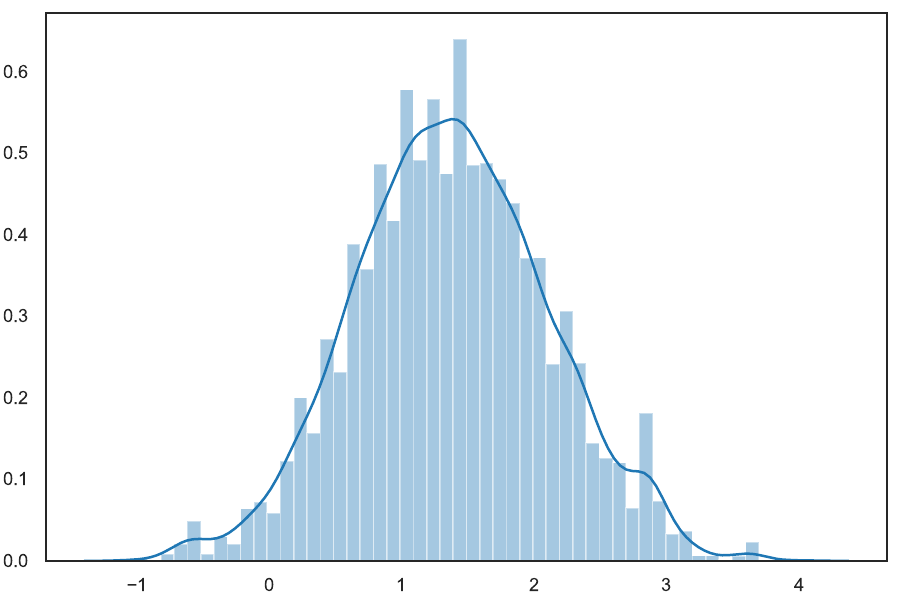}}
}
\caption{$\tm=20$,  $\Np=20$}
\end{subfigure}
\caption{Illustration of the kernel density estimates of the \gls{PIMH} marginal posterior approximation of $\xp_1$ and $\xp_\Tm$ for$T=20$ for $\Np=5$, $10$ and $20$.} \label{fig:illustration:pimh}
\end{figure}
\label{ex:pimh}
\end{example}

Analogously to \gls{PMMH}, we can motivate \gls{PMH} as a standard \gls{MH} algorithm on an extended space. We focus on the the setting where the model parameters are known and the extended space is given by $(\up,\xm)$. The proposal on this extended space is
\begin{align}
	\rsdist(\up) \, \widehat{p}(\xm\given\up),
\end{align}
and the target distribution is
\begin{align}
	\frac{\hatZ(\up) \, \rsdist(\up) \, \widehat{p}(\xm\given\up)}{p(\ym)}.
	\label{eq:pm:pimhtarget}
\end{align}
By the unbiasedness property of \gls{SMC}, \ie \cref{thm:unbiasedness}, we have that the posterior distribution $p(\xm\given\ym)$ is a marginal of \cref{eq:pm:pimhtarget}
\begin{align*}
	\frac{\int \hatZ(\up) \, \rsdist(\up) \, \widehat{p}(\dif \xm\given\up) \dif \up}{p(\ym)} \eqD \frac{p(\xm,\ym)}{p(\ym)} \dif \xm = p(\xm\given\ym) \dif \xm,
\end{align*}
where $\eqD$ denotes equal to in distribution. Studying the corresponding acceptance probability $A$ for the above proposal and target distribution we obtain
\begin{align*}
	A &= \min\left(1,
	\frac{\hatZ(\up') \, \rsdist(\up') \, \widehat{p}(\xm'\given\up') }
	{\hatZ(\up^{\jp-1}) \, \rsdist(\up^{\jp-1}) \, \widehat{p}(\xm^{\jp-1}\given\up^{\jp-1})}
	\frac{\rsdist(\up^{\jp-1}) \, \widehat{p}(\xm^{\jp-1}\given\up^{\jp-1})}
	{\rsdist(\up') \, \widehat{p}(\xm'\given\up')}
	\right) \\
	&= \min\left(1,
	\frac{\hatZ(\up')}{\hatZ(\up^{\jp-1})}
	\right),
\end{align*}
which is identical to the expression from \cref{alg:pimh} (when the model parameters are known).

\subsection{Correlated Pseudo-Marginal Methods}
One of the main limitations of the \gls{PM} methods that we have covered so far is that they require that the normalization constant estimate $\hatZ$ is sufficiently accurate to achieve good performance. This requirement means we have to focus much of the computational effort on the \gls{SMC} algorithm by using a high number of particles $\Np$, rather than generating more samples from the posterior $p(\mparams\given\ym)$. By studying the \gls{PMMH} algorithm we can see that the proposed parameter $\mparams'$ at each iteration is \emph{correlated} with the previous parameter $\mparams^{\jp-1}$, encouraging local exploration and higher acceptance rates. However, this is not true for the random seed $\up$ that is generated independently at each iteration. Introducing correlation also for this random seed, \ie replacing the independent proposal $\rsdist(\up)$ with a correlated proposal $\rsdist(\up' \given\up)$, can allow us to effectively lower the computational cost of the \gls{SMC} estimate through choosing a lower number of particles $\Np$. 
Since the dimension of $\up$ typically increases linearly with $\Np$, we need a proposal distribution which scales favorably to high dimensions. Examples include preconditioned Crank-Nicolson \citep{deligiannidis2018correlated,dahlin2015accelerating}, elliptical slice sampling \citep{MurrayG:2016} and Hamiltonian Monte Carlo \citep{LindstenD:2016}.


\section{Proper Weights and Sequential Monte Carlo}\label{sec:pwsmc}
So far in this manuscript we have focused on \gls{SMC} algorithms where 
the particles are sampled exactly from the proposal $\prop_\tm$ and 
where the weights $\uwp_\tm$ are deterministic functions of the 
particles. We can in fact relax both of these restrictions 
significantly by considering the weights and particles as joint random 
variables that satisfy a \emph{proper weighting} condition. This will 
lead us to powerful algorithms such as the random-weights particle 
filter \citep{fearnhead2010random}, \gls{SMC}$^2$ \citep{chopin2013smc2}, 
and \gls{NSMC} methods \citep{naesseth2015nested,naesseth2016high}.

To formally justify these algorithms we can make use of the concept of proper weighting. This is a property on the expected value of the weights and samples when applied to functions.
\begin{definition}[Proper Weighting]
We say the (random) pair $(\xp, \uwp)$ are \emph{properly weighted} for 
the (unnormalized) distribution $\umod$ if $\uwp \geq 0$ 
$\textrm{a.s.}$ and for all measurable functions $\fun$
\begin{align}
    \Exp\left[\fun(\xp) \uwp(\xp)\right] &= \mathcal{C}\int \fun(\xp) \umod(\xp) 
    \dif \xp,
    \label{eq:properweights}
\end{align}
for some positive constant $\mathcal{C}>0$ that is independent of 
$\xp$ and $\uwp$.
\label{def:properweights}
\end{definition}
This is highly related to the unbiasedness property of the \gls{SMC} method that we saw already in \cref{thm:unbiasedness}. However, while the basic \gls{SMC} has deterministic weights when conditioned on the samples, here we allow also for random weights within the confines of \cref{def:properweights}. See also \citet{liu2004monte} for an early discussion on this topic in the context of random weights and \gls{IS}.

We will denote the distribution of the random pair $(\uwp,\xp)$ by $\propwx(\dif \uwp, \dif \xp)$. By simulating from $\propwx$ independently
\begin{align*}
	(\uwp^\ip, \xp^\ip) &\iidsim \propwx(\dif \uwp, \dif \xp), ~\ip=1,\ldots,\Np,
\end{align*} 
we can construct an estimate of $\Exp_{\nmod}\left[\fun(\xp)\right]$ that satisfies the law of large numbers,
\begin{align}
	\Exp_{\nmod}\left[\fun(\xp)\right] &\approx \frac{ \sum_{\ip=1}^\Np \uwp^\ip \fun(\xp^\ip)}{\sum_{\jp=1}^\Np \uwp^\jp} = \frac{\frac{1}{\Np} \sum_{\ip=1}^\Np \uwp^\ip \fun(\xp^\ip)}{\frac{1}{\Np}\sum_{\jp=1}^\Np \uwp^\jp}.
\end{align}
Proper weighting ensures that the numerator converges to $\Z \mathcal{C} \Exp_{\nmod}[\fun(\xp)]$ and the denominator to $\Z\mathcal{C}$, and thus the right hand side converges to the true expectation $\Exp_{\nmod}[\fun(\xp)]$. Recall that the constant $\mathcal{C}$ is as defined in \cref{def:properweights}.

We have already seen one example of a properly weighted \gls{IS} method in \cref{sec:proposal}, 
namely nested \gls{IS} (\cref{alg:nis}).
Because standard \gls{IS} is a special case of nested \gls{IS} it follows that also standard \gls{IS} leads to weight-sample pairs that are properly weighted. Below we will see how this can be used to motivate more powerful algorithms for approximate inference. 

\subsection{Random Weights Sequential Monte Carlo}
Random weights \gls{SMC} and \gls{IS} focuses on solving the problem with intractable weights. Because we pick the proposal ourselves it is often easy to sample from, whereas the target distribution depends largely on the model we want to study and can lead to intractable importance weights. A concrete example of this is partially observed \glspl{SDE} as studied by \eg \citet{beskos2006exact,fearnhead2010random}. In this section we focus exclusively on random weights \gls{SMC} algorithms for \glspl{SDE}. The corresponding target distribution, and thus the weights, contains an exponentiated integral that is intractable. If we replace the intractable weights in the standard \gls{SMC} framework, $\uwp(\xp)$, with (non-negative) unbiased approximations of them, $\widehat{\uwp}(\xp)$, we obtain random weights \gls{SMC}. While we focus the exposition on the basic \gls{IS} case, the result and method follows for \gls{SMC} when we have unbiased estimates of the weights conditional on previous samples. It is straightforward to prove that the random pair $(\widehat{\uwp}(\xp), \xp)$ are properly weighted for $\umod$,
\begin{align}
	\Exp\left[ \widehat{\uwp}(\xp)\fun(\xp) \right] &= \Exp\left[ \Exp\left[ \widehat{\uwp}(\xp)\given\xp\right]\cdot\fun(\xp) \right] =
	\Exp\left[ \uwp(\xp)\fun(\xp) \right]  = \int\fun(\xp)\umod(\xp)\dif\xp.
\end{align}

A simplified version of the target distribution in an \gls{SDE} has the following form of its intractable part
\begin{align}
	\exp\left( \Phi(\xp) \right),
	\label{eq:pm:sdetarget}
\end{align}
where $\Phi(\xp)$ is an intractable functional, often an integral. We assume that we can generate unbiased approximations $\widehat\Phi(\xp)$ of the functional for a given value of $\xp$. Using the \emph{Poisson estimator} \citep{wagner1987unbiased} we can obtain an unbiased estimate of the exponentiated functional in \cref{eq:pm:sdetarget}. 
\begin{align}
	\exp\left( \Phi(\xp) \right) &= \sum_{\km=0}^\infty \frac{1}{k!} \left(\Phi(\xp)\right)^\km = 
	e^\lambda \sum_{\km=0}^\infty \frac{1}{k!} \left(\frac{\Phi(\xp)}{\lambda}\right)^\km \lambda^\km e^{-\lambda} \nonumber \\
	&= e^\lambda \Exp\left[\prod_{\jp=1}^\kappa  \frac{\Phi(\xp)}{\lambda}	\right],
	\label{eq:pm:poissonexp}
\end{align}
where the expectation is with respect to $\kappa \sim \Poisson(\lambda)$. A \gls{MC} estimate of \cref{eq:pm:poissonexp} leads to the Poisson estimator
\begin{align}
	e^\lambda  \Exp\left[\prod_{\jp=1}^\kappa  \frac{\Phi(\xp)}{\lambda}	\right] \approx e^\lambda \prod_{\jp=1}^\kappa  \frac{\widehat{\Phi}^\jp(\xp)}{\lambda},
	\label{eq:pm:poissonest}
\end{align}
where $\widehat{\Phi}^\jp(\xp)$ are \iid unbiased estimators of $\Phi(\xp)$. While \cref{eq:pm:poissonest} is unbiased, it is not necessarily non-negative and might be unsuitable for weights estimation in an \gls{SMC} algorithm. With proper modification to the estimate we can sometimes ensure that the estimate is non-negative \citep{fearnhead2010random,jacob2015}. Replacing the intractable part in the weights with its unbiased estimate, like \cref{eq:pm:poissonest} in the \gls{SDE} case, results in the random weights \gls{SMC} algorithm \cref{alg:rwsmc}. The samples $\xp_\tm^\ip$ are generated by a standard user chosen proposal $\prop_\tm(\xp_\tm\given\xp_{1:\tm-1})$ just like in standard \gls{SMC}.
\begin{algorithm}[tb]
    \SetKwInOut{Input}{input}\SetKwInOut{Output}{output}
    \Input{Proposals 
    $\propwx_\tm$ that generate sample pairs properly weighted for $\umod_\tm$, number of samples $\Np$.}
    \BlankLine
    \For{$\tm=1$ \KwTo $\Tm$}{
        \For{$\ip = 1$ \KwTo $\Np$}{
            Sample $(\uwp_\tm^\ip,\xp_{1:\tm}^\ip) \sim 
            \widehat\nmod_{\tm-1}(\xp_{1:\tm-1})
            \propwx_\tm\left(\cdot, \cdot \given \xp_{1:\tm-1}\right)$ 
        }
        Set $\nwp_\tm^\ip = \frac{\uwp_\tm^\ip}{\sum_\jp \uwp_\tm^\jp}$, 
        for $\ip =1,\ldots,\Np$
    }
    \caption{Random Weights Sequential Monte Carlo}\label{alg:rwsmc}
\end{algorithm}

Rather than interpreting random weights \gls{SMC} as a properly weighted \gls{SMC} algorithm, we can instead motivate it as a standard \gls{SMC} on an extended space.

\subsection{Nested Sequential Monte Carlo}\label{sec:pm:nsmc}
We already briefly touched on \gls{NSMC} in \cref{sec:proposal}, illustrating how a nested \gls{IS} sampler can be used to approximate the locally optimal proposal distribution. Here we will instead consider the case when the nested sampler is itself \gls{SMC} rather than \gls{IS}. This usually allows the user to derive more efficient proposal approximations, pushing the boundaries of the type of problems that \gls{SMC} methods can be effectively applied to.

\Gls{NSMC} \citep{naesseth2015nested,naesseth2016high} is specifically targeting models where the latent state $\xp_\tm$ is of dimension $\dimX > 1$ (typically in the order of 10's to 100's).
The goal of \gls{NSMC} is to approximate the locally optimal proposal distribution by running a separate internal (or nested) \gls{SMC} algorithm over the components of $\xm_\tm = (\xp_{\tm,1},\ldots,\xp_{\tm,\dimX})$ for each sample we need.  At first this might seem wasteful of our computational resources, running a nested \gls{SMC} sampler with $\Mp$ internal particles for each particle in the outer algorithm results in $\ordo(\Np\Mp)$ operations. However, for many models it results in much more accurate estimate than, for instance, a corresponding standard \gls{SMC} algorithm with $\Np\Mp$ particles and the prior proposal.

Unlike in the random weights \gls{SMC}, \gls{NSMC} changes not only the way we compute weights, but also the way we generate the samples $\xp_\tm^\ip$. The key idea is to construct a \gls{SMC} approximation to the locally optimal proposal $\prop_\tm^\star(\xp_\tm\given\xp_{1:\tm-1})$. For each particle $\xp_\tm^\ip$ we want to simulate we construct an \gls{SMC} approximation to $\prop_\tm^\star$
\begin{align}
    \widehat\prop_\tm^\star(\xp_{\tm}\given\xp_{1:\tm-1}^\ip) &= 
    \sum_{\jp=1}^\Mp \frac{\nuwp_{\dimX}^{\jp,\ip}}{\sum_\lm \nuwp_{\dimX}^{\lm,\ip}} 
    \dirac_{\nxp_{1:\dimX}^{\jp,\ip}}(\xp_\tm), 
    \label{eq:nsmc:prop}
\end{align}
where $(\nuwp_{\dimX}^{\jp,\ip},\nxp_{1:\dimX}^{\jp,\ip})$ are properly weighted for $\prop_\tm^\star(\xp_\tm\given\xp_{1:\tm-1}^\ip)$. We assume that \cref{eq:nsmc:prop} is constructed using a nested \gls{SMC} sampler on the components of $\xp_\tm$;  first targeting $\xp_{\tm,1}$, then $\xp_{\tm,1:2}$, \etc, where the final target is $\prop_\tm^\star(\xp_{\tm,1:\dimX}\given\xp_{1:\tm-1}^\ip)$. With this approximation \gls{NSMC} replaces the sampling step \cref{eq:smc:proposal} by simulating from \cref{eq:nsmc:prop}, and the weighting step \cref{eq:smc:weights} by
\begin{align}
	\uwp_\tm^\ip &= \prod_{\km=1}^{\dimX}\frac{1}{\Mp} \sum_{\jp=1}^\Mp \nuwp_{\km}^{\jp,\ip},
	\label{eq:nsmc:weights}
\end{align}
where $\nuwp_{\km}^{\jp,\ip}$ are the corresponding weights in the nested \gls{SMC} sampler. We summarize \gls{NSMC} in \cref{alg:nsmc} and give examples below of ways to construct $\widehat\prop_\tm^\star(\xp_{\tm}\given\xp_{1:\tm-1}^\ip)$.
\begin{algorithm}[tb]
    \SetKwInOut{Input}{input}\SetKwInOut{Output}{output}
    \Input{Unnormalized target distributions $\umod_\tm$, nested \gls{SMC} sampler targeting $\prop_\tm^\star(\xp_{\tm}\given\xp_{1:\tm-1})$, number of samples $\Np$ and nested samples $\Mp$.}
    \BlankLine
    \For{$\tm=1$ \KwTo $\Tm$}{
        \For{$\ip = 1$ \KwTo $\Np$}{
            Sample $\xp_{1:\tm}^\ip \sim 
            \widehat\nmod_{\tm-1}(\xp_{1:\tm-1})
            \widehat\prop_\tm^\star(\xp_{\tm}\given\xp_{1:\tm-1})$ 
            (see \cref{eq:nsmc:prop})\\
            Set $\uwp_\tm^\ip = \prod_{\km=1}^{\dimX}\frac{1}{\Mp} \sum_{\jp=1}^\Mp \nuwp_{\km}^{\jp,\ip}$
            (see \cref{eq:nsmc:weights})
        }
        Set $\nwp_\tm^\ip = \frac{\uwp_\tm^\ip}{\sum_\jp \uwp_\tm^\jp}$, 
        for $\ip =1,\ldots,\Np$
    }
    \caption{\glsreset{NSMC}\Gls{NSMC}}\label{alg:nsmc}
\end{algorithm}

\begin{example}[\Acrlong{NSMC} for \cref{ex:running}]
We revisit our running non-Markovian Gaussian example. Because $\xp_\tm$ is scalar for this example, we make a straightforward extension to the model to illustrate \gls{NSMC}. We replicate the model from \cref{ex:running} $\dimX$ times, \ie 
\begin{align*}
	\fmod_\tm(\xp_\tm \given \xp_{\tm-1}) &= \prod_{\km=1}^{\dimX} \fmod(\xp_{\tm,\km} \given \xp_{\tm-1,\km})  = \prod_{\km=1}^{\dimX} \Norm(\xp_{\tm,\km}\given \phi \xp_{\tm-1,\km}, q), \\
	\gmod_\tm(\y_\tm \given \xp_{1:\tm}) &= \prod_{\km=1}^{\dimX} \gmod(\y_{\tm,\km} \given \xp_{1:\tm,\km})  = \prod_{\km=1}^{\dimX} \Norm\left(\y_{\tm,\km}\given \sum_{\lm =1}^{\tm}\beta^{\tm-\lm}\xp_{\lm,\km}, r\right).
\end{align*}
To construct $\widehat\prop_\tm^\star(\xp_{\tm}\given\xp_{1:\tm-1}^\ip)$ we run an internal \gls{SMC} sampler with $\Mp$ samples targeting
\begin{align*}
	\normprop_\lm(\xp_{\tm,1:\lm}\given\xp_{1:\tm-1})  &\propto  \uprop_\lm(\xp_{\tm,1:\lm}\given\xp_{1:\tm-1}) \eqdef \prod_{\km=1}^{\lm} \fmod(\xp_{\tm,\km} \given \xp_{\tm-1,\km}) \gmod(\y_{\tm,\km} \given \xp_{1:\tm,\km}),
\end{align*}
for $\lm=1,\ldots,\dimX$. 
We denote the particles and weights for the nested \gls{SMC} by $\bar{\xp}_{1:\lm}^{\jp,\ip}$ and $\nuwp_\lm^{\jp,\ip}$, respectively.
A straightforward proposal for the above target distribution is $\nxp_{1:\lm}^{\jp,\ip} \sim \widehat{\normprop}_{\lm-1}(\xp_{\tm,1:\lm-1}\given\xp_{1:\tm-1}^\ip) \fmod(\xp_{\tm,\lm} \given \xp_{\tm-1,\lm}^\ip)$, \ie using the prior proposal. We have that $\widehat{\normprop}_\lm$ is
\begin{align*}
	\widehat{\normprop}_\lm(\xp_{\tm,1:\lm}\given\xp_{1:\tm-1}^\ip) &= \sum_{\jp=1}^{\Mp} \frac{\nuwp_{\lm}^{\jp,\ip}}{\sum_\mm \nuwp_{\lm}^{\mm,\ip}} 
    \dirac_{\nxp_{1:\lm}^{\jp,\ip}}(\xp_{\tm,1:\lm}),
\end{align*}
where the weights are given by
\begin{align*}
	\nuwp_{\lm}^{\jp,\ip} &= \gmod(\y_{\tm,\lm} \given (\xp_{1:\tm-1,\km}^\ip, \nxp_{\lm}^{\jp,\ip})) =
	\Norm\left(\y_{\tm,\lm}\given \nxp_{\lm}^{\jp,\ip}+\sum_{\mm =1}^{\tm-1}\beta^{\tm-\mm}\xp_{\mm,\lm}^\ip, r\right).
\end{align*}
We obtain the approximation to the locally optimal proposal $\widehat\prop_\tm^\star(\xp_{\tm}\given\xp_{1:\tm-1}^\ip)\equiv \widehat{\normprop}_{\dimX}(\xp_{\tm,1:\dimX}\given\xp_{1:\tm-1}^\ip)$. This defines the \gls{NSMC} method in \cref{alg:nsmc} for the multivariate extension to our running example. We illustrate the nested target distributions in \cref{fig:illustration:nsmc}.

\begin{figure}[t]
\centering
\begin{subfigure}[b]{0.48\textwidth}
\resizebox{1.0\textwidth}{!}{
\tikzstyle{edge} = [-]
\tikzstyle{edge2} = [->,very thick,>=latex]
\tikzstyle{edge3} = [->]
\tikzstyle{arrw} = [very thick,shorten <=2pt,shorten >=2pt]
\tikzstyle{var} = [draw,circle,inner sep=0,minimum width=0.7cm]
\tikzstyle{invisvar} = [draw=none,circle,inner sep=0,minimum width=0.7cm]
\tikzstyle{cvar} = [draw,circle,inner sep=0,minimum width=0.7cm, fill=black!20]
\tikzstyle{initvar} = [draw,circle,inner sep=0,minimum width=0.9cm]
\tikzstyle{obs} = [draw,circle,inner sep=0,minimum width=0.5cm, fill=black!20]
  \begin{tikzpicture}[>=stealth,node distance=0.6cm]
    \begin{scope}
      \foreach \x in {1} {
        \foreach \y in {0,1,2,3} {
          \pgfmathtruncatemacro\xend{\x+1}
          \pgfmathtruncatemacro\yend{4-\y}
          \node at (1.9*\x,\y) (x\x\y) [cvar] {};
        }
      }
      \foreach \x in {2} {
        \foreach \y in {3} {
          \pgfmathtruncatemacro\xend{\x+1}
          \pgfmathtruncatemacro\yend{4-\y}
          \node at (1.9*\x,\y) (x\x\y) [var] {$x_{t,\yend}$};
          \node at (1.9*\x+0.9,\y-0.15) (y\x\y) [obs] {};
        }
      }
      \foreach \x in {2} {
        \foreach \y in {0} {
          \pgfmathtruncatemacro\xend{\x+1}
          \pgfmathtruncatemacro\yend{4-\y}
          \node at (1.9*\x,\y) (x\x\y) [invisvar] {};
        }
      }
      
      \foreach \x in {1} {
        \pgfmathtruncatemacro\xend{\x+1}
        \node[draw,very thick,rectangle,rounded corners=3mm,minimum width=0.6cm,fit=(x\x0) (x\x3), label=above:{$x_{t-1}$}] (x\x){};
      }
      \foreach \x in {2} {
        \pgfmathtruncatemacro\xend{\x+1}
        \node[draw,very thick,rectangle,rounded corners=3mm,minimum width=0.6cm,fit=(x\x0) (x\x3), label=above:{$x_{t,1}$}] (x\x){};
      }
      
      \foreach \x in {1} {
        \pgfmathtruncatemacro\xend{\x+1}
          \draw[edge2] (x\x) -> (x\xend);
      }
       Draw diagonal edges
      \foreach \x in {2} {
        \pgfmathtruncatemacro\xend{\x+1}
        \foreach \y in {3} {
          \draw[edge3] (x\x\y) -- (y\x\y);
        }
      }
      \node at (0.5,1.5) (dots) [circle] {$\cdots$};
    \end{scope}
  \end{tikzpicture}
}
\caption{$\normprop_1(\xp_{\tm,1}\given\xp_{1:\tm-1})$}
\end{subfigure}
\begin{subfigure}[b]{0.48\textwidth}
\resizebox{1.0\textwidth}{!}{
\tikzstyle{edge} = [-]
\tikzstyle{edge2} = [->,very thick,>=latex]
\tikzstyle{edge3} = [->]
\tikzstyle{arrw} = [very thick,shorten <=2pt,shorten >=2pt]
\tikzstyle{var} = [draw,circle,inner sep=0,minimum width=0.7cm]
\tikzstyle{cvar} = [draw,circle,inner sep=0,minimum width=0.7cm, fill=black!20]
\tikzstyle{initvar} = [draw,circle,inner sep=0,minimum width=0.9cm]
\tikzstyle{obs} = [draw,circle,inner sep=0,minimum width=0.5cm, fill=black!20]
\begin{tikzpicture}[>=stealth,node distance=0.6cm]
    \begin{scope}
      \foreach \x in {1} {
        \foreach \y in {0,1,2,3} {
          \pgfmathtruncatemacro\xend{\x+1}
          \pgfmathtruncatemacro\yend{4-\y}
          \node at (1.9*\x,\y) (x\x\y) [cvar] {};
        }
      }
      \foreach \x in {2} {
        \foreach \y in {2,3} {
          \pgfmathtruncatemacro\xend{\x+1}
          \pgfmathtruncatemacro\yend{4-\y}
          \node at (1.9*\x,\y) (x\x\y) [var] {$x_{t,\yend}$};
          \node at (1.9*\x+0.9,\y-0.15) (y\x\y) [obs] {};
        }
      }
      
      \foreach \x in {1} {
        \pgfmathtruncatemacro\xend{\x+1}
        \node[draw,very thick,rectangle,rounded corners=3mm,minimum width=0.6cm,fit=(x\x0) (x\x3), label=above:{$x_{t-1}$}] (x\x){};
      }
      \foreach \x in {2} {
        \pgfmathtruncatemacro\xend{\x+1}
        \node[draw,very thick,rectangle,rounded corners=3mm,minimum width=0.6cm,fit=(x\x0) (x\x3), label=above:{$x_{t,1:2}$}] (x\x){};
      }
      
      \foreach \x in {1} {
        \pgfmathtruncatemacro\xend{\x+1}
          \draw[edge2] (x\x) -> (x\xend);
      }
      \foreach \x in {2} {
        \pgfmathtruncatemacro\xend{\x+1}
        \foreach \y in {2,3} {
          \draw[edge3] (x\x\y) -- (y\x\y);
        }
      }
      \foreach \x in {1} {
        \foreach \y in {0,1,2} {
          \pgfmathtruncatemacro\yend{\y+1}
          \draw[edge] (x\x\y) -- (x\x\yend) {};
        }
      }
      \node at (0.5,1.5) (dots) [circle] {$\cdots$};
    \end{scope}
  \end{tikzpicture}
}
\caption{$\normprop_2(\xp_{\tm,1:2}\given\xp_{1:\tm-1})$}
\end{subfigure}

\begin{subfigure}[b]{0.48\textwidth}
\resizebox{1.0\textwidth}{!}{
\tikzstyle{edge} = [-]
\tikzstyle{edge2} = [->,very thick,>=latex]
\tikzstyle{edge3} = [->]
\tikzstyle{arrw} = [very thick,shorten <=2pt,shorten >=2pt]
\tikzstyle{var} = [draw,circle,inner sep=0,minimum width=0.7cm]
\tikzstyle{cvar} = [draw,circle,inner sep=0,minimum width=0.7cm, fill=black!20]
\tikzstyle{initvar} = [draw,circle,inner sep=0,minimum width=0.9cm]
\tikzstyle{obs} = [draw,circle,inner sep=0,minimum width=0.5cm, fill=black!20]
\begin{tikzpicture}[>=stealth,node distance=0.6cm]
    \begin{scope}
      \foreach \x in {1} {
        \foreach \y in {0,1,2,3} {
          \pgfmathtruncatemacro\xend{\x+1}
          \pgfmathtruncatemacro\yend{4-\y}
          \node at (1.9*\x,\y) (x\x\y) [cvar] {};
        }
      }
      \foreach \x in {2} {
        \foreach \y in {1,2,3} {
          \pgfmathtruncatemacro\xend{\x+1}
          \pgfmathtruncatemacro\yend{4-\y}
          \node at (1.9*\x,\y) (x\x\y) [var] {$x_{t,\yend}$};
          \node at (1.9*\x+0.9,\y-0.15) (y\x\y) [obs] {};
        }
      }
      
      \foreach \x in {1} {
        \pgfmathtruncatemacro\xend{\x+1}
        \node[draw,very thick,rectangle,rounded corners=3mm,minimum width=0.6cm,fit=(x\x0) (x\x3), label=above:{$x_{t-1}$}] (x\x){};
      }
      \foreach \x in {2} {
        \pgfmathtruncatemacro\xend{\x+1}
        \node[draw,very thick,rectangle,rounded corners=3mm,minimum width=0.6cm,fit=(x\x0) (x\x3), label=above:{$x_{t,1:3}$}] (x\x){};
      }
      
      \foreach \x in {1} {
        \pgfmathtruncatemacro\xend{\x+1}
          \draw[edge2] (x\x) -> (x\xend);
      }
      \foreach \x in {2} {
        \pgfmathtruncatemacro\xend{\x+1}
        \foreach \y in {1,2,3} {
          \draw[edge3] (x\x\y) -- (y\x\y);
        }
      }
      \foreach \x in {1} {
        \foreach \y in {0,1,2} {
          \pgfmathtruncatemacro\yend{\y+1}
          \draw[edge] (x\x\y) -- (x\x\yend) {};
        }
      }
      \node at (0.5,1.5) (dots) [circle] {$\cdots$};
    \end{scope}
  \end{tikzpicture}
}
\caption{$\normprop_3(\xp_{\tm,1:3}\given\xp_{1:\tm-1})$}
\end{subfigure}
\begin{subfigure}[b]{0.48\textwidth}
\resizebox{1.0\textwidth}{!}{
\tikzstyle{edge} = [-]
\tikzstyle{edge2} = [->,very thick,>=latex]
\tikzstyle{edge3} = [->]
\tikzstyle{arrw} = [very thick,shorten <=2pt,shorten >=2pt]
\tikzstyle{var} = [draw,circle,inner sep=0,minimum width=0.7cm]
\tikzstyle{cvar} = [draw,circle,inner sep=0,minimum width=0.7cm, fill=black!20]
\tikzstyle{initvar} = [draw,circle,inner sep=0,minimum width=0.9cm]
\tikzstyle{obs} = [draw,circle,inner sep=0,minimum width=0.5cm, fill=black!20]
\begin{tikzpicture}[>=stealth,node distance=0.6cm]
    \begin{scope}
      \foreach \x in {1} {
        \foreach \y in {0,1,2,3} {
          \pgfmathtruncatemacro\xend{\x+1}
          \pgfmathtruncatemacro\yend{4-\y}
          \node at (1.9*\x,\y) (x\x\y) [cvar] {};
        }
      }
      \foreach \x in {2} {
        \foreach \y in {0,1,2,3} {
          \pgfmathtruncatemacro\xend{\x+1}
          \pgfmathtruncatemacro\yend{4-\y}
          \node at (1.9*\x,\y) (x\x\y) [var] {$x_{t,\yend}$};
          \node at (1.9*\x+0.9,\y-0.15) (y\x\y) [obs] {};
        }
      }
      
      \foreach \x in {1} {
        \pgfmathtruncatemacro\xend{\x+1}
        \node[draw,very thick,rectangle,rounded corners=3mm,minimum width=0.6cm,fit=(x\x0) (x\x3), label=above:{$x_{t-1}$}] (x\x){};
      }
      \foreach \x in {2} {
        \pgfmathtruncatemacro\xend{\x+1}
        \node[draw,very thick,rectangle,rounded corners=3mm,minimum width=0.6cm,fit=(x\x0) (x\x3), label=above:{$x_{t,1:4}$}] (x\x){};
      }
      
      \foreach \x in {1} {
        \pgfmathtruncatemacro\xend{\x+1}
          \draw[edge2] (x\x) -> (x\xend);
      }
      \foreach \x in {2} {
        \pgfmathtruncatemacro\xend{\x+1}
        \foreach \y in {0,1,2,3} {
          \draw[edge3] (x\x\y) -- (y\x\y);
        }
      }
      \node at (0.5,1.5) (dots) [circle] {$\cdots$};
    \end{scope}
  \end{tikzpicture}
}
\caption{$\normprop_4(\xp_{\tm,1:4}\given\xp_{1:\tm-1})$}
\end{subfigure}
\caption{Illustration of \gls{NSMC} for the running example with $\dimX = 4$. Note that the each variable $\xp_{\tm,\lm}$ depends on the complete sequence of variables from the previous steps $\xp_{1:\tm-1}$.} \label{fig:illustration:nsmc}
\end{figure}
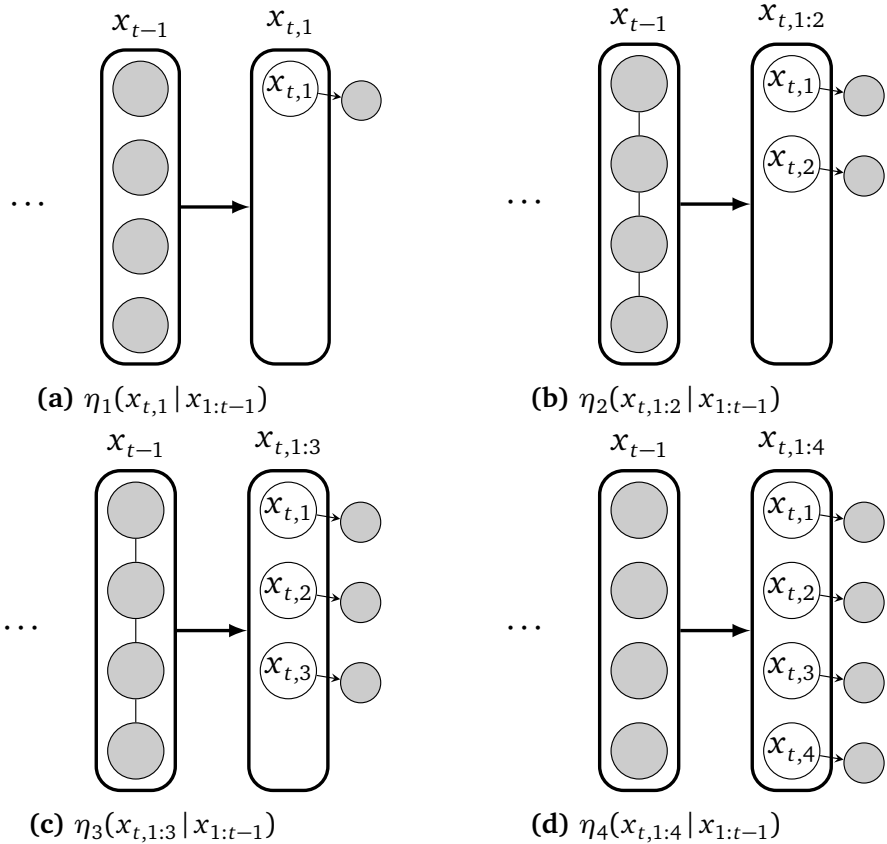

Just like in the standard \gls{SMC} sampler, there is room for improvement in the nested \gls{SMC} by \eg choosing a better internal proposal distributions and target distributions under the proper weighting constraint. It is possible to further improve performance by correlating samples through a combination of fully adapted \gls{SMC} and a forward-filtering backward-sampling procedure as discussed by \citet{naesseth2015nested,naesseth2016high}.
\label{ex:nsmc}
\end{example}

\section{Distributed Sequential Monte Carlo}\label{sec:islandsmc}
Developing computational methods that can take advantage of parallel and multi-core computing architectures is important for efficient approximate inference algorithms. 
In this section we discuss how the idea of nesting \gls{MC} algorithms can open up for distributed implementation of \gls{SMC}.
First we discuss independent importance weighted \gls{SMC} samplers, a way to combine the outputs from multiple \gls{SMC} samplers using an outer level importance sampler. Then we discuss the island particle filter \citep{verge2015parallel}, where the outer level sampling is done using \gls{SMC} as well.
It should be noted that the purpose of this section is to illustrate how the idea of nested \gls{MC} opens up for parallelization of \gls{SMC}, and
not to provide a comprehensive overview of distributed \gls{SMC} algorithms. Indeed, many useful methods have been proposed for parallelizing the \gls{SMC} algorithm itself. Often this is done at the granularity of individual particles. The key challenge is then to parallelize the resampling step, since this step requires interaction between particles. One way to address this challenge is to make use of \gls{MH} or rejection sampling to implement the resampling \citep{murray2016parallel}. Another approach is to modify the resampling step in order to limit the interaction between particles, thereby making a distributed implementation easier \citep{whiteley2016role}.


\subsection{Importance Weighted Independent SMC Samplers}
Assume that we have access to $M$ computing nodes. These could correspond to different machines, different cores, or even $M$ consecutive runs on the same machine.
A (very) simple way of distributing computation across these nodes is to run separate independent \gls{SMC} samplers on each node. Using the notation from above, let $\up^m \sim \rsdist(\up)$ denote all the random variables generated during the run of an \gls{SMC} sampler with $\Np$ particles in node $m\in\{1,\dots,M\}$.
The variable $\up^m$ can also be thought of as the random seed used to initialize the $m$:th sampler.
We then get $m$ independent approximations of the target
$\nmod_\Tm$ given by
\begin{align*}
\widehat\nmod_\Tm(\xp_{1:\Tm}\given\up^m) &= \sum_{\ip=1}^\Np \nwp_\Tm^{m,\ip} 
\dirac_{\xp_{1:\Tm}^{m,\ip}}(\xp_{1:\Tm}),
\end{align*}
where we have assumed that each node uses the same number of particles $\Np$ and the same \gls{SMC} algorithm.
Assume that we want to compute the expected value $\Exp_{\nmod_\Tm}[\fun(\xp_{1:\Tm})]$ of some test function $\fun$. Each node provides an estimate,
\begin{align}
	\widehat\fun(\up^m) \eqdef
	\sum_{\ip=1}^\Np \nwp_\Tm^{m,\ip} 
	\fun(\xp_{1:\Tm}^{m,\ip}),
		\label{eq:smc:distr:single-est}
\end{align}
and since these estimates are independent and identically distributed a natural approach is to take the average
\(
\frac{1}{M} \sum_{m=1}^M \widehat\fun(\up^m)
\)
as the final estimate. However, since each \gls{SMC} estimate is biased for finite $\Np$, the resulting average will also be biased regardless of the number of nodes $M$ used in the computation. A better approach can therefore be to weight the independent \gls{SMC} samplers according to their respective normalizing constant estimates:
\begin{align}
	\widehat\fun_{\text{aggr}} \eqdef \sum_{m=1}^M \frac{\widehat \Z_\Tm(\up^m)}{\sum_n \widehat \Z_\Tm(\up^n)} \widehat\fun(\up^m).
		\label{eq:smc:distr:is-of-smc}
\end{align}
This aggregated estimator is obtained by viewing the \gls{SMC} sampler in each node as a proposal for an outer-level importance sampler. Specifically, the target distribution for this importance sampler is given by 
\begin{align*}
	\Pi(\up) \eqdef \frac{\widehat\Z_\Tm(\up)\rsdist(\up)}{\Z_\Tm},
\end{align*}
and the proposal is given by $\rsdist(\up)$, resulting in weights $\Pi(\up)/\rsdist(\up) \propto \widehat\Z_\Tm(\up)$.
%
By computations analogous to those in \cref{sec:pmh} it holds that
\(
	\Exp_\Pi[\widehat\fun(\up)] = \Exp_\nmod[\fun(\xp_{1:\Tm})].
\)
Consequently, by a standard importance sampling argument it holds that \cref{eq:smc:distr:is-of-smc} is a consistent estimate of $\Exp_\nmod[\fun(\xp_{1:\Tm})]$ as $M$ becomes large, regardless of the number $\Np$ of particles used in each \gls{SMC} sampler. 

It is not expected that \cref{eq:smc:distr:is-of-smc} will be a better estimate than what we would obtain from a single \gls{SMC} sampler with a total of $\Np\Mp$ particles. However, as pointed out above, the computation of the estimators \cref{eq:smc:distr:is-of-smc} can be distributed, or even performed sequentially on the same machine in situations when memory constraints limit the number of particles than can be used.

\subsection{The Island Particle Filter}
In the preceding section we considered \acrlong{IS} of \gls{SMC} estimators. It is possible to take this one step further and develop \gls{SMC} sampling of \gls{SMC} estimators. This is the idea behind the island particle filter by \citet{verge2015parallel}.

Similarly to above, the island particle filter consists of $M$ nodes, each running an \gls{SMC} sampler with $\Np$ particles. However, these $M$ samplers are allowed to interact at each iteration according to the standard resampling, propagation, and weighting steps of \gls{SMC}. We provide pseudo-code in \cref{alg:smc:island-pf}, where we assume that resampling at the outer level only occurs when the \acrfull{ESS} drops below some threshold (see \cref{sec:ess}). This type of adaptive resampling is recommended for a distributed implementation of the island particle filter. Indeed, resampling at the outer level at each iteration would result in very high communication costs, since it involves copying complete \gls{SMC} states (particles and weights) from node to node. Furthermore, we can control the outer level \gls{ESS} by increasing $\Np$, thereby ensuring that outer level resampling happens rarely when done adaptively.

\newcommand\essisland{\mathrm{ESS}^{\mathrm{island}}}
\newcommand\ancisland{A}
\newcommand\nwpisland{\Omega}
\newcommand\uwpisland{\widetilde \Omega}
\newcommand\vp{\mathbf{v}}

\begin{algorithm}
	\caption{The island particle filter}
	\SetKwInOut{Input}{input}\SetKwInOut{Output}{output}
	\Input{Unnormalized target distributions $\umod_\tm$, proposals 
		$\prop_\tm$, number of nodes $M$ and number of samples per node $\Np$.}
	\BlankLine

	\For{$m = 1$ \KwTo $M$}{
		%
		\tcp{Initialize inner SMC samplers}
		Sample $\xp_{1}^{m,\ip} \sim \prop_1(\xp_1)$ and set $\uwp_1^{m,\ip} = \frac{\umod_1(\xp_{1}^{m,\ip})}{\prop_1(\xp_{1}^{m,\ip})}$ for $\ip =1,\ldots,\Np.$ \\
		Set $\nwp_1^{m,\ip} = \frac{\uwp_1^{m,\ip}}{\sum_\jp \uwp_1^{m,\jp}}$ for $\ip =1,\ldots,\Np.$ \\
		\tcp{Initialize outer SMC sampler}
		Set $\vp_1^m = \{ \xp_{1}^{m,\ip}, \uwp_1^{m,\ip} \}_{\ip=1}^\Np$.\\
    	Set $\uwpisland_1^m = \frac{1}{N}\sum_{\ip=1}^\Np \uwp_1^{m,\ip}$. 	
	}
   	Set $\nwpisland_1^{m} = \frac{\uwpisland_1^{m}}{\sum_n \uwpisland_1^{n}}$ for
   	$m =1,\ldots,M$   
	
	\For{$\tm=2$ \KwTo $\Tm$}{
		Compute $\essisland_{\tm-1} = \left[\sum_{m=1}^M \left(\nwpisland_{\tm-1}^m 
		\right)^2\right]^{-1}.$
					
		\If{$\essisland_{\tm-1} < \mathrm{threshold}$}{
			\tcp{Resample islands (overload notation)}
			$\vp_{\tm-1}^{1:M} \sim \text{Resampling}(\vp_{\tm-1}^{1:M}, \nwpisland_{\tm-1}^{1:M})$.\\
			$\nwpisland_{\tm-1}^m \gets \frac{1}{M}$, $m=1,\dots,M$.
		}
		\For{$m = 1$ \KwTo $M$}{
			\tcp{Update inner SMC samplers}
			Sample $\xp_{1:\tm}^{m,\ip} \sim 
			\widehat\nmod_{\tm-1}(\xp_{1:\tm-1} \given \vp_{t-1}^m)
			\prop_\tm(\xp_\tm\given\xp_{1:\tm-1})$, for $\ip =1,\ldots,\Np.$ \\
			Set $\uwp_\tm^{m,\ip} = \frac{\umod_\tm(\xp_{1:\tm}^{m,\ip})}
			{\umod_{\tm-1}(\xp_{1:\tm-1}^{m,\ip}) \prop_\tm(\xp_{\tm}^{m,\ip}\given \xp_{1:\tm-1}^{m,\ip})}$, for $\ip =1,\ldots,\Np.$ \\
			Set $\nwp_\tm^{m,\ip} = \frac{\uwp_\tm^{m,\ip}}{\sum_\jp \uwp_\tm^{m,\jp}}$, 
	 		for $\ip =1,\ldots,\Np.$ \\
	 		\tcp{Update outer SMC sampler}
			Set $\vp_{\tm}^m = \{ \xp_{1:\tm}^{m,\ip}, \uwp_\tm^{m,\ip} \}_{\ip=1}^\Np$.\\
	   		Set $\uwpisland_\tm^m = \frac{\nwpisland_{\tm-1}^m}{1/M} \cdot
	   		\left[ \frac{1}{N}\sum_{\ip=1}^\Np \uwp_\tm^{m,\ip} \right]$.
	 	}	 	
	 	Set $\nwpisland_\tm^{m} = \frac{\uwpisland_\tm^{m}}{\sum_n \uwpisland_\tm^{n}}$ for
	 	$m =1,\ldots,M.$   
	}
	\label{alg:smc:island-pf}
\end{algorithm}

The island particle filter fits into the standard \gls{SMC} framework. To make this explicit we start by writing $\up = \up_{1:\Tm}$ where $\up_\tm$ corresponds to all the basic random variables generated \emph{at iteration $\tm$} of an (internal) \gls{SMC} sampler with $\Np$ particles.
For example, $\up_\tm$ could correspond to a collection of independent uniform random variables needed to implement the resampling and propagation step at iteration $\tm$. Equivalently, we can view $\up_\tm$ as the random seed used for generating these variables.
By making the (non restrictive) assumption that these auxiliary variables are drawn independently at each iteration we can write
\[
	\rsdist(\up) = \prod_{\tm=1}^\Tm \rsdist_\tm(\up_\tm).
\]
The island particle filter is then a standard \gls{SMC} sampler with $M$ particles, targeting the sequence of distributions
\begin{align*}
	\Pi_\tm(\up_{1:\tm}) \eqdef
	\frac{\widehat\Z_\tm(\up_{1:\tm}) \left[ \prod_{\km=1}^\tm \rsdist_\km(\up_\km) \right]}{\Z_\tm}
\end{align*}
and using $\rsdist_\tm(\up_\tm)$ as a proposal distribution at iteration $\tm$.
Similarly to the preceding section, targeting these ``extended distributions'' is motivated by the fact that
\(
\Exp_{\Pi_\tm}[\widehat\fun_\tm({\up_{1:\tm}})] = \Exp_\nmod[\fun_\tm(\xp_{1:\tm})],
\)
where $\widehat\fun_\tm({\up_{1:\tm}})$ is defined analogously to \eqref{eq:smc:distr:single-est}.

If we denote the outer level (island level) unnormalized \gls{SMC} weights by $\{\uwpisland_\tm^m\}_{m=1}^M$ we get
\begin{align*}
	\uwpisland_\tm^m = 
	\frac{\widehat\Z_\tm(\up_{1:\tm}^m)}{\widehat\Z_{\tm-1}(\up_{1:\tm}^m)}
	= \frac{1}{N}\sum_{\ip=1}^\Np \uwp_\tm^{m,\ip}
\end{align*}
at iterations when the islands are resampled, and
\begin{align*}
	\uwpisland_\tm^m = \frac{\nwpisland_{\tm-1}^m}{1/M} \cdot
	\left[ \frac{1}{N}\sum_{\ip=1}^\Np \uwp_\tm^{m,\ip} \right],
\end{align*}
at iterations when the outer level resampling is omitted; see \cref{eq:smc:ess-weights}. For implementation purposes we can use the latter expression at all iterations if we overwrite $\nwpisland_{\tm-1}^m \gets \nicefrac{1}{M}$ whenever the islands are resampled. In the above expressions, $\{ \uwp_\tm^{m,\ip} \}_{\ip=1}^M$ denote the unnormalized (inner level) \gls{SMC} weights at node $M$.

The derivation above is based on viewing the complete random seed as an auxiliary variable. In practice it is usually more convenient to simply keep track of the resulting particle trajectories and weights, 
which can be computed deterministically given $\up_{1:\tm}$. Thus, in \cref{alg:smc:island-pf} we have
identified $\up_{1:\tm}$ with  $\vp_\tm \eqdef \left\{\left(\xp_{1:\tm}^{\ip}, \uwp_\tm^{\ip}\right)\right\}_{\ip=1}^\Np$
and express the steps of the algorithm using the latter variable.

\newpage
\begin{subappendices}
\section{Proof of Unbiasedness}\label{sec:smc:unbiasedZ}
For clarity we prove the unbiasedness of the normalizing constant estimate for the version of \gls{SMC} presented in \cref{alg:esmc:smc}, which involves multinomial resampling at each iteration of the algorithm. We note, however, that the property holds more generally, for instance when using low-variance and adaptive resampling. The proof presented here can be straightforwardly extended to such cases.

The distribution of all random variables generated by the \gls{SMC} 
method in \cref{alg:esmc:smc} is given by
\begin{align}
    \qSMC\left(\xp_{1:\Tm}^{1:\Np}, \ap_{1:\Tm-1}^{1:\Np}\right) &= 
    \prod_{\ip=1}^\Np \prop_1(\xp_1^\ip) \cdot \prod_{\tm=2}^\Tm 
    \prod_{\ip=1}^\Np \left[\nwp_{\tm-1}^{\ap_{\tm-1}^\ip} 
    \prop_\tm\left(\xp_\tm^\ip \given 
    \xp_{1:\tm-1}^{\ap_{\tm-1}^\ip}\right)\right],
    \label{eq:smc:smcdist}
\end{align}
where the particles are $\xp_{1:\Tm}^{1:\Np} = \bigcup_{\tm=1}^\Tm 
\left\{\xp_\tm^\ip\right\}_{\ip=1}^\Np$, and the ancestor variables from 
the resampling step are $\ap_{1:\Tm-1}^{1:\Np} = \bigcup_{\tm=1}^{\Tm-1} 
\left\{\ap_\tm^\ip\right\}_{\ip=1}^\Np$. Assuming that $\uwp_\tm^\ip 
\geq 0$ for all $\ip$ and $\tm$, we have that $\hatZ_\Tm\geq 0$. We are 
now going to prove that
\begin{align}
    \Exp_{\qSMC\left(\xp_{1:\Tm}^{1:\Np}, \ap_{1:\Tm-1}^{1:\Np}\right)}\left[\frac{\widehat\Z_\Tm}{\Z_\Tm}\right] = 1,
    \label{eq:smc:hatZexp}
\end{align}
from which the result follows. Note that the results holds for any 
value of $\Tm$. To prove the result we introduce another 
set of variables $\bp_\Tm^\ip = \ip$ and $\bp_\tm^\ip = \ap_\tm^{\bp_{\tm+1}^\ip}$, for 
$\tm=1,\ldots,\Tm-1$. The notation describes the ancestor index at 
iteration $\tm$ for the final particle $\xp_{1:\Tm}^\ip$. This means 
that we can write $\xp_{1:\Tm}^\ip = \xp_{1:\Tm}^{\bp_{1:\Tm}^\ip} =
(\xp_1^{\bp_1^\ip},\ldots,\xp_\Tm^{\bp_\Tm^\ip})$. Using 
this notation we rewrite the integrand in \cref{eq:smc:hatZexp},
\begin{align*}
    &\frac{\widehat\Z_\Tm }{\Z_\Tm} \qSMC\left(\xp_{1:\Tm}^{1:\Np}, 
    \ap_{1:\Tm-1}^{1:\Np}\right) \\
    &=\frac{1}{\Z_\Tm} \prod_{\tm=1}^\Tm \left[
    \frac{1}{\Np}\sum_{\ip=1}^\Np \uwp_\tm^\ip \right] \cdot \prod_{\ip=1}^\Np \prop_1(\xp_1^\ip) \cdot \prod_{\tm=2}^\Tm 
    \prod_{\ip=1}^\Np \left[ 
    \frac{\uwp_{\tm-1}^{\ap_{\tm-1}^\ip}}{\sum_{\jp=1}^\Np 
    \uwp_{\tm-1}^\jp}
    \prop_\tm\left(\xp_\tm^\ip \given 
    \xp_{1:\tm-1}^{\ap_{\tm-1}^\ip}\right)\right] \\
    &= \frac{1}{\Np^{\Tm} \Z_\Tm} \sum_{\ip=1}^\Np \left[
    \uwp_1^{\bp_1^\ip}\prop_1(\xp_1^{\bp_1^\ip})\prod_{\tm=2}^\Tm 
    \uwp_\tm^{\bp_\tm^\ip} 
    \prop_\tm(\xp_\tm^{\bp_\tm^\ip}\given\xp_{1:\tm-1}^{\bp_{\tm-1}^\ip})  \right. \\
    & \hspace{3cm}\left. \cdot \prod_{\jp\neq 
    \bp_1^\ip} \prop_1(\xp_1^\jp) \cdot \prod_{\tm=2}^\Tm 
    \prod_{\jp\neq \bp_{\tm-1}^\ip} \left[ 
    \nwp_{\tm-1}^{\ap_{\tm-1}^\jp}
    \prop_\tm\left(\xp_\tm^\jp \given 
    \xp_{1:\tm-1}^{\ap_{\tm-1}^\jp}\right)\right] \right] \\
    &= \frac{1}{\Np^{\Tm}} \sum_{\ip=1}^\Np \left[
    \frac{\umod_\Tm(\xp_{1:\Tm}^{\bp_{1:\Tm}^\ip})}{\Z_\Tm} \cdot \prod_{\jp\neq 
    \bp_1^\ip} \prop_1(\xp_1^\jp) \cdot \prod_{\tm=2}^\Tm
    \prod_{\jp\neq \bp_{\tm-1}^\ip} \left[ 
    \nwp_{\tm-1}^{\ap_{\tm-1}^\jp}
    \prop_\tm\left(\xp_\tm^\jp \given 
    \xp_{1:\tm-1}^{\ap_{\tm-1}^\jp}\right)\right] \right],
\end{align*}
where the first equality is the definition, the second equality 
separates the dependence of the particle $\xp_{1:\Tm}^\ip = \xp_{1:\Tm}^{\bp_{1:\Tm}^\ip}$ from the 
rest, and the third equality simplifies the product between weights 
and proposals for particle $\xp_{1:\Tm}^\ip = \xp_{1:\Tm}^{\bp_{1:\Tm}^\ip}$.

Now, using \cref{eq:smc:hatZexp} and the above rewrite of the integrand 
we obtain
\begin{align*}
    &\frac{1}{\Np^{\Tm}}\sum_{\ap_{1:\Tm-1}^{1:\Np}} \int  \sum_{\ip=1}^\Np \left[
    \frac{\umod_\Tm(\xp_{1:\Tm}^{\bp_{1:\Tm}^\ip})}{\Z_\Tm} \cdot \prod_{\jp\neq 
    \bp_1^\ip} \prop_1(\xp_1^\jp) \cdot \prod_{\tm=2}^\Tm
    \prod_{\jp\neq \bp_{\tm-1}^\ip} \left[ 
    \nwp_{\tm-1}^{\ap_{\tm-1}^\jp}
    \prop_\tm\left(\xp_\tm^\jp \given 
    \xp_{1:\tm-1}^{\ap_{\tm-1}^\jp}\right)\right] \right] \dif 
    \xp_{1:\Tm}^{1:\Np} \\
    &=\frac{1}{\Np^{\Tm-1}}\sum_{\ap_{1:\Tm-1}^{1:\Np}} \int  
    \frac{\umod_\Tm(\xp_{1:\Tm}^{\bp_{1:\Tm}^1})}{\Z_\Tm} \cdot \prod_{\jp\neq 
    \bp_1^1} \prop_1(\xp_1^\jp) \cdot \prod_{\tm=2}^\Tm
    \prod_{\jp\neq \bp_{\tm-1}^1} \left[ 
    \nwp_{\tm-1}^{\ap_{\tm-1}^\jp}
    \prop_\tm\left(\xp_\tm^\jp \given 
    \xp_{1:\tm-1}^{\ap_{\tm-1}^\jp}\right)\right]  \dif 
    \xp_{1:\Tm}^{1:\Np} \\
    &= \frac{1}{\Np^{\Tm-1}}\sum_{\bp_{1:\Tm-1}^{1}} \int  
    \frac{\umod_\Tm(\xp_{1:\Tm}^{\bp_{1:\Tm}^1})}{\Z_\Tm}   \dif 
    \xp_{1:\Tm}^{1} 
    = \int \frac{\umod_\Tm(\xp_{1:\Tm})}{\Z_\Tm}   \dif 
    \xp_{1:\Tm} = 1,
\end{align*}
where in the first equality we note that the sum over $\ip$ 
generates $\Np$ equal values, and thus we can arbitrarily choose one 
of these (in this case $\ip=1$) to evaluate and multiply the result by 
$\Np$. In the second equality we marginalize the variables not 
involved in the particle $\xp_{1:\Tm}^{\bp_{1:\Tm}^1}$. The two final equalities 
follow because we are averaging $\Np^{\Tm-1}$ values that are all 
equal to $1$. This concludes the proof.
\end{subappendices}

\chapter{Conditional SMC: Algorithms and Applications}\label{sec:csmc}
\Acrfull{CSMC} is a recent algorithm combining \gls{SMC} and \gls{MCMC} \citep{andrieu2010particle}, originally developed as a means to approximate and mimic idealized Gibbs samplers for local latent variable models. The intractability of the exact Gibbs sampler stems from the challenge in simulating from the exact conditional of latent variables given the data and the parameters. The key idea in \gls{CSMC} is to simulate from this conditional distribution approximately using a slightly modified \gls{SMC} approximation, retaining a single particle at each iteration. By iterating this procedure we obtain a \gls{MCMC} method that has the desired target distribution as its stationary distribution. We will see how this can be used not only for simulating from a target distribution, but also how we can use it to generate unbiased inverse normalization constant estimates, speed up inference for \eg probabilistic programming, and evaluating other approximate inference methods.

In the first section below we introduce the \gls{CSMC} algorithm and illustrate it with an example. Then, we discuss the unbiasedness property of the 
\gls{CSMC} inverse normalization constant estimate, a key property underpinning 
the rest of the section. Using \gls{CSMC} as a tool we move on by deriving the distribution of the expected \gls{SMC} empirical distribution approximation. Next, we introduce the \gls{IPMCMC} algorithm, leveraging \gls{CSMC} and \gls{SMC} to derive a method that can take advantage of distributed and multi-core computational architectures. 
Finally, we discuss ways in which \gls{SMC} and \gls{CSMC} can be used for evaluating approximate inference.

\section{Conditional Sequential Monte Carlo}\label{sec:csmc:csmc}
One way to look at \gls{CSMC} is to view it as  Markov kernel with the target distribution $\nmod_\Tm$ as its stationary distribution. Each iteration takes the previous sample as an input, a reference trajectory, and outputs an updated sample with no need for an accept--reject step. If we keep iterating, applying \gls{CSMC} over and over, we obtain the \emph{iterated} \gls{CSMC} algorithm. The iterated \gls{CSMC} sampler is a type of Gibbs sampler on an extended space of all random variables generated at each step \citep{andrieu2010particle}. We will sometimes refer to the previous sample $\xp_{1:\Tm}'$, \ie the reference trajectory, as the \emph{retained particle}.

The \gls{CSMC} algorithm differs from standard \gls{SMC} described in \cref{sec:smc} only in that a single particle is set deterministically to the retained particle from the previous iteration. This means that we condition on the event that a specific particle survives and is equal to the retained particle $\xp_{1:\Tm}'$, hence the name \emph{conditional} \gls{SMC}. We obtain the \gls{CSMC} algorithm by a simple modification of the \gls{SMC} proposal and weight update equations. At each iteration, for the first $\Np-1$ particles, $\ip =1,\ldots,\Np-1$, we follow the standard updates according to \cref{eq:smc:proposal,eq:smc:weights}. The proposal is
\begin{align}
	\xp_{1:\tm}^\ip &\sim \widehat\nmod_{\tm-1}(\xp_{1:\tm}) \prop_\tm(\xp_\tm\given\xp_{1:\tm-1}),
	\label{eq:csmc:proposal}
\end{align}
and the weights
\begin{align}
	\uwp_{\tm}^\ip &= \frac{\umod_{\tm}\left(\xp_{1:\tm}^\ip\right)}{\umod_{\tm-1}\left(\xp_{1:\tm-1}^\ip\right)\prop_\tm(\xp_\tm^\ip\given\xp_{1:\tm-1}^\ip)} .
	\label{eq:csmc:weights}
\end{align}
The last particle $\ip=\Np$ we set deterministically equal to the retained particle $\xp_{1:\tm}^\Np = \xp_{1:\tm}'$. Its weight is computed according to \cref{eq:csmc:weights}, that is we use the same weight expression for all $\Np$ particles. The empirical distribution approximation of $\nmod_\Tm$ is, analogously to \gls{SMC}, equal to
\begin{align}
		\widehat\nmod_{\Tm}(\xp_{1:\Tm}) = \sum_{\ip=1}^\Np \frac{\uwp_\Tm^\ip}{\sum_\jp \uwp_\Tm^\jp} \dirac_{\xp_{1:\Tm}^\ip}.
		\label{eq:csmc:distapprox}
\end{align}
The updated retained particle is obtained by drawing a single sample from the empirical distribution approximation in \cref{eq:csmc:distapprox}, \ie $\xp_{1:\Tm}'\sim\widehat\nmod_{\Tm}(\xp_{1:\Tm})$.

We summarize the \gls{CSMC} algorithm in \cref{alg:csmc}. Iterated \gls{CSMC} is obtained by letting the retained particle at the next iteration be the output of the previous step. This algorithm constructs a Markov chain, consisting of the retained particles at each iteration, with the target distribution $\nmod_\Tm$ as its stationary distribution. For a thorough treatment of the theory of iterated \gls{CSMC}, \gls{PG} and \gls{PMCMC}, \ie the extension to sampling also model parameters, see \citet{andrieu2010particle}.
\begin{algorithm}[tb]
    \SetKwInOut{Input}{input}\SetKwInOut{Output}{output}
    \Input{Unnormalized target distributions $\umod_\tm$, proposals 
    $\prop_\tm$, retained particle $\xp_{1:\Tm}'$, number of samples $\Np$.}
    \BlankLine
    \For{$\tm=1$ \KwTo $\Tm$}{
        \For{$\ip = 1$ \KwTo $\Np-1$}{
            Sample $\xp_{1:\tm}^\ip \sim 
            \widehat\nmod_{\tm-1}(\xp_{1:\tm-1})
            \prop_\tm(\xp_\tm\given\xp_{1:\tm-1})$ 
            (see \cref{eq:csmc:proposal})\\
            Set $\uwp_\tm^\ip = \frac{\umod_\tm(\xp_{1:\tm}^\ip)}
            {\umod_{\tm-1}(\xp_{1:\tm-1}^\ip) \prop_\tm(\xp_{\tm}^\ip\given \xp_{1:\tm-1}^\ip)}$
            (see \cref{eq:csmc:weights})
        }
        Set $\xp_{1:\tm}^\Np = \xp_{1:\tm}'$\\
        Set $\uwp_\tm^\Np = \frac{\umod_\tm(\xp_{1:\tm}^\Np)}
            {\umod_{\tm-1}(\xp_{1:\tm-1}^\Np) \prop_\tm(\xp_{\tm}^\Np\given \xp_{1:\tm-1}^\Np)}$\\
        Set $\nwp_\tm^\ip = \frac{\uwp_\tm^\ip}{\sum_\jp \uwp_\tm^\jp}$, 
        for $\ip =1,\ldots,\Np$
    }
    Sample $\xp_{1:\Tm}' \sim\widehat\nmod_{\Tm}(\xp_{1:\Tm})$\\
    \caption{\glsreset{CSMC}\Gls{CSMC}}\label{alg:csmc}
\end{algorithm}

The path degeneracy issue discussed in \cref{sec:smc} is particularly prominent in iterated \gls{CSMC}. Because we enforce that the retained particle survives to the very end, we simultaneously force that all the particles collapse for early steps $\tm$ to $\xp_{1:\tm}'$. This means that the output from \cref{alg:csmc}, with high probability, is equal to the input for early values of $\xp_{1:\Tm}'$. The samples we get are highly correlated and any estimator will require a significant number of samples to reasonably approximate expectation with respect to the target distribution $\nmod_\Tm$. By modifying the way we update particle $\Np$, the retained particle, we can alleviate this problem. One way to do this is using \emph{ancestor sampling} \citep{lindsten2014particle}. With ancestor sampling, instead of setting
 $\xp_{1:\tm}^\Np = \xp_{1:\tm}'$ deterministically as in \cref{alg:csmc}, we sample at each time step
\begin{align}
	\xp_{1:\tm}' &\sim \sum_{\ip=1}^\Np \frac{\nuwp_\tm^{\ip}}{\sum_\jp \nuwp_\tm^{\jp}}\dirac_{\xp_{1:\tm-1}^\ip}(\xp_{1:\tm-1}) \cdot \dirac_{\xp_\tm'}(\xp_\tm),
\end{align}
where the auxiliary weights $\nuwp_\tm^{\ip}$ are given by
\begin{align}
	\nuwp_\tm^{\ip} &= \nwp_{\tm-1}^\ip \frac{\umod_{\Tm}\left((\xp_{1:\tm-1}^\ip,\xp_{\tm:\Tm}')\right)}{\umod_{\tm-1}\left(\xp_{1:\tm-1}^\ip\right)}.
	\label{eq:csmc:as}
\end{align}
Using ancestor sampling does not avoid path degeneracy but it can allow for the particle trajectory we degenerate to to be different than the retained particle from a previous iteration. For an in-depth treatment of the path degeneracy issue and its other potential solutions we refer to \citet{lindsten2013backward}.

\begin{example}[\Acrlong{CSMC} for \cref{ex:running}]
We revisit our running non-Markovian Gaussian example. We let $\Tm=20$ and study the performance of iterated \gls{CSMC} without (blue) and with ancestor sampling (red) in \cref{fig:illustration:csmc}. We set $\Np=5$, $10$ and study the kernel density estimates based on $10,000$ \gls{MCMC} samples of the marginal distributions $\nmod_\Tm(\xp_1)$ and $\nmod_\Tm(\xp_\Tm)$, respectively. 

We can see the striking effect that path degeneracy has the iterated \gls{CSMC} approximation in \cref{fig:illustration:csmc:pathdegen}; effectively we approximate $\nmod_\Tm(\xp_1)$ using a single sample when $\Np=5$, whereas the approximation for $\nmod_\Tm(\xp_\Tm)$ is more diverse. When using ancestor sampling on the other hand, the approximation is already accurate for $\Np=5$. This accuracy comes at a cost however, the ancestor sampling step requires that we evaluate \cref{eq:csmc:as} at each iteration. This extra step has a computational cost that in general may be proportional to $\Tm^2$, whereas the cost of standard \gls{CSMC} is often only on the order of $\Tm$. 
For some target distributions, such as the \gls{SSM} or our running example, there is only a small constant computational overhead associated with evaluating \cref{eq:csmc:as}.
However, whether the benefit of ancestor sampling will outweigh its computational cost will have to be evaluated on a case by case basis.
\begin{figure}[tbp]
\centering
\begin{subfigure}[b]{0.42\textwidth}
\resizebox{1.0\textwidth}{!}{
\includegraphics{{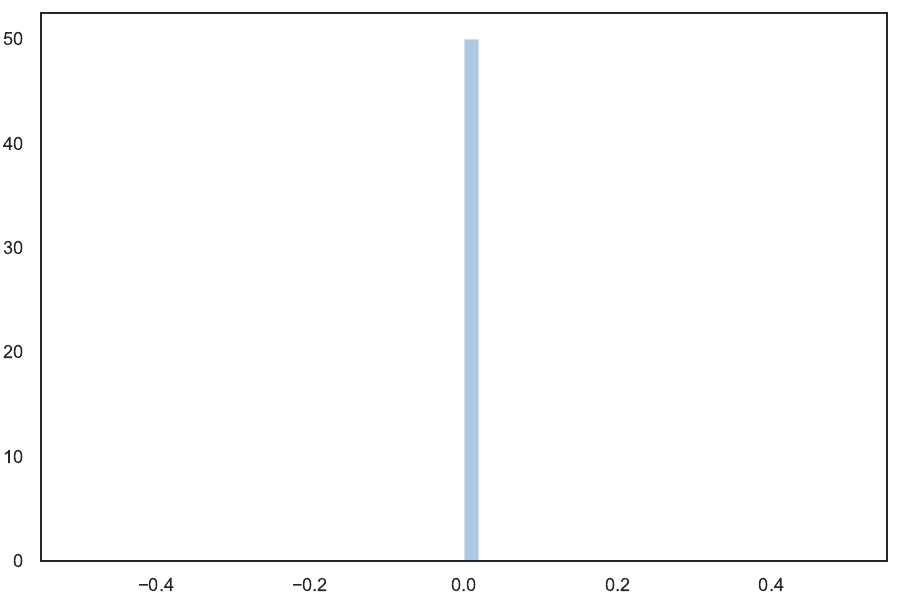}}
}
\caption{$\tm=1$,  $\Np=5$}\label{fig:illustration:csmc:pathdegen}
\end{subfigure}
\begin{subfigure}[b]{0.42\textwidth}
\resizebox{1.0\textwidth}{!}{
\includegraphics{{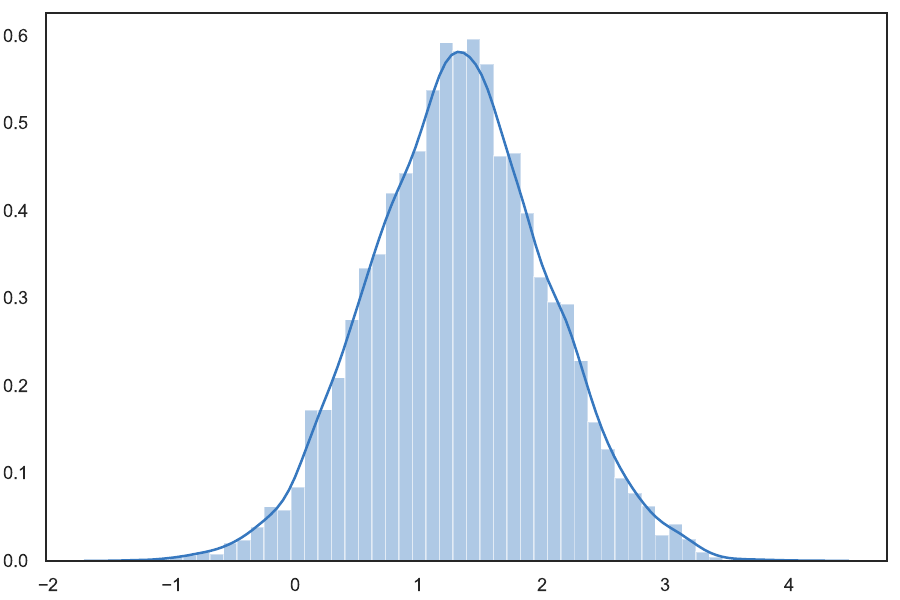}}
}
\caption{$\tm=20$,  $\Np=5$}
\end{subfigure}

\begin{subfigure}[b]{0.42\textwidth}
\resizebox{1.0\textwidth}{!}{
\includegraphics{{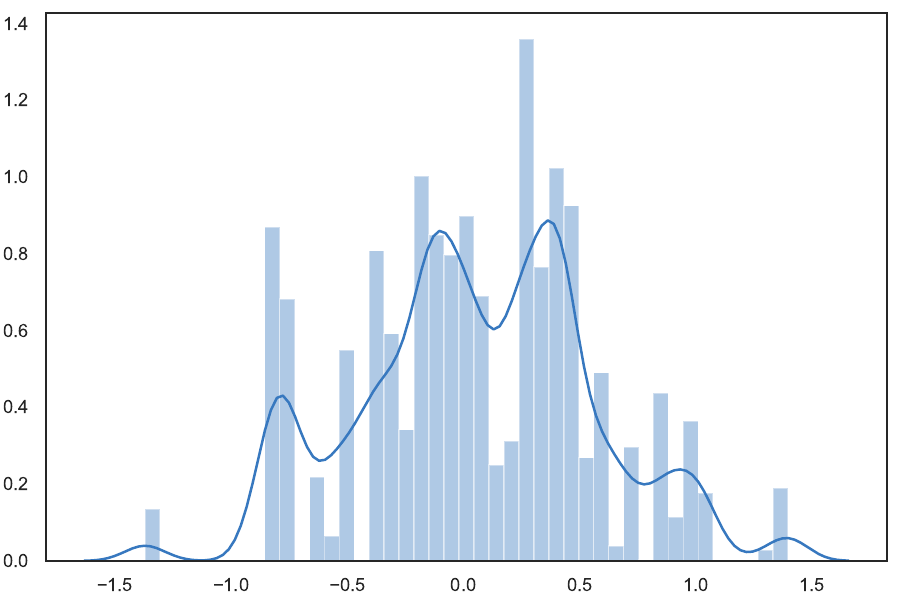}}
}
\caption{$\tm=1$,  $\Np=10$}
\end{subfigure}
\begin{subfigure}[b]{0.42\textwidth}
\resizebox{1.0\textwidth}{!}{
\includegraphics{{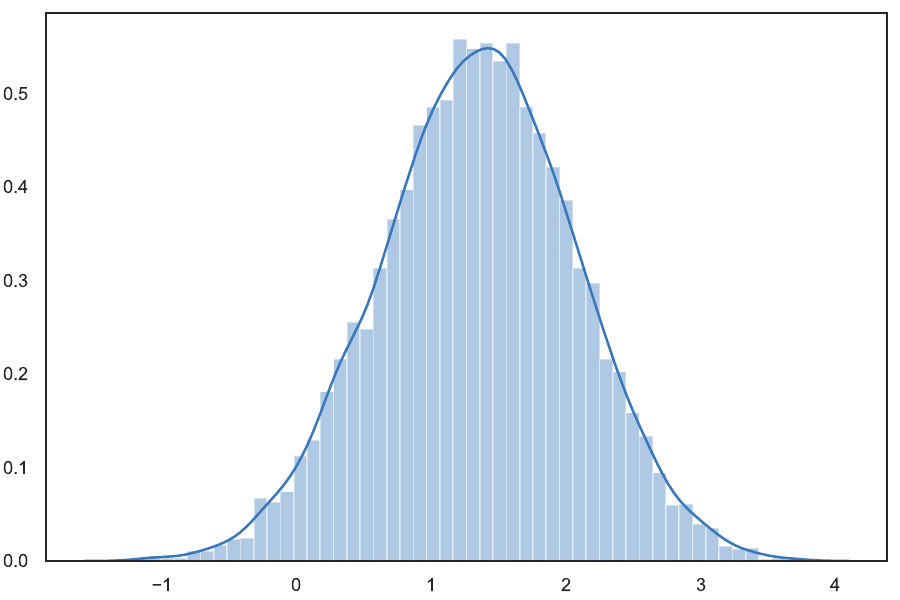}}
}
\caption{$\tm=20$,  $\Np=10$}
\end{subfigure}

\begin{subfigure}[b]{0.42\textwidth}
\resizebox{1.0\textwidth}{!}{
\includegraphics{{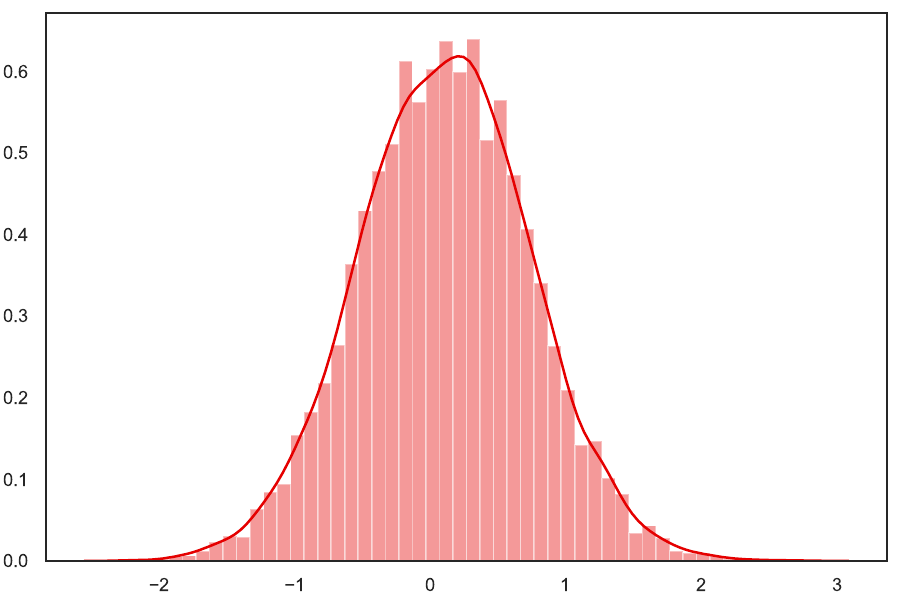}}
}
\caption{$\tm=1$,  $\Np=5$}
\end{subfigure}
\begin{subfigure}[b]{0.42\textwidth}
\resizebox{1.0\textwidth}{!}{
\includegraphics{{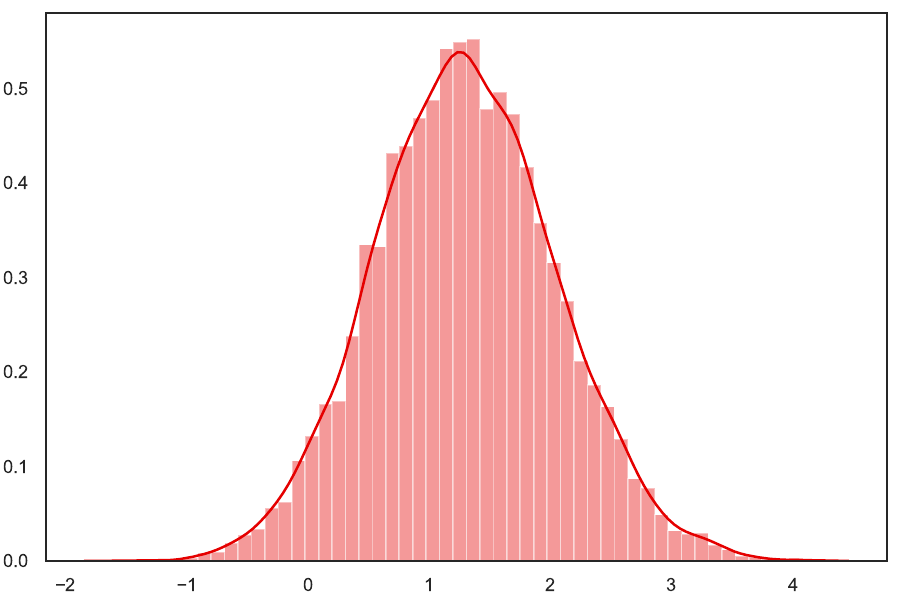}}
}
\caption{$\tm=20$,  $\Np=5$}
\end{subfigure}

\begin{subfigure}[b]{0.42\textwidth}
\resizebox{1.0\textwidth}{!}{
\includegraphics{{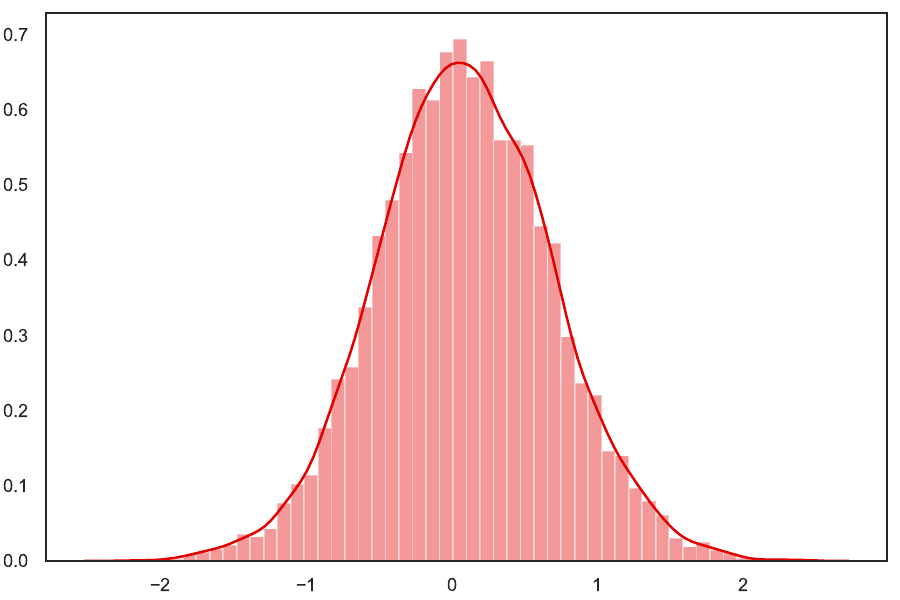}}
}
\caption{$\tm=1$,  $\Np=10$}
\end{subfigure}
\begin{subfigure}[b]{0.42\textwidth}
\resizebox{1.0\textwidth}{!}{
\includegraphics{{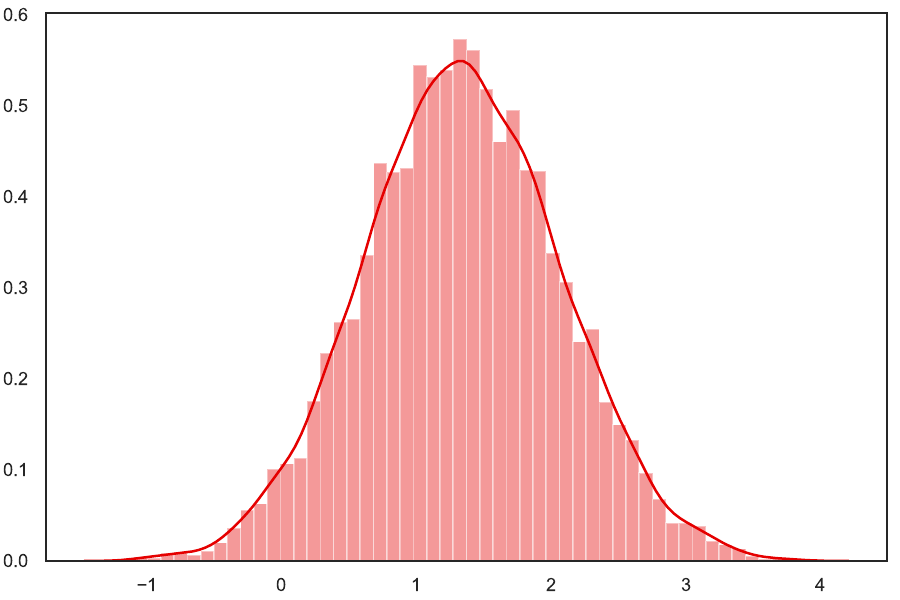}}
}
\caption{$\tm=20$,  $\Np=10$}
\end{subfigure}
\caption{Illustration of the kernel density estimates based on $10,000$ \gls{MCMC} samples of the marginal distributions $\nmod_\Tm(\xp_1)$ and $\nmod_\Tm(\xp_\Tm)$ from iterated \gls{CSMC} (top two rows, blue) and \gls{CSMC} with ancestor sampling (bottow two rows, red).} \label{fig:illustration:csmc}
\end{figure}
\label{ex:csmc}
\end{example}

\section{The Unbiasedness of the Inverse Normalization Constant Estimate}\label{sec:unbiasedZinv}
Unlike in the standard \gls{SMC} algorithm, the normalization constant estimate for \gls{CSMC} based on the product of the average weights is biased. We restate the normalization constant estimate below:
\begin{align}
	\hatZ_\Tm &= \prod_{\tm=1}^\Tm \frac{1}{\Np} \sum_{\ip=1}^\Np \uwp_\tm^\ip.
\end{align}
Because we condition on a reference trajectory, this normalization constant estimate has different properties than does the \gls{SMC} estimate. However, it is still consistent,
converging almost surely to $\Z_\Tm$ as the number of particles tends to infinity.

While the above estimator is biased for a finite number of particles, interestingly the \gls{CSMC} inverse normalization constant estimate $\nicefrac{1}{\hatZ_\Tm}$ \emph{is unbiased}. 
We will see that this is particularly relevant for the inference evaluation methods that we will study in \cref{sec:evaluation}.
The unbiasedness holds under the assumption that the reference trajectory
is distributed according to the target distribution $\xp_{1:\Tm}' \sim \nmod_\Tm(\xp_{1:\Tm})$. 
This assumption relies on the Markov chain having already converged to its stationary distribution. However, in some applications it is possible to make use of \gls{CSMC} with the problem set up in such a way that this is always true.

We formalize and prove the unbiasedness result in \cref{prop:csmc:unbiased}.
\begin{proposition}
Assuming that the retained particle $\xp_{1:\Tm}' \sim \nmod_\Tm(\xp_{1:\Tm})$, then the 
\gls{CSMC} inverse normalization constant estimate $\frac{1}{\hatZ_\Tm}$ is 
non-negative and unbiased
    \begin{align}
        \Exp \left[\frac{1}{\hatZ_\Tm} \right] = \frac{1}{\Z_\Tm}, \quad
        \frac{1}{\hatZ_\tm} &\geq 0.
    \end{align}
    \label{prop:csmc:unbiased}
\end{proposition}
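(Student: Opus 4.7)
The plan is to mirror the SMC unbiasedness argument in Appendix~\ref{sec:smc:unbiasedZ} but run it backwards. Recall that the SMC proof established the identity
\begin{align*}
\frac{\hatZ_\Tm}{\Z_\Tm}\, \qSMC\bigl(\xp_{1:\Tm}^{1:\Np}, \ap_{1:\Tm-1}^{1:\Np}\bigr) = \sum_{\km=1}^\Np \Pi\bigl(\km, \xp_{1:\Tm}^{1:\Np}, \ap_{1:\Tm-1}^{1:\Np}\bigr),
\end{align*}
where $\Pi$ is the extended target density: pick a lineage index $\km$ uniformly on $\{1,\ldots,\Np\}$, draw its associated trajectory $\xp_{1:\Tm}^{\bp_{1:\Tm}^\km}$ from $\nmod_\Tm$, and generate all remaining particles and ancestor indices exactly as in standard SMC.

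The first step is to recognize that $\Pi$ is precisely the joint law of the random variables produced by CSMC under the hypothesis $\xp_{1:\Tm}' \sim \nmod_\Tm$. Conditional on the index $\km$ and the trajectory $\xp_{1:\Tm}^{\bp_{1:\Tm}^\km}$, the density $\Pi$ factorizes into exactly the propagation and weighting steps of \cref{alg:csmc} (\cref{eq:csmc:proposal} and \cref{eq:csmc:weights}) applied to the $\Np-1$ non-reference particles, with $\xp_{1:\Tm}^{\bp_{1:\Tm}^\km}$ playing the role of the retained trajectory. Since $\hatZ_\Tm$ is a symmetric function of the $\Np$ particle slots, its distribution is insensitive to whether the reference sits at slot $\Np$ with trivial ancestry (as in \cref{alg:csmc}) or at a uniformly chosen slot $\km$ with arbitrary ancestry $\bp_{1:\Tm}^\km$, so $\Exp_{\text{CSMC}}[1/\hatZ_\Tm] = \Exp_\Pi[1/\hatZ_\Tm]$.

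The second step is a one-line manipulation of the displayed identity: divide both sides by $\hatZ_\Tm$ to obtain $\qSMC/\Z_\Tm = \hatZ_\Tm^{-1} \sum_\km \Pi(\km,\cdot)$, then integrate over all particles and ancestor indices and sum over $\km$. The left-hand side integrates to $1/\Z_\Tm$ since $\qSMC$ is a probability density, while the right-hand side is precisely $\Exp_\Pi[1/\hatZ_\Tm]$, yielding the claim. Non-negativity of $1/\hatZ_\Tm$ is immediate from non-negativity of the importance weights. The main obstacle is the bookkeeping in the first step---in particular, carefully verifying that the conditional density of the non-reference particles under $\Pi$ matches the CSMC sampling distribution after accounting for the symmetry over reference slots---but this parallels closely the analogous combinatorial step already carried out in the SMC unbiasedness proof.
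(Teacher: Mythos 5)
Your proof is correct and is essentially the paper's own argument read in reverse: both hinge on the same extended-target identity $\Np^{-\Tm}\,\nmod_\Tm(\xp_{1:\Tm}^{\bp_{1:\Tm}})\,\qCSMC(\cdot\given \xp_{1:\Tm}^{\bp_{1:\Tm}},\bp_{1:\Tm}) = \frac{\hatZ_\Tm}{\Z_\Tm}\,\frac{\uwp_\Tm^{\bp_\Tm}}{\sum_\ip \uwp_\Tm^\ip}\,\qSMC(\cdot)$, and your step of dividing the summed identity by $\hatZ_\Tm$ and integrating is the same computation the paper carries out starting from $\Exp[\Z_\Tm/\hatZ_\Tm]$. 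Your reference-slot symmetry argument is likewise how the paper handles the bookkeeping, by letting the retained particle occupy an arbitrary lineage $\bp_{1:\Tm}$ drawn uniformly from $\{1,\ldots,\Np\}^\Tm$ rather than fixing it to slot $\Np$.
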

\begin{proof}
See Appendix~\ref{sec:csmc:unbiasedZinv}.
\end{proof}


\section{The Expected SMC Empirical Distribution}\label{sec:csmc:smcdist}
Another method for representing the expected \gls{SMC} empirical distribution $\displaystyle\Exp\left[\widehat{\nmod}_{\Tm}(\xp_{1:\Tm})\right]$ is using \gls{CSMC}. This can be useful for some applications of \gls{SMC}, such as in variational \gls{SMC} or in inference evaluation which we will see later in this chapter. The marginal distribution of a single sample from the \gls{SMC} empirical distribution $\widehat{\nmod}_{\Tm}$ is
\begin{align}
	\Exp\left[\widehat{\nmod}_{\Tm}(\xp_{1:\Tm})\right] &= \Exp_{\qSMC}\left[\sum_{\ip=1}^\Np \nwp_\Tm^\ip \dirac_{\xp_{1:\Tm}^\ip}\left(\xp_{1:\Tm}\right)\right],
\end{align}
where $\qSMC$ (see \cref{eq:smc:smcdist} for its definition) is the distribution of all particles and ancestor variables generated by a single run of the \gls{SMC} algorithm.

We formalize the alternate representation of the expected \gls{SMC} empirical distribution in \cref{prop:csmc:smcdist}. 
\begin{proposition}
The expected \gls{SMC} empirical distribution satisfies
    \begin{align}
        \Exp_{\qSMC}\left[\widehat{\nmod}_{\Tm}(\xp_{1:\Tm})\right] &=\umod_\Tm(\xp_{1:\Tm}) \Exp_{\qCSMC}\left[\hatZ_\Tm^{-1}\given \xp_{1:\Tm}\right],
    \end{align}
    where $\qCSMC$ is the distribution of all random variables generated by a single run of the \gls{CSMC} algorithm with $\xp_{1:\Tm}$ as its retained particle.
    \label{prop:csmc:smcdist}
\end{proposition}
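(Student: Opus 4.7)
The approach is to test both sides against an arbitrary bounded measurable function $h$ and use a tracked-lineage decomposition of the joint distribution of all SMC random variables together with a weighted terminal index $b_\Tm \sim \nwp_\Tm^{1:\Np}$. The off-lineage factor will turn out to be exactly a CSMC density with the lineage particle playing the role of the retained trajectory. The argument mirrors the unbiasedness proof in Appendix~\ref{sec:smc:unbiasedZ}; the new ingredient is that the terminal index is random rather than fixed, which is what will produce the $\hatZ_\Tm^{-1}$ factor.

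First I would write the left-hand side, tested against $h$, as $\Exp_\qSMC[\widehat\nmod_\Tm(h)] = \sum_{b_\Tm=1}^\Np \Exp_\qSMC[\nwp_\Tm^{b_\Tm} h(\xp_{1:\Tm}^{b_\Tm})]$. Reusing the appendix's notation, define the lineage indices $b_\tm = \ap_\tm^{b_{\tm+1}}$ for $\tm < \Tm$, so that $\xp_{1:\Tm}^{b_\Tm}$ is assembled from $\xp_\tm^{b_\tm}$ along $b_{1:\Tm}$. Now expand $\qSMC \cdot \nwp_\Tm^{b_\Tm}$ from \cref{eq:smc:smcdist} and peel off the lineage-indexed factors. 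Using the weight definition $\uwp_\tm^{b_\tm} = \umod_\tm(\xp_{1:\tm}^{b_\tm})/[\umod_{\tm-1}(\xp_{1:\tm-1}^{b_{\tm-1}})\prop_\tm(\xp_\tm^{b_\tm}\given \xp_{1:\tm-1}^{b_{\tm-1}})]$, the lineage proposals cancel against the denominators of $\prod_\tm \uwp_\tm^{b_\tm}$ in a telescoping product (with $\umod_0 \equiv 1$) equal to $\umod_\Tm(\xp_{1:\Tm}^{b_\Tm})$; combining this with $\prod_\tm \nwp_\tm^{b_\tm} = \prod_\tm \uwp_\tm^{b_\tm}/[\Np^\Tm \hatZ_\Tm]$ yields
\begin{align*}
\qSMC \cdot \nwp_\Tm^{b_\Tm}
&= \frac{\umod_\Tm(\xp_{1:\Tm}^{b_\Tm})}{\Np^\Tm \hatZ_\Tm} \cdot \left[\prod_{\jp\neq b_1} \prop_1(\xp_1^\jp) \prod_{\tm=2}^\Tm \prod_{\jp\neq b_\tm} \nwp_{\tm-1}^{\ap_{\tm-1}^\jp} \prop_\tm(\xp_\tm^\jp \given \xp_{1:\tm-1}^{\ap_{\tm-1}^\jp})\right].
\end{align*}
The bracketed factor is precisely the density $\qCSMC(\cdot \given \xp_{1:\Tm}^{b_\Tm}, b_{1:\Tm})$ governing the non-retained variables in a CSMC run whose retained particle $\xp_{1:\Tm}^{b_\Tm}$ occupies the lineage slots $b_{1:\Tm}$.

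Plugging this decomposition back in, reparameterizing the summation/integration from $(\text{all SMC vars},b_\Tm)$ to $(b_{1:\Tm},\xp_{1:\Tm}^\star,\text{off-lineage vars})$ via the obvious bijection, and integrating the off-lineage variables against $\qCSMC(\cdot\given\xp_{1:\Tm}^\star,b_{1:\Tm})$ gives
$\Exp_\qSMC[\widehat\nmod_\Tm(h)] = \sum_{b_{1:\Tm}} \Np^{-\Tm} \int h(\xp_{1:\Tm}^\star)\,\umod_\Tm(\xp_{1:\Tm}^\star)\,\Exp_{\qCSMC}[\hatZ_\Tm^{-1}\given\xp_{1:\Tm}^\star,b_{1:\Tm}]\dif \xp_{1:\Tm}^\star$. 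Because multinomial resampling is invariant under relabeling of particles, the inner conditional expectation does not depend on $b_{1:\Tm}$, so the sum over the $\Np^\Tm$ lineage choices cancels the prefactor, leaving the claimed identity.

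The main obstacle is verifying that the reparameterization $(\text{all SMC vars},b_\Tm)\leftrightarrow(b_{1:\Tm},\xp_{1:\Tm}^\star,\text{off-lineage})$ is genuinely bijective and that the factored density matches $\qCSMC(\cdot\given\xp_{1:\Tm}^\star,b_{1:\Tm})$ exactly, including the requisite symmetry; once those bookkeeping points are in place, the remainder is the same telescoping calculation that drives the $\hatZ_\Tm$ unbiasedness argument.
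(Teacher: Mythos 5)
Your proposal is correct and rests on the same core identity as the paper's proof, namely the factorization $\qSMC \cdot \nwp_\Tm^{\bp_\Tm} = \Np^{-\Tm}\hatZ_\Tm^{-1}\,\umod_\Tm(\xp_{1:\Tm}^{\bp_{1:\Tm}})\,\qCSMC(\cdot\given\xp_{1:\Tm}^{\bp_{1:\Tm}},\bp_{1:\Tm})$ together with the lineage/off-lineage change of variables; you simply run the manipulation from the left-hand side to the right-hand side, whereas the paper starts from the right-hand side and reuses the same identity from its inverse-normalizing-constant proof. The telescoping of the lineage weights, the identification of the off-lineage factor with the \gls{CSMC} density, and the relabeling-symmetry step that removes the dependence on $\bp_{1:\Tm}$ all match the paper's argument.
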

\begin{proof}
See Appendix~\ref{sec:csmc:supp:smcdist}.
\end{proof}

\section{Parallelization}\label{sec:ipmcmc}
In \cref{sec:islandsmc} we briefly discussed a few approaches for leveraging multi-core and distributed computational architectures. Here we will see how we can combine \gls{SMC} and \gls{CSMC} to derive \gls{IPMCMC} \citep{rainforth2016interacting}, an algorithm that takes advantage of parallelization to speed up inference for amongst other things probabilistic programming.

The \gls{IPMCMC} method, like the iterated \gls{CSMC}, can be thought of as a type of Gibbs sampler on an extended space that draws approximate samples from the target distribution $\nmod_\Tm$. However unlike iterated \gls{CSMC}, the \gls{IPMCMC} algorithm generates not only one sample at each iteration but several. The method is based on a pool of several interacting standard and conditional sequential Monte Carlo samplers. These pools of samplers are run as parallel processes and will be referred to as nodes below. Each node either runs an iteration of \gls{CSMC} like in \cref{alg:csmc}, or an iteration of \gls{SMC} like in \cref{alg:esmc:smc}. After each run of this pool, we update the node indices running \gls{CSMC} by applying successive Gibbs updates. This means that the nodes running \gls{CSMC} can change from iteration to iteration. This approach can let the user trade off exploration (\gls{SMC}) and exploitation (\gls{CSMC}) to achieve a faster and more accurate approximate inference method.

To simplify notation we will let $\xm$ denote the full trajectory $\xp_{1:\Tm}$. We assume that we have a total of $M$ nodes, indexed by $\jp=1,\ldots, M$, of which $P$ run \gls{CSMC} and the remaining run standard \gls{SMC}. The conditional nodes, \ie the nodes running \gls{CSMC} at any given time, are denoted by a set of indices $c_{1:P} \in \{1,\ldots,M\}^P$ where each $c_\jp$ is distinct. The conditional node $c_\jp$'s retained particle is referred to by $\xm_{\jp}'$. After each run all nodes produce a normalization constant estimate $\hatZ_\Tm^\jp$, \ie the product of the average unnormalized weights in that node. \Gls{IPMCMC} proceeds by successively updating the conditional node indices and retained particles by Gibbs steps. For $\jp=1,\ldots,P$ we sample
\begin{align}
	\Prb(c_\jp = \ip) &= \frac{\hatZ_\Tm^\ip \cdot \ident\left(\ip \notin c_{\neg \jp}\right)}{\sum_{\km=1}^P \hatZ_\Tm^\km \cdot \ident\left(\km \notin c_{\neg \jp}\right)},
	\label{eq:ipmcmc:nodeidx}
\end{align}
where $\ident\left(\cdot\right)$ is the identity function evaluating to one if its argument is true and zero otherwise. Furthermore, we have used the node index shorthand $c_{\neg j} \eqdef \{c_1,\ldots,c_{\jp-1},c_{\jp+1},\ldots,c_{P}\}$. After selecting new nodes to become conditional nodes through \cref{eq:ipmcmc:nodeidx} we select their new corresponding retained particle by simulating
\begin{align}
	\xm_\jp' &\sim \widehat{\nmod}_\Tm^{c_\jp}(\xm), \quad \jp=1,\ldots,P,
	\label{eq:ipmcmc:nodeparticle}
\end{align}
where $\widehat{\nmod}_\Tm^{c_\jp}$ is the standard target distribution approximation from node $c_\jp$.

We summarize one iteration of \gls{IPMCMC} in \cref{alg:ipmcmc}. Iterating the update steps, letting the retained particles at the previous step be the input for the subsequent step, generates a Markov chain on an extended space where each retained particle is approximately distributed according to the target distribution.
\begin{algorithm}[tb]
    \SetKwInOut{Input}{input}\SetKwInOut{Output}{output}
    \Input{Number of nodes $M$, conditional nodes $P$, retained particles $\xm_{1:P}'$, conditional node indices $c_{1:P}$.}
    \BlankLine
   Nodes $1:M\backslash c_{1:P}$ run \cref{alg:esmc:smc} (\gls{SMC})\\
   Nodes $c_{1:P}$ run \cref{alg:csmc} (\gls{CSMC})\\
        \For{$\jp = 1$ \KwTo $P$}{
            Select a new conditional node by sampling $c_\jp$ 
            (see \cref{eq:ipmcmc:nodeidx})\\
            Set a new retained particle $\xm_\jp'$ by simulating from $\widehat{\nmod}_\Tm^{c_\jp}$
            (see \cref{eq:ipmcmc:nodeparticle})
        }
    \caption{\Acrlong{IPMCMC}}\label{alg:ipmcmc}
\end{algorithm}

\Gls{IPMCMC} is suited to distributed and multi-core architectures. The main computational cost is running the \gls{SMC} and \gls{CSMC} iterations, which can be done in parallel by the nodes. At each iteration, only a single normalization constant estimate needs be communicated between nodes and the calculation of the updates of $c_{1:P}$ is negligible. Retained particles do not need to be communicated and may be stored locally in each node until needed. Furthermore, \gls{IPMCMC} may be amenable to asynchronous adaptation under certain assumptions \citep{rainforth2016interacting}. The algorithm is also suited for use in probabilistic programming where we are often only able to simulate from the prior and evaluate the likelihood. \Acrlong{IPMCMC} can allow for tackling more challenging inference problems, \ie more complex probabilistic programs, compared to standard \gls{PMCMC} methods due to leveraging distributed and multi-core computational architectures \citep{rainforth2016interacting}.


\section{Evaluating Inference}\label{sec:evaluation}
Approximate Bayesian inference is central to a large part of probabilistic machine learning and its application. It is a coherent approach to treat uncertainty in both the data and the model with applications to everything from robotics to health care. A notoriously difficult problem is to measure the accuracy of our posterior approximation compared to the true posterior distribution.  Inference checking, or inference evaluation, refers to the process of comparing our approximate inference result to what we would have obtained using the true posterior. In practice whether an approximation is accurate or not may differ between different applications, we might be interested in different aspects of the posterior. Some examples are tail probabilities in rare event estimation or posterior mean for minimum mean square error estimation. In this section we will focus on comparing directly the posterior approximation to the exact posterior distribution in terms of symmetrized \gls{KL} divergence, also known as the Jeffreys divergence, following \citet{grosse2016reliability,towner-nips-2017}.

The posterior distribution that we want to approximate is denoted by $p(\xm\given\ym)$, and the approximate posterior will be denoted by $q(\xm)$. The approximation $q$ could for example be the expected \gls{SMC} empirical distribution approximation, similar to \gls{VSMC}, or it could be an optimized variational approximation, or any of the other examples detailed in \citet{grosse2016reliability,towner-nips-2017}. 
In either case, the symmetrized \gls{KL} divergence from $q$ to $p$ is given by
\begin{align}
	\symKL\left(q(\xm)\,\|\,p(\xm\given\ym)\right) &= 	\Kl\left(q(\xm)\,\|\, p(\xm\given\ym)\right)+\Kl\left(p(\xm\given\ym) \,\|\, q(\xm)\right),
	\label{eq:csmc:jeffreysdiv}
\end{align}
where the standard \acrlong{KL} divergence from $q(\xm)$ to $p(\xm)$ is defined by $\Kl\left(q(\xm)\,\|\, p(\xm)\right) \eqdef \Exp_{q(\xm)}\left[\log q(\xm)-\log p(\xm)\right]$. We can rewrite \cref{eq:csmc:jeffreysdiv} to eliminate the need for computing the marginal likelihood $p(\ym)$ as follows
\begin{align}
	&\symKL\left(q(\xm)\,\|\,p(\xm\given\ym)\right) = 	\Exp_{q(\xm)}\left[\log \frac{q(\xm)}{p(\xm\given\ym)}\right] + \Exp_{p(\xm\given \ym)}\left[\log \frac{p(\xm\given\ym)}{q(\xm)}\right] \nonumber \\
	&=	\Exp_{q(\xm)}\left[\log q(\xm) -\log p(\xm,\ym)\right] + \Exp_{p(\xm\given \ym)}\left[\log p(\xm,\ym) -\log q(\xm)\right].
	\label{eq:csmc:alt:jeffreysdiv}
\end{align}
To be able to estimate \cref{eq:csmc:alt:jeffreysdiv} using a standard \gls{MC} estimator we require that we can sample $q(\xm)$ and $p(\xm\given\ym)$, respectively. Furthermore, we require that we are able to evaluate the \glspl{PDF} $q(\xm)$ and $p(\xm,\ym)$ pointwise. These requirements can be restrictive for the types of problems we encounter in practice, and \citet{grosse2016reliability,towner-nips-2017} make various relaxations and assumptions to make the problem tractable. Below we will detail the two approaches to inference checking known as \gls{BREAD} \citep{grosse2016reliability} and \gls{AIDE} \citep{towner-nips-2017}.

\paragraph{Bounding Divergences with Reverse Annealing} The \gls{BREAD} approach focuses on bounding the symmetrized \gls{KL} for a posterior approximation based on \gls{AIS} \citep{neal2001annealed} for \emph{simulated data}. By using simulated data we ensure that we can generate an exact sample from the posterior distribution $\xm\sim p(\xm\given\ym)$. We can think of \gls{AIS} as a single sample, $\Np=1$, \gls{SMC} method that uses as its proposal a Markov kernel with stationary distribution $\nmod_\tm(\xp_\tm)$. The target distribution is typically chosen according to a data or likelihood tempering scheme as discussed in \cref{sec:seqmodels}. Per definition the posterior distribution is obtained as the marginal at the final iteration, \ie $p(\xm\given\ym) = \nmod_\Tm(\xp_	\Tm)$ where $\xm \equiv \xp_\Tm$. The posterior distribution approximation is then defined by $q(\xm) \eqdef \Exp[\widehat{\nmod}_\Tm(\xp_\Tm)]$ and the expectation is with respect to all the proposal distributions.

Because of this set up the intractability in estimating, or bounding, the Jeffreys divergence in \cref{eq:csmc:alt:jeffreysdiv} comes from having to evaluate the \gls{PDF} $q(\xm)$ pointwise. We can bound \cref{eq:csmc:alt:jeffreysdiv} from above by noting that marginalizing out variables cannot increase the \gls{KL} divergence 
\begin{align}
	\symKL\left(q(\xm)\,\|\,p(\xm\given\ym)\right) &\leq \symKL\left(\Exp[\widehat{\nmod}_\Tm(\xp_{1:\Tm})] \,\|\, \nmod_\Tm(\xp_{1:\Tm})\right) \nonumber \\
	&=  \symKL\left(\umod_\Tm(\xp_{1:\Tm}) \Exp_{\qCSMC}\left[\hatZ_\Tm^{-1}\given \xp_{1:\Tm}\right] \,\|\, \nmod_\Tm(\xp_{1:\Tm})\right),
	\label{eq:csmc:skl:bound1}
\end{align}
where the equality follows from \cref{prop:csmc:smcdist}. We can further bound \cref{eq:csmc:skl:bound1} using Jensen's inequality and simplifying, leading to
\begin{align}
	& \symKL\left(\umod_\Tm(\xp_{1:\Tm}) \Exp_{\qCSMC}\left[\hatZ_\Tm^{-1}\given \xp_{1:\Tm}\right] \,\|\, \nmod_\Tm(\xp_{1:\Tm})\right) \nonumber \\
	& \leq \Exp_{\qSMC}\left[-\log \hatZ_\Tm \right] + \Exp_{\nmod_\Tm}\left[ \Exp_{\qCSMC}\left[\log \hatZ_\Tm \given \xp_{1:\Tm}\right] \right]. 
	\label{eq:csmc:skl:bound2}
\end{align}
The inequality in \cref{eq:csmc:skl:bound2} holds generally even if we would use a standard \gls{SMC} algorithm rather than \gls{AIS} as our posterior approximation. The details are provided in \cref{sec:csmc:supp:bound2}. However, when \gls{AIS} is used we can further simplify to obtain the same bound as \citet{grosse2016reliability}
\begin{align}
	 &\Exp_{\qSMC}\left[-\log \hatZ_\Tm \right] + \Exp_{\nmod_\Tm}\left[ \Exp_{\qCSMC}\left[\log \hatZ_\Tm \given \xp_{1:\Tm}\right] \right] \nonumber \\
	 &= \Exp_{\prop(\xp_{1:\Tm})}\left[ - \sum_{\tm=1}^\Tm \log \uwp_\tm \right] + \Exp_{p(\xp_\Tm\given\ym)\, \smod(\xp_{1:\Tm-1}\given\xp_\Tm)}\left[\sum_{\tm=1}^\Tm \log \uwp_\tm \right],
	 \label{eq:csmc:skl:bound3}
\end{align}
where $\prop(\xp_{1:\Tm})= \prop_1(\xp_1)\prod_{\tm=1}^\Tm \prop_\tm(\xp_\tm\given\xp_{\tm-1})$ are the proposals (forward kernels), $\smod(\xp_{1:\Tm-1}\given\xp_\Tm) = \prod_{\tm=1}^{\Tm-1} \smod_\tm(\xp_\tm\given\xp_{\tm+1})$ are the corresponding backward kernels (\cf \cref{sec:seqmodels}, Conditionally independent models), and the weights are in the likelihood tempering variant given by
\begin{align*}
	\uwp_\tm &= p(\ym\given\xp_{\tm-1})^{\tau_\tm-\tau_{\tm-1}},
\end{align*}
with $\uwp_1 = 1$. This means it is possible to obtain a stochastic upper bound, \ie a \gls{MC} estimate of the upper bound in \cref{eq:csmc:skl:bound3}, by running a standard (forward) \gls{AIS} evaluating the negative log-weights, and running a \emph{reverse} \gls{AIS} evaluating the log-weights; the left and right expressions in \cref{eq:csmc:skl:bound3}, respectively.

\paragraph{Auxiliary Inference Divergence Estimator} The \gls{AIDE} approach, on the other hand, treats a more general class of posterior approximations such as variational inference, \gls{SMC}, or \gls{MCMC}. However, rather than assuming simulated data, \citet{towner-nips-2017} propose to study the Jeffreys divergence from the approximation to that of a gold standard inference approximation.  Typically this gold standard approximation would consist of a long run of an \gls{MCMC} method or \gls{SMC} with a large number of particles that we know approximates the posterior well.  

In this setting not only is the \gls{PDF} for the approximation $q(\xm)$ difficult to evaluate, so is the \gls{PDF} for our gold standard algorithm. Rather than introducing new notation for the gold standard algorithm we will keep the posterior distribution notation $p(\xm\given\ym)$. The approach that the \gls{AIDE} takes is to use \emph{meta-inference} methods, based on \gls{SMC} or \gls{CSMC}, to obtain approximations of the \glspl{PDF} and their reciprocals. A meta-inference algorithm takes as input a sample from the distribution of interest and outputs an unbiased estimate of either its \gls{PDF} or the reciprocal. In fact it suffices that the meta-inference algorithms provide estimates that are unbiased up to a constant of proportionality because any constant is canceled by the symmetry in the Jeffrey's divergence.

We note that the approach suggested by \citet{towner-nips-2017} is essentially equivalent to the bound derived in \cref{eq:csmc:skl:bound1,eq:csmc:skl:bound2,eq:csmc:skl:bound3}. However, instead of the exact target distribution $\nmod_\Tm(\xp_{1:\Tm})$ we have another \gls{SMC} approximation to represent the gold standard algorithm. To differentiate between the two \gls{SMC} approximations we will use superscript $q$ for the approximation and $p$ for the gold standard. With this the corresponding Jeffrey's divergence is
\begin{align}
	&\symKL\left(q(\xm)\,\|\,p(\xm\given\ym)\right) = \symKL\left(\Exp[\widehat{\nmod}_\Tm^q(\xp_{1:\Tm})] \,\|\, \Exp[\widehat{\nmod}_\Tm^p(\xp_{1:\Tm})]  \right) \nonumber \\
	&=  \symKL\left(\umod_\Tm(\xp_{1:\Tm}) \Exp_{\qCSMC^q}\left[\nicefrac{1}{\hatZ_\Tm^q}\given \xp_{1:\Tm}\right] \,\|\, \umod_\Tm(\xp_{1:\Tm}) \Exp_{\qCSMC^p}\left[ \nicefrac{1}{\hatZ_\Tm^p} \given \xp_{1:\Tm}\right]\right),
	\label{eq:csmc:skl:general}
\end{align}
where $\qCSMC^q$, $\hatZ_\Tm^q$ and $\qCSMC^p$, $\hatZ_\Tm^p$ are the \gls{CSMC} distributions and normalization constant estimates for the approximation and gold standard algorithm, respectively. The first equality follows from the definition of $q(\xm)$ and $p(\xm\given\ym)$. The second equality follows from \cref{prop:csmc:smcdist} for the two distributions, respectively. The divergence in \cref{eq:csmc:skl:general} can be bounded, analogously to the \gls{BREAD} case, as follows
\begin{align}
	&\symKL\left(\umod_\Tm(\xp_{1:\Tm}) \Exp_{\qCSMC^q}\left[\nicefrac{1}{\hatZ_\Tm^q}\given \xp_{1:\Tm}\right] \,\|\, \umod_\Tm(\xp_{1:\Tm}) \Exp_{\qCSMC^p}\left[ \nicefrac{1}{\hatZ_\Tm^p} \given \xp_{1:\Tm}\right]\right) \nonumber \\
	&\leq \Exp_{q(\xm)}\left[\Exp_{\qCSMC^p}\left[ \log \hatZ_\Tm^p \given \xp_{1:\Tm}\right]\right]- \Exp_{\qSMC^q}\left[\log\hatZ_\Tm^q \right] + \nonumber \\
	&+\Exp_{p(\xm\given\ym)}\left[\Exp_{\qCSMC^q}\left[ \log \hatZ_\Tm^q \given \xp_{1:\Tm}\right]\right]- \Exp_{\qSMC^p}\left[\log\hatZ_\Tm^p \right],
	\label{eq:csmc:skl:aide}
\end{align}
where $q(\xm)$ and $p(\xm\given\ym)$ are the two corresponding expected \gls{SMC} empirical distributions.

The bound above differs slightly from that derived by \citet{towner-nips-2017}. The \gls{AIDE} procedure uses a form of importance weighted version of \cref{eq:csmc:skl:aide}, in their notation it is equivalent to the bound based on a single run of the meta-inference for the approximation ($M_t=1$) and gold standard ($M_g=1$). For more details we refer the interested reader to that paper.

\newpage
\begin{subappendices}
\section{Proof of Unbiasedness}\label{sec:csmc:unbiasedZinv}
When introducing the \gls{CSMC} method we consistently set the $\Np$:th particle to the reference trajectory. In practice it does not matter which particle is deterministically set to the reference trajectory. It will prove simplifying to let the particle that is conditioned on, \ie the reference trajectory, have its own index $\bp_{1:\Tm} = (\bp_1,\ldots,\bp_\Tm) \in \{1,\ldots,\Np\}^\Tm$. For clarity we focus on using multinomial resampling at each iteration of the algorithm. The distribution of all random variables generated by the \gls{CSMC} method in \cref{alg:csmc} is then given by
\begin{align}
    &\qCSMC\left(\xp_{1:\Tm}^{\neg \bp_{1:\Tm}}, \ap_{1:\Tm-1}^{\neg \bp_{1:\Tm-1}}\given \xp_{1:\Tm}^{\bp_{1:\Tm}}, \bp_{1:\Tm}\right) \nonumber\\
    & = \prod_{\ip \neq b_1} \prop_1(\xp_1^\ip) \cdot \prod_{\tm=2}^\Tm 
    \prod_{\ip\neq \bp_\tm} \left[\nwp_{\tm-1}^{\ap_{\tm-1}^\ip} 
    \prop_\tm\left(\xp_\tm^\ip \given 
    \xp_{1:\tm-1}^{\ap_{\tm-1}^\ip}\right)\right],
    \label{eq:smc:csmcdist}
\end{align}
where the particles are $\xp_{1:\Tm}^{\neg \bp_{1:\Tm}} = \bigcup_{\tm=1}^\Tm 
\left\{\xp_\tm^\ip\right\}_{\ip=1}^\Np \backslash\{\xp_{\tm}^{\bp_\tm}\}_{\tm=1}^\Tm$, and the ancestor variables from 
the resampling step are $\ap_{1:\Tm-1}^{\neg \bp_{1:\Tm-1}} = \bigcup_{\tm=1}^{\Tm-1} 
\left\{\ap_\tm^\ip\right\}_{\ip=1}^\Np \backslash \{\bp_\tm\}_{\tm=1}^{\Tm-1}$. Assuming that $\uwp_\tm^\ip 
\geq 0$ for all $\ip$ and $\tm$, we have that $\hatZ_\Tm\geq 0$ and thus $\nicefrac{1}{\hatZ_\Tm} \geq 0$. We are 
now going to prove that when $\left(\xp_{1:\Tm}^{\bp_{1:\Tm}},\bp_{1:\Tm}\right) \sim \nmod_\Tm(\xp_{1:\Tm}) \cdot \Np^{-\Tm}$
\begin{align}
   \Exp_{\bp_{1:\Tm}}\left[\Exp_{\nmod_\Tm\cdot\qCSMC}\left[\frac{\Z_\Tm}{\widehat\Z_\Tm}\right] \right]= 1,
    \label{eq:smc:invhatZexp}
\end{align}
from which the result follows. We rewrite the integrand in \cref{eq:smc:invhatZexp},
\begin{align*}
	&\frac{\Z_\Tm}{\widehat\Z_\Tm} \frac{\nmod_\Tm(\xp_{1:\Tm}^{\bp_{1:\Tm}}) }{\Np^\Tm}\qCSMC\left(\xp_{1:\Tm}^{\neg \bp_{1:\Tm}}, \ap_{1:\Tm-1}^{\neg \bp_{1:\Tm-1}}\given \xp_{1:\Tm}^{\bp_{1:\Tm}}, \bp_{1:\Tm}\right) \\
	&= \frac{1}{\widehat\Z_\Tm} \frac{\umod_\Tm(\xp_{1:\Tm}^{\bp_{1:\Tm}}) }{\Np^\Tm} \frac{\qSMC\left(\xp_{1:\Tm}^{1:\Np}, \ap_{1:\Tm-1}^{1:\Np}\right)}{ \prop_1(\xp_1^{\bp_1}) \prod_{\tm=2}^\Tm \nwp_{\tm-1}^{\bp_{\tm-1}} 
    \prop_\tm\left(\xp_\tm^{\bp_\tm} \given  \xp_{1:\tm-1}^{\bp_{\tm-1}}\right)} \\
    &=\frac{\umod_\Tm(\xp_{1:\Tm}^{\bp_{1:\Tm}})}{\widehat\Z_\Tm} \cdot
    \frac{\prod_{\tm=1}^\Tm \frac{1}{\Np}\sum_\ip \uwp_\tm^\ip}{\prop_1(\xp_1^{\bp_1}) \prod_{\tm=2}^\Tm \uwp_{\tm-1}^{\bp_{\tm-1}}  \prop_\tm\left(\xp_\tm^{\bp_\tm} \given  \xp_{1:\tm-1}^{\bp_{\tm-1}}\right)}
   \cdot \frac{\qSMC\left(\xp_{1:\Tm}^{1:\Np}, \ap_{1:\Tm-1}^{1:\Np}\right)}{\sum_\ip\uwp_\Tm^\ip} \\
   &=	\frac{\uwp_\Tm^{\bp_\Tm}}{\sum_\ip\uwp_\Tm^\ip} \qSMC\left(\xp_{1:\Tm}^{1:\Np}, \ap_{1:\Tm-1}^{1:\Np}\right),
\end{align*}
where the first line is the integrand. In the first equality we have divided and multiplied by $\prop_1 \prod_{\tm=2}^\Tm \nwp_{\tm-1}^{\bp_{\tm-1}} \prop_\tm$ to be able to identify the difference between $\qSMC$ and $\qCSMC$. Note that we have replaced $\bp_\tm=\ap_\tm^{\bp_{\tm+1}}$ in the notation for $\qSMC\left(\xp_{1:\Tm}^{1:\Np}, \ap_{1:\Tm-1}^{1:\Np}\right)$ to identify the \gls{SMC} distribution. For the second equality we have rewritten the normalized weights in the denominator, together with $\Np^{-\Tm}$ and dividing and multiplying by $\sum_\ip\uwp_\Tm^\ip$, to identify the normalization constant estimator. For the final equality we see that the normalization constant estimators cancel, and the product in the denominator simplifies together with $\umod_\Tm$ to become $\uwp_\Tm^{\bp_\Tm}$. 

Using this simplified expression for the integrand in \cref{eq:smc:invhatZexp} we obtain
\begin{align*}
   &\sum_{\bp_\Tm}\sum_{\ap_{1:\Tm-1}^{1:\Np}} \int 	\frac{\uwp_\Tm^{\bp_\Tm}}{\sum_\ip\uwp_\Tm^\ip} \qSMC\left(\xp_{1:\Tm}^{1:\Np}, \ap_{1:\Tm-1}^{1:\Np}\right) \dif \xp_{1:\Tm}^{1:\Np} \\
   &=\sum_{\ap_{1:\Tm-1}^{1:\Np}} \int \qSMC\left(\xp_{1:\Tm}^{1:\Np}, \ap_{1:\Tm-1}^{1:\Np}\right) \dif \xp_{1:\Tm}^{1:\Np} = 1,
\end{align*}
where again we first replace the original sum over $\bp_{1:\Tm-1}$ with the ancestor variable $\ap$ notation. The first equality marginalizes $\bp_\Tm$ and the final equality follows because $\qSMC$ is a distribution on $\xp_{1:\Tm}^{1:\Np}, \ap_{1:\Tm-1}^{1:\Np}$. This concludes the proof.

\section{Proof of SMC Distribution Representation}\label{sec:csmc:supp:smcdist}
We will prove the following alternate representation of the expected \gls{SMC} empirical distribution
\begin{align}
        \Exp_{\qSMC}\left[\widehat{\nmod}_{\Tm}(\xp_{1:\Tm})\right] &= \umod_\Tm(\xp_{1:\Tm}) \Exp_{\qCSMC}\left[\hatZ_\Tm^{-1}\given \xp_{1:\Tm}\right],
        \label{eq:csmc:supp:smcdist}
 \end{align}
were we for clarity again focus on using multinomial resampling at each iteration of the algorithm. Note that the left hand side is a probability measure and the right hand side a \acrlong{PDF}. To be precise the equality means that the corresponding probability measure of the right hand side (usually adding either the Lebesgue our counting measure) is equal almost everywhere to the left hand side.
 
To prove the result we need to show that the expectation of any bounded and continuous function $\fun(\xp_{1:\Tm})$ coincides for the left and right hand sides of \cref{eq:csmc:supp:smcdist}. We have that the expectation of $\fun(\xp_{1:\Tm})$ with respect to the right hand side of \cref{eq:csmc:supp:smcdist} is
 \begin{align*}
 	&\int \fun(\xp_{1:\Tm})\umod_\Tm(\xp_{1:\Tm}) \Exp_{\qCSMC}\left[\hatZ_\Tm^{-1}\given \xp_{1:\Tm}\right] \dif \xp_{1:\Tm} \\
 	&=\sum_{\bp_{1:\Tm}} \int \frac{1}{\Np^\Tm} \fun(\xp_{1:\Tm}^{\bp_{1:\Tm}})\umod_\Tm(\xp_{1:\Tm}^{\bp_{1:\Tm}}) \Exp_{\qCSMC}\left[\hatZ_\Tm^{-1}\given \xp_{1:\Tm}^{\bp_{1:\Tm}}\right] \dif \xp_{1:\Tm}^{\bp_{1:\Tm}} \\
 	&=\sum_{\bp_{1:\Tm}} \sum_{\ap_{1:\Tm-1}^{\neg \bp_{1:\Tm-1}}} \int \Bigg[ \frac{ \fun(\xp_{1:\Tm}^{\bp_{1:\Tm}})\umod_\Tm(\xp_{1:\Tm}^{\bp_{1:\Tm}})}{\Np^\Tm \prod_{\tm=1}^\Tm \frac{1}{\Np} \sum_{\ip=1}^\Np \uwp_\tm^\ip} \cdot  \\
 	& \quad\quad \cdot \qCSMC\left(\xp_{1:\Tm}^{\neg \bp_{1:\Tm}}, \ap_{1:\Tm-1}^{\neg \bp_{1:\Tm-1}}\given \xp_{1:\Tm}^{\bp_{1:\Tm}}, \bp_{1:\Tm}\right) \Bigg]  \dif \xp_{1:\Tm}^{\bp_{1:\Tm}}  \dif \xp_{1:\Tm}^{\neg \bp_{1:\Tm}} \\
 	&= \sum_{\bp_\Tm}\sum_{\ap_{1:\Tm-1}^{1:\Np}} \int \fun(\xp_{1:\Tm}^{\bp_\Tm}) \frac{\uwp_\Tm^{\bp_\Tm}}{\sum_{\ip=1}^\Np \uwp_\Tm^\ip} \qSMC\left(\xp_{1:\Tm}^{1:\Np}, \ap_{1:\Tm-1}^{1:\Np}\right) \dif \xp_{1:\Tm}^{1:\Np}  \\
 	&=   \int \fun(\xp_{1:\Tm}) \cdot \\
 	&\quad\quad  \cdot \left[ \int \sum_{\ap_{1:\Tm-1}^{1:\Np}} \sum_{\bp_\Tm} \frac{\uwp_\Tm^{\bp_\Tm}}{\sum_{\ip=1}^\Np \uwp_\Tm^\ip} \dirac_{\xp_{1:\Tm}^{\bp_\Tm}}(\xp_{1:\Tm}) \qSMC\left(\xp_{1:\Tm}^{1:\Np}, \ap_{1:\Tm-1}^{1:\Np}\right) \dif \xp_{1:\Tm}^{1:\Np}  \right] \dif \xp_{1:\Tm} \\
 	&= \int \fun(\xp_{1:\Tm})  \Exp_{\qSMC}\left[\widehat{\nmod}_{\Tm}(\xp_{1:\Tm})\right] \dif \xp_{1:\Tm},
 \end{align*}
where the first equality is the definition. In the second equality we have expanded the terms and can identify the expression derived in \cref{sec:csmc:unbiasedZinv} above. In the third equality we have replaced it with the corresponding $\qSMC$ distribution, again using $\bp_\tm=\ap_\tm^{\bp_{\tm+1}}$ in the notation for $\qSMC$. In the penultimate equality we re-arrange a few terms and replace the sum over $\bp_\Tm$ with an integration with respect to a point mass distribution. In the final equality we identify the expectation with respect to $\Exp_{\qSMC}\left[\widehat{\nmod}_{\Tm}(\xp_{1:\Tm})\right]$. This concludes the proof.

\section{Jeffreys Divergence Bound}\label{sec:csmc:supp:bound2}
We detail the derivation of the bound in \cref{eq:csmc:skl:bound2} below under the assumption of multinomial resampling at each iteration of the algorithm,
\begin{align*}
	& \symKL\left(\umod_\Tm(\xp_{1:\Tm}) \Exp_{\qCSMC}\left[\hatZ_\Tm^{-1}\given \xp_{1:\Tm}\right] \,\|\, \nmod_\Tm(\xp_{1:\Tm})\right) \\
	&=\int \umod_\Tm(\xp_{1:\Tm}) \Exp_{\qCSMC}\left[\hatZ_\Tm^{-1}\given \xp_{1:\Tm}\right] \log \frac{\umod_\Tm(\xp_{1:\Tm}) \Exp_{\qCSMC}\left[\hatZ_\Tm^{-1}\given \xp_{1:\Tm}\right]}{ \nmod_\Tm(\xp_{1:\Tm})} \dif \xp_{1:\Tm} \\
	&\quad+ \int \nmod_\Tm(\xp_{1:\Tm}) \log \frac{ \nmod_\Tm(\xp_{1:\Tm})} {\umod_\Tm(\xp_{1:\Tm}) \Exp_{\qCSMC}\left[\hatZ_\Tm^{-1}\given \xp_{1:\Tm}\right]} \dif \xp_{1:\Tm} \\
	&=\int \umod_\Tm(\xp_{1:\Tm}) \Exp_{\qCSMC}\left[\hatZ_\Tm^{-1}\given \xp_{1:\Tm}\right] \log  \Exp_{\qCSMC}\left[\hatZ_\Tm^{-1}\given \xp_{1:\Tm}\right] \dif \xp_{1:\Tm} \\
	&\quad- \int \nmod_\Tm(\xp_{1:\Tm}) \log \Exp_{\qCSMC}\left[\hatZ_\Tm^{-1}\given \xp_{1:\Tm}\right] \dif \xp_{1:\Tm}  \\
	&\leq\int \umod_\Tm(\xp_{1:\Tm}) \Exp_{\qCSMC}\left[\hatZ_\Tm^{-1} \log  \hatZ_\Tm^{-1} \given \xp_{1:\Tm}\right] \dif \xp_{1:\Tm} \\
	&\quad- \int \nmod_\Tm(\xp_{1:\Tm})  \Exp_{\qCSMC}\left[ \log\hatZ_\Tm^{-1}\given \xp_{1:\Tm}\right] \dif \xp_{1:\Tm}  \\
	& = \Exp_{\qSMC}\left[-\log \hatZ_\Tm \right] + \Exp_{\nmod_\Tm}\left[ \Exp_{\qCSMC}\left[\log \hatZ_\Tm \given \xp_{1:\Tm}\right] \right],
\end{align*}
where the first equality is the definition. The second equality is a simplification of the previous expression where the terms cancelling, $\umod_\Tm, \nmod_\Tm, \Z_\Tm$, have been removed. Next, the inequality follows from the (conditional) Jensen's inequality for the convex functions $x \log x$ and $-\log x$, respectively. The final equality follows from \cref{prop:csmc:smcdist}, noting that the first term is equivalent to an expectation with respect to the full \gls{SMC} distribution $\qSMC$.

The Jeffrey's divergence bound between two generic expected \gls{SMC} empirical distributions in \cref{eq:csmc:skl:general} is a straightforward extension to the above derivation. 
\end{subappendices}

\chapter{Discussion}\label{sec:discussion}
We have described \acrlong{SMC}, methods that use random samples to 
approximate the posterior. The goal is to approximate the 
distribution of the latent (unobserved) variables given the data. The 
idea is to draw samples from a (simpler) proposal distribution, then
correct for the discrepancy between the proposal and the posterior using 
weights. \gls{SMC} is generally applicable and can be considered to 
be an alternative, or complement, to \gls{MCMC} methods.

First, we discussed the roots of \gls{SMC} in (sequential) importance sampling and 
sampling/importance resampling. Next, we introduced the \gls{SMC} 
algorithm in its full generality. The algorithm was illustrated with 
synthetic data from a non-Markovian latent variable model. Then, we 
discussed practical issues and theoretical results. The \gls{SMC} 
estimate satisfies favorable theoretical properties; under appropriate 
assumptions it is both consistent and has a central limit 
theorem. 
Then, we discussed and explained how to learn a good 
proposal and target distribution. As a motivating example, we described 
how to go about learning a proposal- and target distribution for a 
stochastic recurrent neural network. 
Next, we discussed \emph{proper weighting} and how to use \gls{SMC} to construct nested estimators based on several layers of \gls{MC} approximations to approximate idealized \gls{MC} algorithms. We provided a straightforward proof to the unbiasedness property of the \gls{SMC} normalization constant estimate. We studied \gls{PMH} algorithms, methods that construct high-dimensional proposals for \gls{MH} based on \gls{SMC} samplers. We studied random and nested \gls{SMC} methods that deal with intractability in the weights and proposal distributions with applications to stochastic differential equations and optimal proposal approximation. We illustrated how to leverage proper weighting and \gls{SMC} to design algorithms that makes use of distributed and multi-core computing architectures. 
Finally, we introduced \acrlong{CSMC} algorithms and applications of these. We provided proof that the \gls{CSMC} sampler obtains unbiased estimates of the \emph{inverse} normalization constant. We showed how the \gls{IPMCMC} method combines \gls{SMC} and \gls{CSMC} to design a powerful \gls{MCMC} method that can take advantage of parallel computing. As a motivating example we show how \gls{SMC} and \gls{CSMC} can be used to evaluate inference in the context of approximate Bayesian inference.

There are several avenues open for further work and research in \gls{SMC} methods and their application. We provide a few  examples below. 
%
The proposal distribution is crucial for performance, and finding efficient ways to learn useful proposals is an area of continued research. The twisting potential can drastically improve accuracy. However, jointly learning twisting potentials and proposal distributions for complex models is a challenging and largely unsolved problem. 

%
A rather obvious---but still very interesting and important---direction consists in adapting the \gls{SMC} algorithm to parallel and distributed computing environments. This is indeed an active area of research, see e.g.  \citet{LeeW:2016,Lee:2010gpuSMC}. However, we have reason to believe that there is still a lot that can be done in this  direction. Just to give a few concrete examples we mention the fact that we can assemble \gls{SMC} algorithm in new ways that we directly designed for parallel and distributed computing. As a concrete example we mention the smoother developed by \citet{JacobLS:2019} which is based on the clever debiasing scheme of \citet{GlynnR:2014}. Perhaps the divide and conquer SMC algorithm \citep{LindstenJNKSAB:2017} can be useful in this effort. Its successful implementation within a probabilistic programming environment with support for parallel computation might give rise to an interesting scalable use of \gls{SMC}.

%
An important observation is that \gls{SMC} is increasingly being \emph{used as a component} in various larger (often iterative)  algorithms. Just to give a few examples of this we have Bayesian learning, where the particle filter is used within a Markov chain Monte Carlo algorithm \citep{andrieu2010particle,lindsten2014particle}, iterated filtering \citep{Ionides:2015} and maximum likelihood using expectation maximization \citep{SchonWN:2011}. The idea of using \gls{SMC} within itself has occurred in several forms, including SMC$^2$ by \citet{chopin2013smc2} and nested \gls{SMC} by \citet{naesseth2015nested}. There are still most likely many interesting algorithms to uncover by following this idea in new directions.

There has recently been a very interesting line of research when it comes to merging deep learning architectures---such as recurrent neural networks, temporal convolutional networks and gated recurrent units---with stochastic latent models---such as the state space model---for sequence modelling, see  e.g.~\citep{BaiKK:2018,AksanH:2019,OsendorferB:2014,FraccaroSPW:2016}. An important idea here is to combine the expressiveness of the deep neural networks with the stochastic latent variable model's ability to represent uncertainty. These models are often trained by optimizing the ELBO objective. It has recently been shown that we can exploit the fact that the \gls{SMC} estimator of the likelihood is unbiased to construct tighter lower bounds of this objective \citep{anh2018autoencoding,maddison2017,naesseth18a}. This can most likely open up for better training of combinations of deep learning architectures and stochastic latent models.

Given that we have now understood that  the \gls{SMC} methods are much more generally applicable than most of us first thought we foresee an interesting and creative future for these algorithms. We hope that this paper can serve as a catalyst for further research on these methods and in particular on their application in machine learning.


\newpage 
\chapter*{Acknowledgments}
This research was financially supported by the Swedish Foundation for Strategic Research (SSF) via the projects \emph{ASSEMBLE} (contract number: RIT15-0012) and \emph{Probabilistic Modeling and Inference for Machine Learning} (contract number: ICA16-0015), and by the Swedish Research Council via the projects \emph{NewLEADS - New Directions in Learning Dynamical Systems} (contract number: 621-2016-06079) and 
\emph{Learning of Large-Scale Probabilistic Dynamical Models} (contract number: 2016-04278).

\bibliographystyle{abbrvnat}
\bibliography{references}

\end{document}